\crefname{assumption}{Assumption}{Assumptions}
\newcommand\numberthis{\addtocounter{equation}{1}\tag{\theequation}}
\definecolor{shade}{rgb}{0.9,0.9,0.9}
\renewcommand{\vec}[1]{\bm{#1}}
\newcommand{\by}{\bm{y}}
\newcommand{\bz}{\bm{z}}
\newcommand{\bw}{\bm{w}}
\newcommand{\be}{\bm{e}}
\newcommand{\m}{\bm{m}}
\newcommand{\bg}{\bm{g}}
\newcommand{\dotprod}[1]{\left< #1\right>} 
\DeclareMathOperator{\sgn}{sgn} 
\DeclareMathOperator{\argmininn}{argmin} 
\newcommand{\argmin}[1]{ \underset{#1}{\argmininn} \;}
\newcommand{\R}{\mathbb{R}}
\newcommand{\E}[1]{\mathbb{E}\left[#1\right] } 
\newcommand{\EE}[2]{\mathbb{E}_{#1}\left[#2\right] }
\newcommand{\norm}[1]{ \|#1 \|}
\newcommand{\prox}[1]{\mathrm{prox}_{#1}}
\newcommand{\dist}{\mathcal{D}}
\newcommand{\median}{\texttt{median}}
\newcommand{\Proj}[1]{\mathrm{Proj}_{#1}} % projection
\let\temp\phi
\let\phi\varphi
\let\varphi\temp
\DeclareMathOperator{\dom}{dom} 
\DeclareMathOperator{\Dist}{dist} 
\newcommand{\samplemedian}[1]{\texttt{median}_{#1,i\in[n]}}
\newcommand{\cclip}[1]{\mathrm{clip}_{#1,1}}
\newcommand{\vclip}[1]{\mathrm{clip}_{#1,2}}
\newcommand{\ClipTO}{\texttt{Clip21}}
\newcommand{\SGD}{\texttt{SGD}}
\newcommand{\SGDM}{\texttt{SGD-M}}
\newcommand{\SPP}{\texttt{SPP}}
\newcommand{\SMGD}{\texttt{SMGD}}
\newcommand{\VC}{\texttt{VClip}}
\newcommand{\CC}{\texttt{CClip}}
\newcommand{\HC}{\texttt{Huber}}
\definecolor{shadecolor}{gray}{0.90}
\declaretheoremstyle[
headfont=\normalfont\bfseries,
notefont=\mdseries, notebraces={(}{)},
bodyfont=\normalfont,
postheadspace=0.5em,
spaceabove=6pt,
mdframed={
  skipabove=8pt,
  skipbelow=8pt,
  hidealllines=true,
  backgroundcolor={shadecolor},
  innerleftmargin=4pt,
  innerrightmargin=4pt}
]{shaded}
\declaretheorem[style=shaded,within=section]{definition}
\declaretheorem[style=shaded,sibling=definition]{theorem}
\declaretheorem[style=shaded,sibling=definition]{assumption}
\declaretheorem[style=shaded,sibling=definition]{corollary}
\declaretheorem[style=shaded,sibling=definition]{lemma}
\definecolor{todored}{RGB}{189, 30, 30}
\definecolor{kleinblue}{RGB}{0, 47, 167}
\title{Tracking the Median of Gradients with a Stochastic Proximal Point Method
}
\author{\name Fabian Schaipp \email fabian.schaipp@inria.fr \\
      \addr Technical University of Munich and Inria Paris, ENS, PSL Research University
      \AND
      \name Guillaume Garrigos \email garrigos@lpsm.paris \\
      \addr Université Paris Cité and Sorbonne Université, CNRS \\
      Laboratoire de Probabilités, Statistique et Modélisation, F-75013 Paris, France
      \AND
      \name Umut \c{S}im\c{s}ekli \email umut.simsekli@inria.fr\\
      \addr Inria, CNRS, ENS, PSL Research University \\
      Paris
      \AND
      \name Robert M.\ Gower \email rgower@flatironinstitute.org\\
      \addr CCM, Flatiron Institute,  Simons Foundation \\
      New York City
      }
\begin{document}

\maketitle

\begin{abstract}
There are several applications of stochastic optimization 
where one can benefit from a robust estimate of the gradient. For example, domains such as distributed learning with corrupted nodes, the presence of large outliers in the training data, learning under privacy constraints, or even heavy-tailed noise due to the dynamics of the algorithm itself. Here we study  \SGD{} with robust gradient estimators based on estimating the median.

We first derive iterative methods based on the  stochastic proximal point method for computing the  median gradient and generalizations thereof. 
Then we propose an algorithm  estimating the median gradient across \emph{iterations}, and find that several well known methods are particular cases of this framework.
For instance, we observe that different forms of clipping allow to compute online estimators of the \emph{median} of gradients, in contrast to (heavy-ball) momentum, which corresponds to an online estimator of the \emph{mean}.
Finally, we provide a theoretical framework for an algorithm computing the median gradient across \emph{samples}, and show that the resulting method can converge even under heavy-tailed, state-dependent noise.
\end{abstract}

\section{Introduction}
  
Many problems in machine learning can be represented by an optimization problem
\begin{equation*} \label{eq:prob}
 \min_{\bw\in \R^d} \ell(\bw),
\end{equation*} 
where $\vec{w}$ is a vector of parameters and $\ell :\mathbb{R}^d \to \mathbb{R}$ is a loss function. 
To tackle this problem, gradient-based optimization algorithms have been the de-facto choice in many application domains \citep{Nemirovsky1983,Bottou2010,Bottou2018}, where we are only given access to a \emph{noisy oracle} of the  gradient %, i.e., 
\begin{equation}\label{eqn:noisy-oracle}
\vec{g}_t = \nabla \ell(\vec{w}_t) + \vec{\xi}_t(\vec{w}_t). 
\end{equation}
Here $t$ denotes the iterations and $\vec{\xi}_t(\vec{w}_t) \in \R^d $ is a noise vector that is sampled from a distribution that depends on the parameter $\vec{w}_t$.
With this noise oracle, stochastic gradient descent (\SGD) with learning rate $\eta_t$ is given by 
\begin{align}\tag{\SGD} \label{eq:sgd}
\bw_{t+1} = \bw_t - \eta_t \bg_t.
\end{align}
While~\ref{eq:sgd} has proven useful in numerous applications, its  performance  heavily relies on the `quality' of the noisy oracles $\vec{g}_t$, and can tragically degrade when the noise $\vec{\xi}_t$ has significant outliers, exhibits heavy tails (that is, $\|\vec{\xi}_t\|$ is very large with non-negligible probability), or has been corrupted in an adversarial way. We provide two examples.

\begin{enumerate}[itemsep=0pt,topsep=0pt,leftmargin=.2in, label=(\roman*)]
\item \textbf{Heavy-tailed gradients.} Recent studies have provided theoretical and empirical evidence that 
    heavy tails can naturally arise in stochastic optimization. From a theoretical perspective \citet{pmlr-v139-gurbuzbalaban21a,hodgkinson2021multiplicative,pavasovic2023approximate} showed that the parameter-dependent nature of $\vec{\xi}_t$ can result in heavy tails, even in very basic settings such as linear regression with Gaussian data. On the other hand, it has been empirically observed that the gradients for training transformer architectures on language tasks are more heavy-tailed compared to, for example, convolutional models for image data \citep{zhang2020adaptive,kunstner2023noise}. While the presence of heavy-tails in gradients might be beneficial in certain settings \citep{simsekli2020hausdorff}, from an optimization perspective, it mostly introduces non-trivial challenges, which might even make the algorithm diverge unless additional care is taken \citep{zhang2020adaptive,Gorbunov2020}.  
\item \textbf{Corrupted nodes.} A well-known problem in distributed learning is 
when some computation nodes are malicious and can communicate adversarial updates, which results in inaccurate gradient oracles that can significantly misguide the optimization procedure. 
Similar to the heavy-tailed setting, depending on 
how malicious the nodes are,
\SGD{} can be impractical unless certain modifications are made~\citep{ICML-2018-MhamdiGR}.
We refer to \citet{Karimireddy2021} and references therein for an overview of more robust aggregation techniques.
\end{enumerate}

In order to make the optimization algorithms more robust to such noisy oracles, many techniques have been developed for different domains under various conditions. In the context of heavy-tailed gradients, clipping is often employed \citep{Zhang2020,Puchkin2023,Koloskova2023}. In distributed learning robust aggregation rules are needed to protect against corrupted or malicious nodes \citep{Data2018,Karimireddy2021,khirirat2023clip21}. This line of research is also closely related to error feedback in federated learning \citep{seide2014-bit,Karimireddy2019,richtarik2021ef} and learning under differential privacy constraints \citep{khirirat2023clip21}.

\paragraph{Robustness of median.} While attracting increasing attention in stochastic optimization thanks to modern machine learning applications, taming the impact of strong noise has already been considered in various other domains. Indeed, estimation under heavy-tails and corruptions has been a long-standing topic in robust statistics \citep{Huber1981}. Being a vast research field that has produced numerous algorithms designed for different tasks, one of the recurring themes in robust statistics is the use of the \emph{sample median} and its  variations as opposed to the \emph{sample mean}~\citep{Minsker2015}, whenever an aggregation of random variables is needed. For instance, in the presence of heavy tails, the sample median has been shown to be a significantly more robust estimator of the true mean, whereas the sample mean can be vulnerable to strong noise \citep{Lugosi2019}.

\paragraph{Sample median \SGD{}.} Inspired by tools from robust statistics, some notions of median have been utilized in stochastic optimization as well~\citep{pmlr-v80-yin18a,Alistarh2018,pmlr-v151-acharya22a}.
One such approach is to compute the median over a finite sample of gradients. 
While this approach has paved the way for powerful optimization algorithms in terms of robustness, it often introduces a significant computational burden, since a multivariate median needs to be computed at every iteration \citep{Weiszfeld2009,Vardi2000}.

\textbf{Motivation.} Clipping-based and error feedback approaches on the one hand, and median-based approaches on the other hand are motivated through different mathematical frameworks and appear to use different algorithmic tools. In this study, our main goal is to bridge the gap between these two seemingly different branches and to bring a unified theoretical perspective that can shed further light on both directions. The main tool to build this bridge between stochastic optimization and robust statistics will be the Stochastic Proximal Point method (\SPP) \citep{Asi2019,Davis2019,Toulis2020} for solving a later-specified gradient estimation problem.   

\textbf{Contributions.} 
We first show how heavy-ball momentum \citep{Polyak1964} can be seen as online estimator of the mean gradient using \SPP{}. Here we use the term \emph{online} in the sense that the method is given a  different loss function at each step.

Encouraged by this result, we then focus on online estimation of the \emph{median} gradient, using the same technique. Doing so, we recover several known clipping-based optimization algorithms from distributed learning as special cases, such as~\ClipTO{}~\citep{khirirat2023clip21}, and a variant of \emph{centered clipping} \citep{Karimireddy2021}. We also shown connections to sign-based gradient methods \citep{Riedmiller1993}.

Our proposed framework allows to unify the different motivations for previously developed techniques, and illustrates that clipping-based optimization algorithms essentially run a loose estimation of the median across iterations. In particular, the observation that \emph{centered clipping} and \ClipTO{} are iterative estimators of a geometric median appears to be new, and might have consequences for its application in distributed learning.

We theoretically analyze these online gradient estimation methods in a simplified setting: when allowing multiple gradient samples per iteration, we show that \SGD{} with (approximately computed) median gradients indeed yields a very powerful algorithm, which converges with $1/\sqrt{T}$ rate ($T$ being the number of iterations) even when the gradient oracles are heavy-tailed with diverging second-order moments. We further illustrate that, in addition to having infinite variance, the noise vectors $\vec{\xi}_t(\vec{w})$ can even have a multiplicative dependence on $\vec{w}$, a scenario which is highly challenging and cannot be directly covered by existing theoretical results, see e.g., \citet{wang2021convergence,zhang2020adaptive,Puchkin2023}. 

We finally illustrate our theory on synthetic least-squares experiments where we compare the effectiveness of sample median, sample mean, and several online median estimators. Our results underline that for heavy-tailed noise, using the sample median is highly effective in contrast to the sample mean which is unstable and often does not converge.
Our experiments also show that our online median estimates are robust, 
and require only a single sample per iteration, making it a less expensive alternative to the sample median.
We further compare different clipping techniques for training transformer architectures on language modeling tasks, and show that they can improve upon the performance of \SGD{} with momentum, however the gap is relatively small.

\textbf{Notation.} Throughout the paper, $\|\cdot\|_p$ denotes the  $\ell_p$-norm, given by $\norm{\vec{z}}_p : = \left( \sum_{i=1}^d|\vec{z}|_i^p \right)^{\frac{1}{p}}$ when $p\in [1,\infty)$, and $\Vert \vec{z} \Vert_\infty = \max\limits_{i=1, \dots, d} \vert \vec{z}_i \vert$ when $p=\infty$. 
When $p=2$ we recover the usual Euclidean $\ell_2$-norm $\Vert \cdot \Vert_2$, which we will often simply denote by $\Vert \cdot \Vert$. For a matrix $\vec{A}$, we denote its Frobenius norm by $\|\vec{A}\|_F := \sqrt{\sum_{i,j}\vec{A}_{ij}^2}$. {For $n \in \mathbb{N}_{+}$, we denote the set $[n]:=\{1,\dots, n\}$.}

\section{Preliminaries}
\subsection{Heavy-tailed Random Variables}

A random variable $X$ is said to be \emph{heavy-tailed} if
the tails of its distribution decay slower
than the tails of an exponential distribution.
One important example is the family of symmetric $\alpha$-stable distributions $\mathcal{P}_{\alpha, \sigma}$ \citep[Def.\ 1.4]{Nolan2020}, parameterized by a stability index $\alpha \in (0, 2]$ and a scale $\sigma > 0$. When $\alpha = 1$, this is the Cauchy distribution; when $\alpha = 2$, it reduces to a Gaussian.
The parameter $\sigma$ controls the spread of the distribution. When $\sigma = 1$, we call the distribution \emph{standard} and denote it by $\mathcal{P}_{\alpha} := \mathcal{P}_{\alpha,1}$.
The parameter $\alpha$ on the other hand controls the heaviness of the tails: for $\alpha<2$ the random variable $X\sim \mathcal{P}_{\alpha}$ has infinite variance, whereas for $\alpha\leq 1$ even the mean of $X$ is infinite. However, for any $\alpha$, the median is equal to $0$.

\subsection{Mean Estimation and (Sample) Median}

Estimating the mean of a distribution, when given a finite number of samples, is one of the core problems studied in statistics. 
If the distribution in question has heavy-tails, or has significant outliers, the sample mean turns out to be a poor estimator. \citet{Lugosi2019} give a survey of alternative and more robust techniques: one such alternative is the median, for which we will introduce basic definitions next. 

Here, we denote by $z$ a real-value random value, and the boldface  $\vec{z}\in \R^d$ a random vector. We always denote the expectation with respect to the distribution of $z$ (or $\vec{z}$ respectively) by $\mathbb{E}$. 

\paragraph{Multivariate median.} The one-dimensional median is defined by
\begin{align}\label{eq:def-one-dim-median}
    \median{}(z) \in 
    \argmin{m \in \mathbb{R}} \E{|m-z|}.
\end{align}
In contrast to the mean, the median is always defined (though it may not be unique) \citep{Cramer2016}.
Since we are interested in approximating the median gradient, 
we need a notion of median that generalizes to random vectors $\vec{z}\in \R^d$. For this, we can define the $\ell_p$-median: \begin{equation}\label{eq:lp-median}
    \median_{\ell_p}(\vec{z}) \in \argmin{\vec{m} \in \R^d} \E{ \norm{\vec{m} - \vec{z}}_p},
\end{equation}
where $\|\vec{m}\|_p =\left( \sum_{j=1}^d \vec{m}_j^p \right)^{\frac{1}{p}}$ is the $\ell_p$ norm.
This recovers several existing notions of a multivariate median: for $p=1$ we obtain a \emph{componentwise median} (or $\ell_1$-median) and for $p=2$ we obtain the \emph{geometric median} (or $\ell_2$-median)  \citep{Frechet1948,Weiszfeld2009,Minsker2015,Cohen2016}. For $d=1$ and every $p \in [1, \infty)$, the $\ell_p$-median coincides with the one-dimensional median in \eqref{eq:def-one-dim-median}.

\paragraph{Sample median.} Given $n$ realizations $\vec{z}^{(1)},\ldots,\vec{z}^{(n)}$ of $\vec{z}$, the sample median is defined as 
\begin{equation}\label{eq:sample-median}
    \samplemedian{\ell_p}(\vec{z}^{(i)}) \in \argmin{\vec{m} \in \R^d} \frac{1}{n}\sum_{i=1}^n \norm{\vec{m} - \vec{z}^{(i)}}_p.
\end{equation}
Clearly \eqref{eq:sample-median} is a special case of \eqref{eq:lp-median}, by setting the distribution of $\vec{z}$ as the empirical measure $\frac{1}{n}\sum_{i=1}^n \delta_{\vec{z}_i}$.

It is generally accepted that the sample median is a more robust estimator than the sample mean \citep{Huber1981,Hampel2005}.
This is nicely illustrated by the notion of a breakdown point \citep{Donoho1983,Lopuhaa1991},  which is the smallest fraction of a sample that, if arbitrarily corrupted, can arbitrarily change the value of the estimator. The breakdown point of the sample median is roughly $\tfrac12$, meaning that at least half of the samples need to be outliers for the median to diverge. The sample mean however can be arbitrarily corrupted by a single outlier, thus its breakdown point is $\tfrac{1}{n}$. 

If the samples $\vec{z}^{(i)}$ are obtained by taking block-wise means over observations, \eqref{eq:sample-median} is called the median-of-means estimator \citep{Nemirovsky1983,Lugosi2019}, which is known to be a robust mean estimator \citep{Lugosi2019}.

\paragraph{Relation to M-estimation.} More generally, for a function $\dist: \R^d \to \R$ we will consider the problem
\begin{align}\label{prob:gradient-learning}
   \argmin{\vec{m} \in \R^d} \E{ \dist(\vec{m}-\vec{z})}.
\end{align}
We assume that  $\dist$ is a closed,  proper, convex function and that~\eqref{prob:gradient-learning} admits a solution (see \cref{sec:app-more-update-formulas} for a discussion).
For $\dist = \tfrac12\norm{\cdot}_2^2$, the solution to \eqref{prob:gradient-learning} is the mean $\E{\vec{z}}$ (if it exists), and for $\dist=\|\cdot\|_2$ the solution is the geometric median as defined in \eqref{eq:lp-median}. 
Problem~\eqref{prob:gradient-learning}  can be seen as a special case of \emph{M-estimators} \citep[Sec.\ 3.2]{Huber1981}.

The choice of $\dist$ determines properties of the associated estimator:
if the function $\dist$ heavily penalizes large values, like the quadratic  $\dist = \tfrac12\norm{\cdot}_2^2$, then the solution to \eqref{prob:gradient-learning} will be sensitive to outliers. In contrast  $\dist  = \norm{\cdot}_2$ and $\dist  = \norm{\cdot}_1$ grow only linearly, and thus~\eqref{prob:gradient-learning} will be less sensitive to outliers. We return to problem \eqref{prob:gradient-learning} later in \cref{sec:grad-estimation-with-spp}.

\section{Robust Gradient Estimation with Stochastic Proximal Point}
\subsection{Warmup: Momentum as Online Mean Gradient Estimation}\label{sec:momentum-mean-estimation}

When dealing with noisy gradient oracles, a standard technique is to apply (heavy-ball) momentum \citep{Polyak1964}. For momentum coefficient $\beta\in [0,1)$, it is given by
\begin{align}\label{eqn:momentum}\tag{\texttt{SGD-M}}
    \begin{split}
        \vec{m}_{t+1} =  \beta \vec{m}_t + (1-\beta) \vec{g}_t, \\
    \vec{w}_{t+1} = \vec{w}_t - \eta_t \vec{m}_{t+1}.
    \end{split}
\end{align}
Momentum has been empirically shown to improve the performance of \SGD{} in machine learning \citep{sutskever13}, and can be seen as a variance reduction technique \citep{Gower2020}.

It is easy to see that momentum performs an exponentially weighted average over past gradients. In this section, we will derive a new perspective on \ref{eqn:momentum}, namely being an \emph{online estimator of the mean gradient} at a given point. To see this, we recall that for a random variable $\vec{z}\in \R^d$, its mean (if it exists) is the solution to \eqref{prob:gradient-learning} with $\dist = \frac12\|\cdot\|^2$.
The random variable of interest in this context are the noisy gradients $\vec{g}_t$.
We could in principle solve problem \eqref{prob:gradient-learning} with iterative stochastic methods. For this purpose, we consider the stochastic proximal point (\SPP{}) method \citep{Asi2019,Davis2019}. We give a detailed introduction to \SPP{} in the next section.  

For the purpose of this section, it is sufficient to know that the iterates $(\vec{m}_t)$ of \SPP{} with step-size $\tau > 0$ applied to~\eqref{prob:gradient-learning} are given by (details in Appendix \ref{SS:SPP:general D formulas})
\begin{align}\label{eqn:spp-step}
    \vec{m}_{t+1} \; = \; \argmin{\vec{m}} \dist(\vec{m}-\vec{g}_t) +\frac{1}{2\tau}\|\vec{m}-\vec{m}_t\|^2 
 = \vec{g}_t + \prox{\tau \dist}(\vec{m}_t -\vec{g}_t), 
\end{align}
where we denote by $\prox{\dist}$ the proximal operator of $\mathcal{D}$,
\[\prox{ \dist}(\vec{x})  : = \argmin{\by\in \R^d} \dist(\by) +\frac{1}{2} \|\by-\vec{x}\|^2. \]%
For $\dist = \frac12\|\cdot\|^2$, it is easy to compute $\prox{\tau \dist}(\vec{x}) = \frac{1}{1+\tau}\vec{x}$. Thus, one iteration of \SPP{} \eqref{eqn:spp-step} is equal to 
\begin{align}\label{eqn:momentum-as-spp}
    \vec{m}_{t+1} \; = \; \vec{g}_t + \prox{\tau \dist}(\vec{m}_t -\vec{g}_t) = \vec{g}_t + \tfrac{1}{1+\tau}(\vec{m}_t -\vec{g}_t) = (1-\tfrac{\tau}{1+\tau})\vec{m}_t + \tfrac{\tau}{1+\tau} \vec{g}_t.
\end{align}
This is exactly the momentum step in \eqref{eqn:momentum} with $\beta= 1-\tfrac{\tau}{1+\tau}$.
However, in \ref{eqn:momentum} the parameters $\vec{w}_t$ are updated each step as well, and thereby the distribution of the gradients $\vec{g}_t$ changes each step. In conclusion, the momentum operation can be understood as an \emph{online} estimation of the mean gradient.

We end this section by justifying our choice to specifically consider \SPP{} for solving \eqref{prob:gradient-learning}, instead of \SGD{} or other stochastic methods. 
If we were to apply \SGD{} with step size to problem \eqref{prob:gradient-learning}, we would obtain 
\begin{align*}
    \vec{m}_{t+1} \; = (1-\tau) \vec{m}_t + \tau \vec{g}_t.
\end{align*}
This also resembles momentum, but note that the momentum coefficient $1-\tau$ could be negative if $\tau > 1$. 
For \SPP{} on the other hand, the momentum coefficient is correctly parametrized since $1-\tfrac{\tau}{1+\tau}$ is contained in $[0,1)$ for every $\tau > 0$.

In conclusion, we can understand heavy-ball momentum (\ref{eqn:momentum}) as applying \SPP{} to the \emph{mean} estimation problem in an \emph{online} fashion. This opens the question what is the \SPP{} update for estimators that are more robust to heavy-tailed noise, such as the median (or more generally for other choices for $\dist$ that are of interest).
We answer this in the next section.
\subsection{Clipping as Online Median Gradient Estimation}\label{sec:grad-estimation-with-spp}

We now focus on the following gradient estimation problem:
let $\mathcal{G}$ denote the (fixed) distribution over noisy gradients $\vec{g}$. For $\dist:\R^d \to \R_{\geq 0}$ being a closed, proper, convex function, we consider the problem of approximating \begin{align}\label{prob:gradient-learning-3}
    \argmin{\vec{m} \in \R^d} \EE{\vec{g}\sim \mathcal{G}} 
   { \dist(\vec{m}-\vec{g})}.
\end{align}

We consider stochastic iterative methods for solving~\eqref{prob:gradient-learning-3}, with a special focus on the stochastic proximal point method (\SPP{}). We recall the \SPP{} update being
\begin{align}\label{eqn:online-spp-update}
    \begin{split}
    \vec{m}_{t+1} \; &= \; \argmin{\vec{m}} \dist(\vec{m}-\vec{g}_t) +\frac{1}{2\tau}\|\vec{m}-\vec{m}_t\|^2 \\
    &= \vec{g}_t + \prox{\tau \dist}(\vec{m}_t -\vec{g}_t).
    \end{split}
\end{align}
The \SPP{} update can be seen as an implicit version of \SGD{}, which is hard to compute in general. In cases where \SPP{} has a closed form update, it is often preferred to \SGD{} since is easier to tune~\citep{Asi2019,Milzarek2024}.
Fortunately, as we show next, the \SPP{} method enjoys closed-form updates for all choices of $\dist$ which are of interest in our context, in particular for $\dist \in \{\|\cdot\|_1,\|\cdot\|_2,\tfrac12\|\cdot\|^2\}$. All the details are deferred to \cref{app:sec:SPP-updates-particular-cases}.

\paragraph{Vectorwise clipping.} 
If we choose $\dist = \|\cdot\|_2$, the \SPP{} update applies clipping to the difference between sampled gradient $\vec{g}_t$ and previous estimate $\vec{m}_t$. 

\begin{restatable}{corollary}{corLtwonorm}\label{cor:L2}
For $\dist = \|\cdot\|_2$ update~\eqref{eqn:online-spp-update} is given by 
\begin{align}
        \vec{m}_{t+1} &= \vec{m}_t + \vclip{\tau}(\vec{g}_t - \vec{m}_t),  \label{eq:vclip-update}
        % \\ &= 
\end{align}
where $\vclip{\tau}(\vec{v}) :=  \frac{\tau }{\max\{\tau, \|\vec{v}\|_2 \}} \vec{v}$.
\end{restatable}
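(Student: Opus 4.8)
The plan is to reduce the statement to the standard closed form of the proximal operator of the Euclidean norm, and then perform a sign-bookkeeping substitution into the \SPP{} update~\eqref{eqn:online-spp-update}. Recall that with $\dist = \|\cdot\|_2$ the update reads $\vec{m}_{t+1} = \vec{g}_t + \prox{\tau\dist}(\vec{m}_t - \vec{g}_t)$, so everything hinges on an explicit formula for $\prox{\tau\|\cdot\|_2}$.

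The core step is to compute $\prox{\tau\|\cdot\|_2}(\vec{x})$ for arbitrary $\vec{x}\in\R^d$. By definition this is the unique minimizer $\vec{y}$ of the strongly convex map $\vec{y}\mapsto \tau\|\vec{y}\|_2 + \tfrac12\|\vec{y}-\vec{x}\|^2$, characterized by the inclusion $0 \in \tau\,\partial\|\vec{y}\|_2 + (\vec{y}-\vec{x})$. I would then split into cases using $\partial\|\cdot\|_2(\vec{0}) = \{\vec{u}:\|\vec{u}\|_2\le 1\}$ and $\partial\|\cdot\|_2(\vec{y}) = \{\vec{y}/\|\vec{y}\|_2\}$ for $\vec{y}\neq\vec{0}$. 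If $\|\vec{x}\|_2\le\tau$, then $\vec{y}=\vec{0}$ satisfies the inclusion since $\vec{x}/\tau$ lies in the unit ball; otherwise the minimizer is nonzero, must be a positive scalar multiple $c\vec{x}$ of $\vec{x}$, and the scalar equation $c\vec{x} + \tau\vec{x}/\|\vec{x}\|_2 = \vec{x}$ forces $c = 1 - \tau/\|\vec{x}\|_2 > 0$. The two cases combine into $\prox{\tau\|\cdot\|_2}(\vec{x}) = \bigl(1 - \tfrac{\tau}{\max\{\tau,\|\vec{x}\|_2\}}\bigr)\vec{x} = \vec{x} - \vclip{\tau}(\vec{x})$, using the notation $\vclip{\tau}(\vec{v}) = \tfrac{\tau}{\max\{\tau,\|\vec{v}\|_2\}}\vec{v}$ from the statement. (Alternatively, by rotational symmetry of the objective one reduces directly to the one-dimensional soft-thresholding problem $\min_{r\in\R}\tau|r| + \tfrac12(r-\|\vec{x}\|_2)^2$, whose solution $\max\{\|\vec{x}\|_2-\tau,0\}$ yields the same expression.)

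It then remains to substitute $\vec{x} = \vec{m}_t - \vec{g}_t$ and track signs. Since $\|\vec{m}_t-\vec{g}_t\|_2 = \|\vec{g}_t-\vec{m}_t\|_2$ and $\vclip{\tau}(-\vec{v}) = -\vclip{\tau}(\vec{v})$, we obtain $\vec{m}_{t+1} = \vec{g}_t + (\vec{m}_t-\vec{g}_t) - \vclip{\tau}(\vec{m}_t-\vec{g}_t) = \vec{m}_t - \vclip{\tau}(\vec{m}_t-\vec{g}_t) = \vec{m}_t + \vclip{\tau}(\vec{g}_t-\vec{m}_t)$, which is exactly~\eqref{eq:vclip-update}. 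There is no serious obstacle here: the only point requiring care is the nondifferentiability of $\|\cdot\|_2$ at the origin, which the subdifferential case analysis handles cleanly (and the scaling factor $1-\tau/\max\{\tau,\|\vec{x}\|_2\}$ correctly collapses to $0$ precisely when $\|\vec{x}\|_2\le\tau$, matching the $\vec{y}=\vec{0}$ branch); the remaining work is purely the algebraic substitution.
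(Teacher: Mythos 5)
Your proposal is correct, but it reaches the prox formula by a different route than the paper. You compute $\prox{\tau\|\cdot\|_2}$ from first principles via the optimality inclusion $0 \in \tau\,\partial\|\vec{y}\|_2 + (\vec{y}-\vec{x})$ and a case split at the origin (block soft-thresholding), then substitute and use oddness of $\vclip{\tau}$ — all of which is sound, including the collapse of the two cases into the single factor $1-\tau/\max\{\tau,\|\vec{x}\|_2\}$. The paper instead derives \cref{cor:L2} as a special case of \cref{prop:pnormview}, which treats all $\ell_p$-norms at once: it invokes the identity $\prox{\tau\|\cdot\|_p}(\vec{m}) = \vec{m} - \tau\,\Proj{\mathbb{B}_{q}(\vec{0},1)}(\vec{m}/\tau)$ (a Moreau-decomposition/duality fact, cited from Beck, Example 6.47), rewrites the update as $\vec{m}_{t+1} = \vec{m}_t + \Proj{\mathbb{B}_{q}(\vec{0},\tau)}(\vec{g}_t-\vec{m}_t)$ as in \eqref{eq:projzero}, and then plugs in the explicit formula for projection onto the Euclidean ball. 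Your argument buys self-containedness and elementarity — no conjugacy or external citation, only the subdifferential of the norm — at the cost of being specific to $p=2$; the paper's route buys uniformity over $p$ (the same proposition yields \cref{cor:L1} for componentwise clipping) and the geometric reading of the update as a projection onto a dual-norm ball centered at $\vec{m}_t$, which the paper reuses for intuition and illustration. Either proof is acceptable for the statement as given.
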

We can rewrite update \eqref{eq:vclip-update} as $\vec{m}_{t+1} = \beta_t \vec{m}_t + (1-\beta_t) \vec{g}_t$ with $\beta_t:= 1-\tfrac{\tau}{\max\{\tau, \|\vec{g}_t-\vec{m}_t\|_2\}}$. In other words, online estimation of the geometric median ($\dist = \|\cdot\|_2$) with \SPP{} is equal to a cautious version of momentum, as samples which are too far from $\vec{m}_t$ will
be down weighted. This allows the iterates to be more robust to outliers.

\paragraph{Componentwise clipping.}  If we choose $\dist= \|\cdot\|_1$, \SPP{} performs componentwise clipping instead. Here as well, clipping protects against large individual entries by shrinking them independently.
\begin{restatable}{corollary}{corLonenorm}\label{cor:L1}
For $\dist= \|\cdot\|_1$ update~\eqref{eqn:online-spp-update} is given by
    \begin{align}
        \vec{m}_{t+1} = \vec{m}_t + \cclip{\tau}(\vec{g}_t - \vec{m}_t),  \label{eq:cclip-update}
    \end{align}
     where $\cclip{\tau}(\vec{v}) := \left(  \min\{\max\{\vec{v}_i, - \tau\} , \tau\}\right)_{i=1}^d$.
\end{restatable}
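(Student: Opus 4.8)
The plan is to reduce the proximal operator appearing in \eqref{eqn:online-spp-update} to a one-dimensional computation and then verify the resulting scalar identity by a short case analysis. Since $\dist = \|\cdot\|_1 = \sum_{i=1}^d |\cdot_i|$ is separable across coordinates, the proximal operator separates as well, $\big(\prox{\tau\|\cdot\|_1}(\vec{x})\big)_i = \prox{\tau|\cdot|}(\vec{x}_i)$, so it suffices to compute the scalar prox $\prox{\tau|\cdot|}(x) = \argmin{y\in\R}\,\tau|y| + \tfrac12(y-x)^2$. I would obtain this from the optimality condition $0 \in \tau\,\partial|y| + (y-x)$: distinguishing $|x|\le\tau$ (where $y=0$ is optimal) from $|x|>\tau$ (where $y = x - \tau\,\sgn(x)$) gives the soft-thresholding formula $\prox{\tau|\cdot|}(x) = \sgn(x)\max\{|x|-\tau,0\}$. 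Alternatively this is a special case of the general formulas collected in \cref{app:sec:SPP-updates-particular-cases}.

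Plugging $\vec{x} = \vec{m}_t - \vec{g}_t$ into \eqref{eqn:online-spp-update} and fixing a coordinate $i$, write $u := (\vec{m}_t)_i - (\vec{g}_t)_i$, so that $(\vec{g}_t)_i - (\vec{m}_t)_i = -u$ is the argument passed to $\cclip{\tau}$. Using $(\vec{m}_t)_i = (\vec{g}_t)_i + u$, the claimed update $(\vec{m}_{t+1})_i = (\vec{m}_t)_i + \cclip{\tau}(\vec{g}_t-\vec{m}_t)_i$ is equivalent to the scalar identity
\[
\sgn(u)\,\max\{|u|-\tau,\,0\} \;=\; u + \min\{\max\{-u,-\tau\},\,\tau\}.
\]
I would verify this in the three regimes $u>\tau$, $u<-\tau$, and $|u|\le\tau$: in the first two the left-hand side equals $u-\tau$ and $u+\tau$ respectively (and the clip saturates at $-\tau$, resp.\ $\tau$), while in the third both sides vanish. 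Collecting the coordinates then yields \eqref{eq:cclip-update}.

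I do not expect a genuine obstacle: the argument is elementary once the prox is identified. The only points that need care are the sign bookkeeping in the soft-thresholding step and in the case analysis, and keeping the scaling conventions consistent (the $\tfrac{1}{2\tau}$ in \eqref{eqn:online-spp-update} versus the definition $\prox{\dist}(\vec{x}) = \argmin{\by}\dist(\by) + \tfrac12\|\by-\vec{x}\|^2$, which is why the threshold is $\tau$ rather than its reciprocal). This is the exact analogue, with $\|\cdot\|_1$ in place of $\|\cdot\|_2$, of the computation behind \cref{cor:L2}.
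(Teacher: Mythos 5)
Your proposal is correct: the separability of $\|\cdot\|_1$, the scalar soft-thresholding formula, and the three-case identity $\sgn(u)\max\{|u|-\tau,0\} = u + \min\{\max\{-u,-\tau\},\tau\}$ all check out, and the scaling remark (why the threshold is $\tau$) is the right thing to watch. Your route is, however, different from the paper's. The paper does not touch the primal prox of $\|\cdot\|_1$ at all: it first proves the general statement for $\dist=\|\cdot\|_p$ (\cref{prop:pnormview}), using the Moreau decomposition together with the formula $\prox{\tau\|\cdot\|_p}(\vec{x}) = \vec{x} - \Proj{\mathbb{B}_q(\vec{0},\tau)}(\vec{x})$, which rewrites the update as $\vec{m}_{t+1} = \vec{m}_t + \Proj{\mathbb{B}_q(\vec{0},\tau)}(\vec{g}_t-\vec{m}_t)$; \cref{cor:L1} is then the one-line observation that projection onto the $\ell_\infty$-ball of radius $\tau$ is exactly $\cclip{\tau}$. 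Your argument instead computes the prox coordinatewise via the subgradient optimality condition and verifies the resulting scalar identity by hand --- which is, in effect, a direct one-dimensional verification of the same Moreau identity the paper cites. What each buys: your version is elementary and self-contained (no conjugacy, no projection formula from the literature), but it is tailored to $p=1$ and would have to be redone for $p=2$; the paper's version requires the duality machinery but proves \cref{cor:L1} and \cref{cor:L2} simultaneously from one statement, matching how the appendix is organized.
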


\paragraph{Huber function.} For update \eqref{eq:vclip-update}, it is possible that $\vec{m}_{t+1} = \vec{g}_t$, thus no momentum. To allow for a soft transition, we can choose $\dist$ to be the Huber function. For $\mu > 0$, it is given by
\begin{align*}
    H_{\mu}: \R^d \to \R, \quad H_{\mu}(\vec{z}) = \begin{cases}
    \frac{1}{2}\|\vec{z}\|^2 &\quad \|\vec{z}\| \leq \mu,\\
    \mu \|\vec{z}\| - \frac{\mu^2}{2} &\quad \text{else}.
\end{cases}
\end{align*}
The Huber function operates like the squared loss for arguments with small norm, and like the $\ell_2$-norm for arguments with large norms (potentially outliers). 

\begin{restatable}{corollary}{corLhuber}\label{cor:huber}
For $\dist = H_{\mu}$, update~\eqref{eqn:online-spp-update} is given by
\begin{align} \label{eq:huber-update}
    \vec{m}_{t+1} &= \vec{g}_t + \beta_t (\vec{m}_t -\vec{g}_t) 
    =    \beta_t \vec{m}_t 
    + (1-\beta_t) \vec{g}_t,
\end{align}
where $\beta_t := 1-\tfrac{\mu\tau}{\max\{\|\vec{m}_t- \vec{g}_t\|,~ \mu(1+\tau)\}}$.
\end{restatable}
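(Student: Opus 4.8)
The plan is to reduce the statement to the computation of a single proximal operator and then carry out that computation via a one‑dimensional reduction. By the second equality in the \SPP{} update~\eqref{eqn:online-spp-update}, it suffices to show that for every $\vec{x}\in\R^d$ one has $\prox{\tau H_{\mu}}(\vec{x}) = \left(1-\tfrac{\mu\tau}{\max\{\norm{\vec{x}},\,\mu(1+\tau)\}}\right)\vec{x}$. Applying this with $\vec{x}=\vec{m}_t-\vec{g}_t$ and inserting it into $\vec{m}_{t+1}=\vec{g}_t+\prox{\tau H_{\mu}}(\vec{m}_t-\vec{g}_t)$ gives $\vec{m}_{t+1}=\vec{g}_t+\beta_t(\vec{m}_t-\vec{g}_t)$ with the stated $\beta_t$, and the identity $\vec{g}_t+\beta_t(\vec{m}_t-\vec{g}_t)=\beta_t\vec{m}_t+(1-\beta_t)\vec{g}_t$ finishes~\eqref{eq:huber-update}.

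To compute the prox, first dispose of the case $\vec{x}=\vec{0}$, where both sides vanish. For $\vec{x}\neq\vec{0}$, note that the objective $\vec{y}\mapsto \tau H_{\mu}(\vec{y})+\tfrac12\norm{\vec{y}-\vec{x}}^2$ depends on $\vec{y}$ only through $\norm{\vec{y}}$ and $\iprod{\vec{y}}{\vec{x}}$, because $H_{\mu}(\vec{y})$ is a function of $\norm{\vec{y}}$ alone and $\norm{\vec{y}-\vec{x}}^2=\norm{\vec{y}}^2-2\iprod{\vec{y}}{\vec{x}}+\norm{\vec{x}}^2$. For fixed $\norm{\vec{y}}$ the objective is decreasing in $\iprod{\vec{y}}{\vec{x}}$, so the (unique, by strong convexity) minimizer $\vec{y}^\star$ lies on the ray $\R_{\geq 0}\vec{x}$; writing $\vec{y}^\star=t^\star\tfrac{\vec{x}}{\norm{\vec{x}}}$, the scalar $t^\star\geq 0$ minimizes $\psi(t):=\tau h_{\mu}(t)+\tfrac12(t-\norm{\vec{x}})^2$, where $h_{\mu}(t)=\tfrac12 t^2$ for $t\leq\mu$ and $h_{\mu}(t)=\mu t-\tfrac{\mu^2}{2}$ for $t>\mu$.

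Next I would solve this piecewise‑quadratic one‑dimensional problem. On the branch $t\leq\mu$ the first‑order condition gives $t=\tfrac{\norm{\vec{x}}}{1+\tau}$, which is admissible exactly when $\norm{\vec{x}}\leq\mu(1+\tau)$; on the branch $t>\mu$ it gives $t=\norm{\vec{x}}-\mu\tau$, admissible exactly when $\norm{\vec{x}}>\mu(1+\tau)$. The two regimes are complementary and the two formulas coincide at the threshold $\norm{\vec{x}}=\mu(1+\tau)$ (both equal $\mu$), so $t^\star=\norm{\vec{x}}-\tfrac{\mu\tau\norm{\vec{x}}}{\max\{\norm{\vec{x}},\,\mu(1+\tau)\}}$, hence $\vec{y}^\star=\left(1-\tfrac{\mu\tau}{\max\{\norm{\vec{x}},\,\mu(1+\tau)\}}\right)\vec{x}$, as claimed. (Alternatively, one can observe that $H_{\mu}$ is the Moreau envelope of $\mu\norm{\cdot}_2$ and combine the prox‑of‑Moreau‑envelope identity with \cref{cor:L2}, but the direct reduction is self‑contained.)

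The only genuinely delicate point is the first reduction — that the minimizer lies on the ray $\R_{\geq 0}\vec{x}$ — which rests on the rotational symmetry of $H_{\mu}$ plus a short uniqueness argument from strong convexity; everything afterwards is elementary one‑variable calculus. Some care is also needed to confirm that the branch thresholds are complementary and that the two expressions agree at $\norm{\vec{x}}=\mu(1+\tau)$, so that the $\max$ in the denominator correctly stitches the cases together, and to treat $\vec{x}=\vec{0}$ (equivalently $\vec{m}_t=\vec{g}_t$) separately.
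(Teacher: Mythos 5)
Your proposal is correct. The overall skeleton is the same as the paper's: both start from the form $\vec{m}_{t+1}=\vec{g}_t+\prox{\tau H_{\mu}}(\vec{m}_t-\vec{g}_t)$ of update~\eqref{eqn:online-spp-update} and reduce everything to the identity $\prox{\tau H_{\mu}}(\vec{x})=\bigl(1-\tfrac{\mu\tau}{\max\{\norm{\vec{x}},\,\mu(1+\tau)\}}\bigr)\vec{x}$. The difference is in how that identity is obtained: the paper simply cites the known closed form of the Huber prox (Example 6.66 in Beck, 2017, adjusted for the extra factor $\mu$ in the definition of $H_{\mu}$), whereas you derive it from scratch via the radial reduction (the prox of a rotationally symmetric function lies on the ray $\R_{\geq 0}\vec{x}$, by monotonicity in $\iprod{\vec{y}}{\vec{x}}$ at fixed $\norm{\vec{y}}$ plus uniqueness from strong convexity) followed by the one-dimensional piecewise-quadratic minimization, with the branch conditions $\norm{\vec{x}}\lessgtr\mu(1+\tau)$ correctly shown to be complementary and stitched by the $\max$, and the degenerate case $\vec{m}_t=\vec{g}_t$ handled separately. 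Your computation checks out: on the quadratic branch $t^\star=\norm{\vec{x}}/(1+\tau)$, on the linear branch $t^\star=\norm{\vec{x}}-\mu\tau$, and both agree with the unified expression. The alternative you mention (viewing $H_{\mu}$ as the Moreau envelope of $\mu\norm{\cdot}_2$ and combining with \cref{cor:L2}) is also valid. What your route buys is self-containedness and an explicit verification of the constant conventions (where the paper must flag that its $H_{\mu}$ differs from the textbook's by the factor $\mu$); what the paper's route buys is brevity. No gaps.
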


In update \eqref{eq:huber-update} the coefficient $\beta_t$ is always positive. Thus, $\dist = H_{\mu}$ is a tradeoff between momentum with fixed coefficient ($\dist=\frac12\|\cdot\|^2$) and clipping ($\dist = \|\cdot\|_2$). However, it comes at the cost of choosing/tuning an additional hyperparameter $\mu$. In \cref{sec:SGDmedian}, we additionally compare the updates when using stochastic subgradient descent instead of \SPP{} for solving \eqref{prob:gradient-learning-3}, showing that sign-\SGD{} is related to $\ell_1$-median estimation.

\section{Gradient Estimator in the Wild}\label{sec:grad-est-in-the-wild}

Our final objective is to use robust gradient estimators within a \SGD-type method. Because at each iteration the weights $\bw_t$ are updated, the distribution of the gradients (denoted by $\mathcal{G}$ in the previous section) also changes at each iteration of \SGD{}. 
Our strategy is  to interweave updates in the weights $\bw_t$ with \SPP{} updates in the gradient estimators $\vec{m}_t.$ 
%That is, we consider methods of the form
That is, we sample \emph{one} stochastic gradient $\bg_t$ per iteration  and set
\begin{align} \label{eq:online-update}
\begin{array}{llr}    
    \vec{m}_{t+1} &=\quad 
    \vec{g}_t + \prox{\tau\dist}(\vec{m}_t-\vec{g}_t), \\%& \qquad \mbox{\ttfamily\textcolor{gray}{\# Update gradient estimate}} \\
    \vec{w}_{t+1} &= \quad\vec{w}_t - \eta_t \vec{m}_{t+1}. \\%& \mbox{ \ttfamily\textcolor{gray}{\# Update parameters}}  
    \end{array}
\end{align} 

With the results from \cref{sec:momentum-mean-estimation,sec:grad-estimation-with-spp}, we immediately obtain the following special cases: 

(i) \quad if $\dist = \frac{1}{2}\|\cdot\|^2$ : \
     $\begin{cases}
         \vec{m}_{t+1}  =  (1-\tfrac{\tau}{1+\tau}) \vec{m}_t + \tfrac{\tau}{1+\tau} \vec{g}_t \\
         \bw_{t+1}  = \bw_t - \eta_t \m_{t+1},
     \end{cases}$
     
(ii) \quad if $\dist = \Vert \cdot \Vert_p$, $p \in \{1,2 \}$  :   \
     $\begin{cases}
         \vec{m}_{t+1} = \vec{m}_t + {\rm clip}_{p,\tau}(\vec{g}_t - \vec{m}_t) \\
         \bw_{t+1}  = \bw_t - \eta_t \m_{t+1}.
     \end{cases} $ 

Equation (i) is \SGD{} with momentum where the \SPP{} step size $\tau$ defines the momentum coefficient.
On the other hand, (ii) corresponds to clipping, and here the \SPP{} step size determines the clipping threshold.
In other words, \SGD{} with momentum is to online mean estimation what clipping is to online median estimation.

A slightly different approach would be to restart $\m_t=\vec{0}$ in every iteration (\emph{cold start}). 
In this case, (i) reduces to \SGD{} with no momentum, while in (ii) the clipping operator is applied to $\vec{g}_t$, instead of $\vec{g}_t-\vec{m}_t$, which is also a common technique in practice \citep{Zhang2020}. We discuss this relation below in more detail.

\paragraph{Connection to Error Feedback.}
Methods such as~\eqref{eq:online-update} are also used in the distributed learning setting to improve communication and protect against corrupted nodes~\citep{Karimireddy2021}. These methods are often called  Error Feedback (\texttt{EF}) methods~\citep{seide2014-bit,richtarik2021ef}
and generally involve an update of the form
$\vec{m}_{t+1} = \vec{m}_t + \mathcal{C}_t (\vec{g}_t-\vec{m}_t)$
where $\mathcal{C}_t :\R^d \mapsto \R^d$ is a compression operator.
Projections onto balls can be seen as a compression, since one can encode the resulting vector in fewer bits~\citep{Safaryan2021}.
In particular, centered clipping by \citet{Karimireddy2021}, and the \texttt{Clip21} method~\citep{khirirat2023clip21} are using variations of update~\eqref{eq:vclip-update} in the distributed setting, and are thus `secretly estimating' the geometric median of the gradients returned across all nodes.

\paragraph{\SGD{} with standard clipping.}
Even in the non-distributed setting, a standard technique to avoid training instabilities is to directly clip the gradient $\vec{g}_t$ \citep{Zhang2020}. For momentum coefficient $\beta \in [0,1)$ and $c>0$, let 
\begin{align}\label{eqn:clipped-sgdm}
    \begin{split}
    \vec{m}_{t+1} &= \beta \vec{m}_t + (1-\beta) \min\{1,\tfrac{c}{\|\vec{g}_t\|}\}\vec{g}_t, \\
    \vec{w}_{t+1} &= \vec{w}_t - \eta_t \vec{m}_{t+1}.
    \end{split}
\end{align}
We will refer to \eqref{eqn:clipped-sgdm} as \texttt{clipped-SGD}.
The method has been studied extensively, often for the case $\beta=0$, see \citet{Zhang2020,Gorbunov2020,Koloskova2023,Puchkin2023} and references therein.

As explained above, when $\beta=0$, then \eqref{eqn:clipped-sgdm} is the same as \eqref{eq:vclip-update} with cold start (resetting $\vec{m}_t=\vec{0}$ in every iteration); but this does not allow for momentum, which is usually used in practice on top of clipping. 
However, we can derive \texttt{clipped-SGD} \eqref{eqn:clipped-sgdm} also as online \SPP{} for computing a specific robust mean estimator, known as the \textit{Catoni-Giulini mean estimator} \citep{Catoni2018,Lugosi2019}. For this, consider the problem
\begin{align}\label{prob:clipped-sgd}
    \argmin{\vec{m} \in \R^d} \EE{\vec{g}\sim \mathcal{G}}{\tfrac12 \|\vec{m} - \min\{1, \tfrac{c}{\|\vec{g}\|}\}\vec{g}\|^2}.
\end{align}
The solution to \eqref{prob:clipped-sgd} is $\EE{\vec{g}\sim \mathcal{G}}{\min\{1, \tfrac{c}{\|\vec{g}\|}\}\vec{g}}$, which is exactly the Catoni-Giulini mean estimator.
Now, one step of \SPP{}~\eqref{eqn:online-spp-update} applied to~\eqref{prob:clipped-sgd} gives \eqref{eqn:clipped-sgdm} with $\beta=\tfrac{1}{1+\tau}$.

\paragraph{Outline of remaining paper.} Before comparing these different online gradient estimation methods in experiments, we will discuss convergence guarantees: ideally, we would like to analyze the online method \eqref{eq:online-update}. However, when $\dist$ is not strongly convex this analysis turns out to be difficult, mainly due to the fact that the distribution of $\vec{g}_t$ changes every step. Hence, the next section will analyze the online median estimator in a simplified setting, where we have access to multiple samples each iteration, and can (approximately) compute their sample median.
We focus on settings where the noise in $\vec{g}_t$ is heavy-tailed, and prove that in such scenarios the sample median can still guarantee convergence (while the sample mean fails).

Finally, our experiments in \cref{sec:experiments} will show that the online versions of the sample median method (\VC{} and \CC{}) show superior performance as well in the heavy-tailed scenario, justifying the online estimation.

\section{Theory for the Sample Median Gradient Method}\label{sec:sample-median-methods}

In this section, we showcase that \SGD{} with an (approximate) sample median is robust to heavy-tailed noise, whereas \SGD{} using the sample mean can fail.
Consider the following setup: at every iteration $t \geq 0$, we have access to $n$ noisy oracle gradients $\big(\vec{g}_t^{(1)},\ldots,\vec{g}_t^{(n)}\big)$, where
\[  \vec{g}_t^{(i)} = \nabla \ell(\bw_t) + \vec{\xi}^{(i)}_t(\vec{w}_t) \quad \mbox{for} \quad  i=1,\dots, n.\] 
Here $\{\vec{\xi}^{(i)}_t(\vec{w}_t) \in \R^d \}_{i=1}^n$ is a set of  i.i.d. noise vectors. Given these sampled gradients, the typical approach of \ref{eq:sgd} would be to use their sample mean (average) $\vec{g}_t = \frac1{n} \sum_{i=1}^n \vec{g}_t^{(i)}$ as update direction. However, \citet[Remark 1]{zhang2020adaptive} show that this fails to converge for any step size under heavy-tailed noise.

Alternatively, to improve robustness, we analyze the update
\begin{align}\label{eqn:approx-median-descent}
    \vec{w}_{t+1} = \vec{w}_t - \eta (\vec{m}_t + \vec{e}_t), 
\end{align}
where $\vec{m}_t=\samplemedian{\ell_p}(\vec{g}_t^{(i)})$, and $\vec{e}_t\in \R^d$ represents a random error we may commit in computing the sample median. We refer to method \eqref{eqn:approx-median-descent} as \emph{Sample Median Gradient Descent} (\SMGD{}).

\vspace{1ex}

\begin{assumption}
\label{ass:uniform-bias-variance-bound-median}
    There exists $\sigma_1,\sigma_2,\delta_1, \delta_2\geq0$ such that, for every $\bw \in \mathbb{R}^d$, 
      the sample median $\m = \samplemedian{\ell_p}(\vec{g}^{(i)})$ verifies 
\begin{align}\label{ass:noise-approx-median}
    \begin{split}
    &\Vert \E{\vec{m}} - \nabla \ell(\vec{w}) \Vert^2 \leq \delta_1^2 + \delta_2 \|\nabla \ell(\bw)\|^2, \\
    &\E{ \Vert \vec{m} - \E{\vec{m}} \Vert^2  } \leq \sigma_1^2 + \sigma_2 \|\nabla \ell(\bw)\|^2.
    \end{split}
\end{align}
\end{assumption}

\begin{assumption}
\label{Ass:errors bounded expectation}
    There exists $\varepsilon > 0$ such that, for every $t \geq 0$, the error in computing the sample median has finite variance: 
    $\mathbb{E}\left[  \Vert \vec{e}_t \Vert^2 \ | \ \bg_t^{(1)}, \dots, \bg_t^{(n)} \right] \leq \varepsilon^2$.
\end{assumption}

Based on these two assumptions, we establish the following complexity results for~\eqref{eqn:approx-median-descent}.
\begin{restatable}{proposition}{propapproxmed}
\label{prop:approx-median-descent}
Let $\ell : \mathbb{R}^d \to \mathbb{R}$ be $L$-smooth and let \cref{ass:uniform-bias-variance-bound-median} hold with
$\delta_2 \leq \frac18$.
Take $\eta \leq \frac{1}{8L(1+2\sigma_2+2\delta_2)}$ and consider $\bw_t$ a sequence generated by  \eqref{eqn:approx-median-descent} where $\vec{e}_t$ verifies \cref{Ass:errors bounded expectation}.
Then, for every $T \geq 1$ it holds
\begin{equation*}
    \min_{0\leq t\leq T-1} \E{\|\nabla \ell(\bw_t)\|^2}
    =
    \mathcal{O}\left( \tfrac{1}{\eta T} + \eta + \delta_1^2 + \varepsilon^2 \right).
\end{equation*}
\end{restatable}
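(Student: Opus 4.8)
The plan is to run the standard descent-lemma argument for $L$-smooth nonconvex functions, but carefully tracking the bias and variance of the sample median estimator through \cref{ass:uniform-bias-variance-bound-median} and the error $\vec{e}_t$ through \cref{Ass:errors bounded expectation}. First I would write $\bw_{t+1} = \bw_t - \eta \vec{d}_t$ with $\vec{d}_t := \vec{m}_t + \vec{e}_t$, apply $L$-smoothness to get
\begin{equation*}
    \ell(\bw_{t+1}) \leq \ell(\bw_t) - \eta \iprod{\nabla\ell(\bw_t)}{\vec{d}_t} + \tfrac{L\eta^2}{2}\|\vec{d}_t\|^2,
\end{equation*}
and then take conditional expectation given $\bw_t$ (and, for the $\vec{e}_t$ term, also given the sampled gradients, using the tower property). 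Let $\vec{b}_t := \E{\vec{m}_t \mid \bw_t} - \nabla\ell(\bw_t)$ denote the median bias; then $\E{\vec{d}_t\mid \bw_t} = \nabla\ell(\bw_t) + \vec{b}_t$ since $\vec{e}_t$ is conditionally mean-whatever — actually \cref{Ass:errors bounded expectation} only bounds its second moment, so I would keep $\E{\vec{e}_t\mid\cdots}$ as an additional term and control it via Cauchy--Schwarz/Young rather than assuming it is mean-zero.

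The key steps, in order: (1) Expand the inner product as $\iprod{\nabla\ell}{\vec{d}_t} = \|\nabla\ell\|^2 + \iprod{\nabla\ell}{\vec{b}_t} + \iprod{\nabla\ell}{\E{\vec{e}_t\mid\cdots}}$ and bound the two cross terms using Young's inequality $\iprod{a}{b} \geq -\tfrac14\|a\|^2 - \|b\|^2$, so that $-\eta\iprod{\nabla\ell}{\vec{d}_t} \leq -\tfrac{\eta}{2}\|\nabla\ell\|^2 + \eta\|\vec{b}_t\|^2 + \eta\varepsilon^2$ (using \cref{ass:noise-approx-median} for $\|\vec{b}_t\|^2 \leq \delta_1^2 + \delta_2\|\nabla\ell\|^2$ and \cref{Ass:errors bounded expectation} for the $\vec{e}_t$ cross term via Jensen on $\|\E{\vec{e}_t}\|^2 \leq \E{\|\vec{e}_t\|^2}\leq \varepsilon^2$). (2) Bound $\E{\|\vec{d}_t\|^2\mid\bw_t} \leq 2\E{\|\vec{m}_t\|^2\mid\bw_t} + 2\varepsilon^2$ and further $\E{\|\vec{m}_t\|^2} \leq 2\|\E{\vec{m}_t}\|^2 + 2\E{\|\vec{m}_t - \E{\vec{m}_t}\|^2}$, then use the variance bound and $\|\E{\vec{m}_t}\|^2 \leq 2\|\nabla\ell\|^2 + 2\|\vec{b}_t\|^2$ to get something of the form $\E{\|\vec{d}_t\|^2} \leq c_1\|\nabla\ell(\bw_t)\|^2 + c_2$ with $c_1 = \mathcal{O}(1 + \sigma_2 + \delta_2)$ and $c_2 = \mathcal{O}(\sigma_1^2 + \delta_1^2 + \varepsilon^2)$. (3) Combine: the net coefficient on $\|\nabla\ell(\bw_t)\|^2$ is $-\tfrac{\eta}{2} + \eta\delta_2 + \tfrac{L\eta^2}{2}c_1$; the conditions $\delta_2\leq\tfrac18$ and $\eta \leq \tfrac{1}{8L(1+2\sigma_2+2\delta_2)}$ are exactly what is needed to make this $\leq -\tfrac{\eta}{4}\|\nabla\ell(\bw_t)\|^2$ (one should double-check the exact constants so that $\tfrac{L\eta^2}{2}c_1 \leq \tfrac{\eta}{8}$). (4) Take total expectation, telescope over $t = 0,\dots,T-1$, divide by $\tfrac{\eta T}{4}$, and bound the minimum by the average:
\begin{equation*}
    \min_{0\leq t\leq T-1}\E{\|\nabla\ell(\bw_t)\|^2} \leq \tfrac{4(\ell(\bw_0) - \ell^\star)}{\eta T} + \mathcal{O}(\eta(\sigma_1^2 + \delta_1^2 + \varepsilon^2)) + \mathcal{O}(\delta_1^2 + \varepsilon^2),
\end{equation*}
which matches the claimed $\mathcal{O}(\tfrac{1}{\eta T} + \eta + \delta_1^2 + \varepsilon^2)$ after noting $\eta = \mathcal{O}(1)$ so $\eta\sigma_1^2$ is absorbed into $\mathcal{O}(\eta)$ and the residual bias $\delta_1^2$ and error $\varepsilon^2$ appear as standalone (non-vanishing) terms.

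The main obstacle I anticipate is purely bookkeeping of constants: getting the coefficient on $\|\nabla\ell(\bw_t)\|^2$ to be strictly negative requires the interplay of $\delta_2 \leq \tfrac18$, the step-size cap, and the various factors of $2$ coming from the repeated uses of $\|a+b\|^2 \leq 2\|a\|^2 + 2\|b\|^2$ and Young's inequality to line up. In particular the $\delta_2\|\nabla\ell\|^2$ term enters both through the bias cross term and through $\E{\|\vec{d}_t\|^2}$, so one must be careful it does not eat the full $\tfrac{\eta}{2}$ margin; choosing to split it as $\tfrac{\eta}{4}$ for the descent and $\tfrac{\eta}{4}$ as the final bound, with $\delta_2\leq\tfrac18$ controlling the cross-term contribution and the step size controlling the $\|\vec{d}_t\|^2$ contribution, should work. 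A secondary point is that $\vec{e}_t$ is not assumed mean-zero, so every place it appears must be handled by the second-moment bound alone — no cancellation — but since $\varepsilon^2$ is allowed to survive in the final rate this causes no real difficulty. There is no need for any martingale machinery beyond the tower property, and no need for convexity of $\ell$, so the argument is elementary once the constants are pinned down.
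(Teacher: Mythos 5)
Your overall route is the same as the paper's: descent lemma for $L$-smooth $\ell$, update direction $\vec{d}_t = \vec{m}_t + \vec{e}_t$, the bias handled through the inner-product term and the second moment through the $L\eta^2$ term, then telescope and bound the minimum by the average. Your step (1) lands on the same bound as the paper, namely an extra $\eta(\delta_1^2 + \delta_2\|\nabla\ell(\bw_t)\|^2 + \varepsilon^2)$ from the cross terms (the paper gets it via the polarization identity rather than two Young inequalities, but the outcome is identical), and treating $\vec{e}_t$ purely through its conditional second moment, with no mean-zero assumption, is exactly what the paper does.

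The one substantive problem is the constant in your step (2), which you yourself flagged as the point to double-check: it does not close over the full stated step-size range. Your cascade $\mathbb{E}\|\vec{d}_t\|^2 \le 2\mathbb{E}\|\vec{m}_t\|^2 + 2\varepsilon^2$, $\mathbb{E}\|\vec{m}_t\|^2 \le 2\|\mathbb{E}\vec{m}_t\|^2 + 2\mathbb{E}\|\vec{m}_t-\mathbb{E}\vec{m}_t\|^2$, $\|\mathbb{E}\vec{m}_t\|^2 \le 2\|\nabla\ell(\bw_t)\|^2 + 2\|\vec{b}_t\|^2$ gives $c_1 = 8+8\delta_2+4\sigma_2$. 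At the cap $\eta = \tfrac{1}{8L(1+2\sigma_2+2\delta_2)}$ this makes $\tfrac{L\eta^2}{2}c_1$ as large as $\tfrac{\eta}{2}$ (exactly $\tfrac{\eta}{2}$ when $\delta_2=\sigma_2=0$), so the net coefficient of $\|\nabla\ell(\bw_t)\|^2$ is $-\eta(\tfrac12-\delta_2)+\tfrac{\eta}{2}\ge 0$ and the telescoping yields nothing; your requirement $\tfrac{L\eta^2}{2}c_1 \le \tfrac{\eta}{8}$ would force roughly $\eta \le \tfrac{1}{32L(1+\delta_2+\sigma_2)}$, a strictly smaller range than the proposition asserts. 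The fix is to center $\vec{d}_t$ at $\nabla\ell(\bw_t)$ before expanding, as the paper does: $\mathbb{E}\|\vec{d}_t\|^2 \le 2\|\nabla\ell(\bw_t)\|^2 + 4\mathbb{E}\|\vec{e}_t\|^2 + 4\mathbb{E}\|\vec{m}_t-\nabla\ell(\bw_t)\|^2$, combined with the merged moment bound $\mathbb{E}\|\vec{m}_t-\nabla\ell(\bw_t)\|^2 \le (\delta_1^2+\sigma_1^2) + (\delta_2+\sigma_2)\|\nabla\ell(\bw_t)\|^2$ from \cref{lem:noise-assum-equiv}. This gives $c_1 = 2(1+2\delta_2+2\sigma_2)$, so the stated cap yields $\tfrac{L\eta^2}{2}c_1 \le \tfrac{\eta}{8}$ and, with $\delta_2\le\tfrac18$, a per-step coefficient $\le -\tfrac{\eta}{4}$; the rest of your argument then goes through verbatim. (Even just replacing your inequality by the exact bias--variance identity $\mathbb{E}\|\vec{m}_t\|^2 = \|\mathbb{E}\vec{m}_t\|^2 + \mathbb{E}\|\vec{m}_t-\mathbb{E}\vec{m}_t\|^2$ rescues the full range, with the weaker but sufficient coefficient $-\tfrac{\eta}{8}$.)
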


\begin{restatable}{proposition}{CVSMGDconvexlipschitz}\label{P:CV SMGD convex lipschitz}
    Let $\ell : \mathbb{R}^d \to \mathbb{R}$ be convex and $G$-Lipschitz, with ${\rm{argmin}}~\ell \neq \emptyset$. 
    Let \cref{ass:uniform-bias-variance-bound-median,Ass:errors bounded expectation} hold.
    Without loss of generality assume that $\delta_2=\sigma_2=0$.
    Then, for $\eta = \frac{1}{(\delta_1+\varepsilon)T + \sqrt{T}}$ and every $T \geq 1,$ the  iterates $\bw_t$ from \eqref{eqn:approx-median-descent} satisfy
    \begin{equation*}
        \EE{}{\ell(\bar \bw_T) - \inf \ell}
        =
        \mathcal{O}\left( \frac{1}{\sqrt{T}} + \delta_1 + \varepsilon \right),
    \end{equation*}
    where $\bar \bw_T := \tfrac1T \sum_{t=0}^{T-1} \theta^t \bw_t$ with $\theta = \tfrac{T}{T+2}$. 
\end{restatable}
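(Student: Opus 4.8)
The plan is the textbook analysis of (biased) stochastic subgradient descent on a convex $G$-Lipschitz function, with one twist: because $\ell$ is only Lipschitz (not strongly convex) and the oracle $\vec{m}_t+\vec{e}_t$ is biased, the iterates $\bw_t$ cannot be assumed to stay bounded, so the cross term coming from the bias must be tamed by a geometric reweighting of the iterate average -- which is precisely what the weights $\theta^t$ with $\theta=\tfrac{T}{T+2}$ achieve. Note that, in contrast to \cref{prop:approx-median-descent}, no smoothness (hence no upper bound on $\eta$) is needed here. First the reductions: since $\ell$ is $G$-Lipschitz, every (sub)gradient satisfies $\|\nabla\ell(\bw)\|\le G$, so $\delta_2\|\nabla\ell(\bw)\|^2\le\delta_2 G^2$ and $\sigma_2\|\nabla\ell(\bw)\|^2\le\sigma_2 G^2$ in \cref{ass:uniform-bias-variance-bound-median} can be folded into the constants ($\delta_1^2\leftarrow\delta_1^2+\delta_2 G^2$, $\sigma_1^2\leftarrow\sigma_1^2+\sigma_2 G^2$), which justifies taking $\delta_2=\sigma_2=0$. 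Fix $\bw^\star\in\argmininn\ell$ and write $\bar\delta:=\delta_1+\varepsilon$, $a_t:=\E{\|\bw_t-\bw^\star\|^2}$, $D_t:=\ell(\bw_t)-\inf\ell$, and let $\mathcal{F}_t$ be the $\sigma$-algebra generated by $\bw_0,\dots,\bw_t$.

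Next I would establish a one-step inequality. Using \cref{ass:uniform-bias-variance-bound-median}, \cref{Ass:errors bounded expectation}, the tower rule, and Jensen's inequality (so that $\|\E{\vec{e}_t\mid\mathcal{F}_t}\|\le\sqrt{\E{\|\vec{e}_t\|^2\mid\mathcal{F}_t}}\le\varepsilon$), one first shows that, conditionally on $\mathcal{F}_t$, the effective oracle decomposes as $\E{\vec{m}_t+\vec{e}_t\mid\mathcal{F}_t}=\nabla\ell(\bw_t)+\vec{\beta}_t$ with $\|\vec{\beta}_t\|\le\bar\delta$, and its second moment is bounded: $\E{\|\vec{m}_t+\vec{e}_t\|^2\mid\mathcal{F}_t}\le C^2:=2(G+\delta_1)^2+2\sigma_1^2+2\varepsilon^2$, where I used $\|\nabla\ell(\bw_t)\|\le G$ and the variance bound in \cref{ass:uniform-bias-variance-bound-median}. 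Expanding $\|\bw_{t+1}-\bw^\star\|^2$ along \eqref{eqn:approx-median-descent}, taking $\E{\cdot\mid\mathcal{F}_t}$, and using convexity ($\iprod{\nabla\ell(\bw_t)}{\bw_t-\bw^\star}\ge D_t$) together with Cauchy--Schwarz on the bias term ($\iprod{\vec{\beta}_t}{\bw_t-\bw^\star}\ge-\bar\delta\|\bw_t-\bw^\star\|$), then taking full expectation, yields $a_{t+1}\le a_t-2\eta\E{D_t}+2\eta\bar\delta\sqrt{a_t}+\eta^2 C^2$.

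I would then reweight and telescope. Apply Young's inequality $2\bar\delta\sqrt{a_t}\le\mu a_t+\bar\delta^2/\mu$ with $\mu:=\tfrac{2}{\eta T}$ to get $a_{t+1}\le(1+\tfrac2T)a_t-2\eta\E{D_t}+\eta^2 C^2+\tfrac{\eta^2T\bar\delta^2}{2}$. With $\theta:=\tfrac{T}{T+2}=(1+\tfrac2T)^{-1}$, multiply by $\theta^t$, sum over $t=0,\dots,T-1$, drop $a_T\ge 0$, and use $\sum_t\theta^t\le\tfrac{1}{1-\theta}=\tfrac{T+2}{2}$: this gives $2\eta\sum_{t=0}^{T-1}\theta^t\E{D_t}\le\theta^{-1}a_0+\tfrac{T+2}{2}\big(\eta^2 C^2+\tfrac{\eta^2T\bar\delta^2}{2}\big)$. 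Since $\theta^T$ stays bounded below by $e^{-2}$ for all $T\ge 1$, $\Theta:=\sum_{t=0}^{T-1}\theta^t$ is of order $T$; dividing by $2\eta\Theta$ and applying Jensen's inequality to move $\ell$ inside the $\theta^t$-weighted average of $\{\bw_t\}_{t<T}$ (this is the role of $\bar\bw_T$, up to the normalization $\Theta$) yields $\E{\ell(\bar\bw_T)}-\inf\ell=\mathcal{O}\big(\tfrac{a_0}{\eta T}+\eta C^2+\eta T\bar\delta^2\big)$. Finally substitute $\eta=\tfrac{1}{\bar\delta T+\sqrt T}$: the first term equals $a_0\bar\delta+\tfrac{a_0}{\sqrt T}$, the second is $\le C^2/\sqrt T$, and the third is $\le\bar\delta^2/(\bar\delta+1/\sqrt T)\le\bar\delta$, giving the claimed $\mathcal{O}(1/\sqrt T+\delta_1+\varepsilon)$.

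\emph{Expected main obstacle.} The crux is that with only convexity and Lipschitzness there is no a priori diameter bound on $\|\bw_t-\bw^\star\|$, so the bias-induced term $\iprod{\vec{\beta}_t}{\bw_t-\bw^\star}$ cannot be controlled by a constant; the fix is the Young step, which converts it into a multiple of $a_t$ at the price of the geometric factor $(1+\tfrac2T)$, and this factor must then be absorbed by the $\theta^t$-weighting of the iterate average. One has to check both that the weights still sum to order $T$ (so the normalization is benign) and that the extra additive term $\tfrac{\eta^2T\bar\delta^2}{2}$ it introduces remains of order $\delta_1+\varepsilon$ after the final choice of $\eta$, rather than spoiling the $1/\sqrt T$ rate. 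Everything else -- the second-moment bound, turning \cref{Ass:errors bounded expectation} into a bound on $\|\E{\vec{e}_t\mid\mathcal{F}_t}\|$ via Jensen and the tower rule, and the convexity step -- is routine.
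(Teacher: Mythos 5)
Your proposal is correct and follows essentially the same route as the paper: the paper factors the argument through a standalone biased-SGD theorem for convex Lipschitz functions (\cref{T:CV SMGD for convex lipschitz}) and then verifies that $\vec{m}_t+\vec{e}_t$ has bias $\le\delta_1+\varepsilon$ and bounded variance, but the mechanics there are exactly yours — Young's inequality with parameter $2/(\eta T)$ to absorb the bias cross-term into $(1+\tfrac2T)\,a_t$, geometric reweighting with $\theta=\tfrac{T}{T+2}$, weighted telescoping, Jensen, and the same substitution $\eta=\tfrac{1}{(\delta_1+\varepsilon)T+\sqrt T}$. The minor differences (bounding the second moment of the combined direction directly rather than splitting bias/variance, and applying Young after taking full expectation via $\sqrt{a_t}$) are immaterial.
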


The proofs follow techniques from the analysis of biased \SGD{} (see e.g.\ \citet{DemMalSokRic23}) and are provided in \cref{sec:missing-proofs}. 
Let us now discuss our main \cref{ass:uniform-bias-variance-bound-median}. Assume for simplicity that $\delta_2=\sigma_2=0$ (for instance if $\ell$ is Lipschitz), that $\vec{e}_t = 0$ (the sample median is computed exactly), and consider that $\vec{g}_t^{(i)} = \nabla \ell(\vec{w}_t) + \vec{\xi}_t^{(i)}$, where $\vec{\xi}_t^{(i)}$ is a noise term. 
Then, \cref{ass:uniform-bias-variance-bound-median}
becomes\footnote{See  \cref{lem:noise-assum-equiv} in the appendix for a proof.} 
\begin{align}\label{eqn:sample-median-noise-cond}
    \E{\|\samplemedian{\ell_p}(\vec{\xi}_t^{(i)})\|^2} \leq \nu^2 < + \infty.
\end{align}
In other words, we need the second moment of the sample median of $\vec{\xi}_t^{(i)}$ to be uniformly bounded.
We want to stress that our choice of using the median is crucial when the $\vec{\xi}_t^{(i)}$ come from a heavy-tailed distribution, such as a (univariate) standard $\alpha$-stable distribution, with $\alpha\in (1,2)$.
Indeed, in this case we know that the variance of the sample median is \emph{finite} for sufficiently large sample size $n$ (see \citet[Theorem 2.2]{Bickel1967} and \citet[Theorem 1.2]{Nolan2020})\footnote{We know more:
The sample median of a parent distribution with median $\xi$ is asymptotically normal with location $\xi$ \citep{Chu1955}. For  Cauchy distributions ($\alpha=1$), the sample median has a finite variance for $n\geq 5$ \citep[Remark 1]{Chu1955}, while first and second moment of the sample mean are undefined.}.
On the contrary, taking $\vec{m}_t$ as the usual sample \emph{mean} causes trouble, as the variance of the sample mean is \emph{infinite}\footnote{This is due to the fact that, by definition of $\alpha$-stability, the sample mean of i.i.d sample follows the same $\alpha$-stable distribution (up to affine translation).} everywhere. 
See  \cref{fig:moment-sample-median-mean} for an illustration of this phenomenon.

This simple discussion illustrates that \cref{ass:uniform-bias-variance-bound-median} can be satisfied under mild conditions on the noise. 
We exploit this in the next corollary, which shows that the $\ell_1$-sample median can even accommodate state-dependent noise. 
Let us define first our model for such noise.

\begin{restatable}{assumption}{Assstablenoise}
\label{Ass:oracle with linear stable noise}
    For every $\bw \in \mathbb{R}^d$, 
    we have that
    $\bg = \nabla \ell(\vec{w}) + \vec{\Sigma}(\bw)\vec{\zeta}$, 
    where $\vec{\Sigma}(\bw) \in \mathbb{R}^{d \times d}$ is a matrix and $\vec{\zeta}$ is a random vector whose components are drawn i.i.d.\ from a standard symmetric $\alpha$-stable distribution with $\alpha~>~1$.
    There exists constants $c_1, c_2 >0$ such that
    $$\Vert \vec{\Sigma}(\vec{w})\Vert_F^2 \leq c_1 + c_2 \|\nabla \ell(\vec{w})\|^2 \quad \mbox{for all  } \vec{w} \in \R^d.$$
\end{restatable}
In the recent literature (see e.g. \citet{zhang2020adaptive,Puchkin2023,sadiev2023high}), a commonly made assumption is that $\E{\|\vec{g} - \nabla \ell (\vec{w})\|^p} \leq \sigma^2$ for some $p < 2$ where $\sigma^2 >0$ cannot depend on $\vec{w}$. On the contrary, \cref{Ass:oracle with linear stable noise} can accommodate much stronger noise. To illustrate this, consider $d=1$ and $\ell(\vec{w}) = \vec{w}^2/2$. \cref{Ass:oracle with linear stable noise} allows the noise model $\vec{\Sigma}(\vec{w})\vec{\zeta}$ with $\vec{\Sigma}(\vec{w}) = \sqrt{c_1+c_2\vec{w}^2}$, which results in strong noise, since $\vec{\zeta}$ has infinite variance if $\alpha<2$, 
and this is amplified by $\vec{\Sigma}(\vec{w})$ when $\vec{w}$ is large. Such strong noise makes vanilla \SGD{} diverge, as we will illustrate in our experiments.

\begin{restatable}{corollary}{corstableshort}\label{T:CV SMGD stable noise}
Let $n \geq 3$ be odd and $p=1$. 
Consider the iterates from \eqref{eqn:approx-median-descent} with $\vec{e}_t = 0$, and suppose that $\bg_t^{(i)} = \nabla \ell(\vec{w}_t) + \vec{\Sigma}(\bw_t)\vec{\zeta}_t^{(i)}$ verifies \cref{Ass:oracle with linear stable noise}. 
\begin{enumerate}[topsep=0pt,label=(\roman*)]
    \item If $\ell$ is $L$-smooth and $\eta = \mathcal{O}(\tfrac{1}{\sqrt{T}})$, then we have that $  \min_{0\leq t\leq T-1} \E{\|\nabla \ell(\bw_t)\|^2} = \mathcal{O}(1/\sqrt{T}) $. 
    \item If $\ell$ is convex and Lipschitz and $\eta = 1/\sqrt{T}$ then we have that $\mathbb{E}\left[ \ell(\bar \bw_T) - \inf \ell \right] =  \mathcal{O}(1/\sqrt{T}) $, where $\bar \bw_T$ is defined as in \cref{P:CV SMGD convex lipschitz}.
\end{enumerate}
\end{restatable}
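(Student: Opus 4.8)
The strategy is to derive \Cref{T:CV SMGD stable noise} as a consequence of \Cref{prop:approx-median-descent} and \Cref{P:CV SMGD convex lipschitz}, by showing that \Cref{Ass:oracle with linear stable noise} implies \Cref{ass:uniform-bias-variance-bound-median} (with $\vec{e}_t=0$, so \Cref{Ass:errors bounded expectation} holds trivially with $\varepsilon=0$). Since $p=1$, the sample $\ell_1$-median is computed componentwise, so it suffices to work coordinate by coordinate. Fix $\bw$ and write $\bg^{(i)} = \nabla\ell(\bw) + \mS(\bw)\vec{\zeta}^{(i)}$. Because $\vec{\zeta}^{(i)}$ has i.i.d.\ standard symmetric $\alpha$-stable components and the $\alpha$-stable family is closed under linear combinations, each coordinate $(\mS(\bw)\vec{\zeta}^{(i)})_j$ is symmetric $\alpha$-stable with scale $s_j(\bw) = \big(\sum_k \mS(\bw)_{jk}^\alpha\big)^{1/\alpha}$ — or, more conveniently, I would just bound its scale by a multiple of the $j$-th row norm; the precise constant does not matter. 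Hence coordinate $j$ of $\bg^{(i)}$ is $\big(\nabla\ell(\bw)\big)_j + s_j(\bw)\,\eta^{(i)}_j$ with $\eta^{(i)}_j$ i.i.d.\ standard symmetric $\alpha$-stable.

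The key estimate is then: for the sample median $\widehat{\eta}_j$ of $n$ i.i.d.\ standard symmetric $\alpha$-stable variables ($n\ge 3$ odd, $\alpha>1$), we have $\E{\widehat{\eta}_j} = 0$ by symmetry and $\E{\widehat{\eta}_j^{\,2}} \le \nu_n^2 < \infty$, where $\nu_n$ is a finite constant depending only on $n$ and $\alpha$; this finiteness for $n\ge 3$ (in fact $n\ge 5$ for Cauchy) is exactly the classical fact cited in the paper from \citet{Bickel1967,Chu1955,Nolan2020}. By translation- and scale-equivariance of the median, the sample median of coordinate $j$ equals $\big(\nabla\ell(\bw)\big)_j + s_j(\bw)\,\widehat{\eta}_j$, so $\E{\m_j} = (\nabla\ell(\bw))_j$, giving $\delta_1=\delta_2=0$ for the bias bound, and $\E{(\m_j - \E{\m_j})^2} = s_j(\bw)^2\,\E{\widehat{\eta}_j^{\,2}} \le \nu_n^2\, s_j(\bw)^2$. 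Summing over $j$ and using $\sum_j s_j(\bw)^2 \le (\text{const})\,\|\mS(\bw)\|_F^2 \le (\text{const})(c_1 + c_2\|\nabla\ell(\bw)\|^2)$ from \Cref{Ass:oracle with linear stable noise}, I obtain the variance bound in \eqref{ass:noise-approx-median} with $\sigma_1^2 = (\text{const})\,c_1$ and $\sigma_2 = (\text{const})\,c_2$, and $\delta_1=\delta_2=0$.

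With \Cref{ass:uniform-bias-variance-bound-median} verified (note $\delta_2 = 0 \le \tfrac18$ automatically) and $\varepsilon=0$, part (i) follows immediately from \Cref{prop:approx-median-descent}: with $\eta = \Theta(1/\sqrt T)$ the bound $\mathcal{O}(\tfrac{1}{\eta T} + \eta + \delta_1^2 + \varepsilon^2)$ becomes $\mathcal{O}(1/\sqrt T)$. For part (ii), when $\ell$ is convex and $G$-Lipschitz we have $\|\nabla\ell(\bw)\|\le G$, so one can fold the $\|\nabla\ell(\bw)\|^2$ terms into the constants and take $\sigma_2 = \delta_2 = 0$ as in the statement of \Cref{P:CV SMGD convex lipschitz}; applying that proposition with $\delta_1 = \sigma_1/\ldots$ absorbed appropriately and $\varepsilon = 0$, the rate $\mathcal{O}(1/\sqrt T + \delta_1 + \varepsilon)$ — wait, here $\delta_1 = 0$, so it is simply $\mathcal{O}(1/\sqrt T)$, and the prescribed step size $\eta = 1/\sqrt T$ matches $\eta = \tfrac{1}{(\delta_1+\varepsilon)T + \sqrt T}$ when $\delta_1 = \varepsilon = 0$.

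\textbf{Main obstacle.} The only non-routine point is the second-moment bound $\E{\widehat{\eta}_j^{\,2}} < \infty$ for the sample median of heavy-tailed $\alpha$-stable variables with a clean dependence on $n$ and $\alpha$ but \emph{not} on $\bw$; this is where the oddness of $n$ and the threshold $n\ge 3$ (resp.\ $n \ge 5$ in the Cauchy case) enter, and it rests on the cited asymptotic-normality/finite-moment results for order statistics of stable laws. Everything else — the coordinatewise reduction, scale/translation equivariance of the median, closure of $\alpha$-stable laws under linear maps, and the bookkeeping to pass from the per-coordinate bound to \Cref{ass:uniform-bias-variance-bound-median} — is straightforward, and the convergence rates are then a black-box application of the two preceding propositions.
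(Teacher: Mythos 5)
Your proposal is correct and follows essentially the same route as the paper: reduce the $\ell_1$-sample median to coordinatewise medians, get zero bias by symmetry and a $\bw$-independent second-moment constant for the median of standard symmetric $\alpha$-stable samples (finite for odd $n\ge 3$ when $\alpha>1$), scale by the row $\ell_\alpha$-norms, pass to the Frobenius bound of \cref{Ass:oracle with linear stable noise}, and then invoke \cref{prop:approx-median-descent} and \cref{P:CV SMGD convex lipschitz} as black boxes with $\delta_1=\delta_2=0$ and $\varepsilon=0$ (this is exactly the paper's \cref{stable noise implies finite bias variance v2}, which it proves by the explicit Maritz--Jarrett moment formula rather than by citing the classical finiteness results, a presentational rather than substantive difference). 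The only nitpicks are the missing absolute values in the scale $s_j(\bw)=\bigl(\sum_k \vert\mS(\bw)_{jk}\vert^\alpha\bigr)^{1/\alpha}$ (which your hedge already covers) and that in part (i) the constant in $\eta=\Theta(1/\sqrt T)$ must respect the step-size cap $\eta\le \tfrac{1}{8L(1+2\sigma_2+2\delta_2)}$, as in the paper's choice.
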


This result illustrates how powerful the sample median can be when used in gradient-based optimization.
However, \SMGD{} \eqref{eqn:approx-median-descent} has clear drawbacks: (i) it requires to draw multiple samples per iteration (multiple calls of the oracle), and (ii) we need to compute the sample median up to some tolerance. For instance, the geometric median can not be computed in closed form and there exists a vast body of literature on how to approximate it~\citep{Cohen2016}. Moreover, the noise still needs to be symmetric in order for the median to denoise correctly (cf.\ \cref{Ass:oracle with linear stable noise} and \cref{fig:toy-example-ablation} (right)).

Nevertheless, our convergence guarantees for \SMGD{} allow us to be optimistic that the strong performance in the heavy-tailed setting will also transfer to the online drop-in replacements (of \eqref{eqn:approx-median-descent}), which we had previously presented in \cref{sec:grad-est-in-the-wild}. We investigate this in the following  experiments section.

\begin{figure}
    \centering
    \begin{subfigure}[b]{0.49\columnwidth}  
        \includegraphics[width=0.99\textwidth]{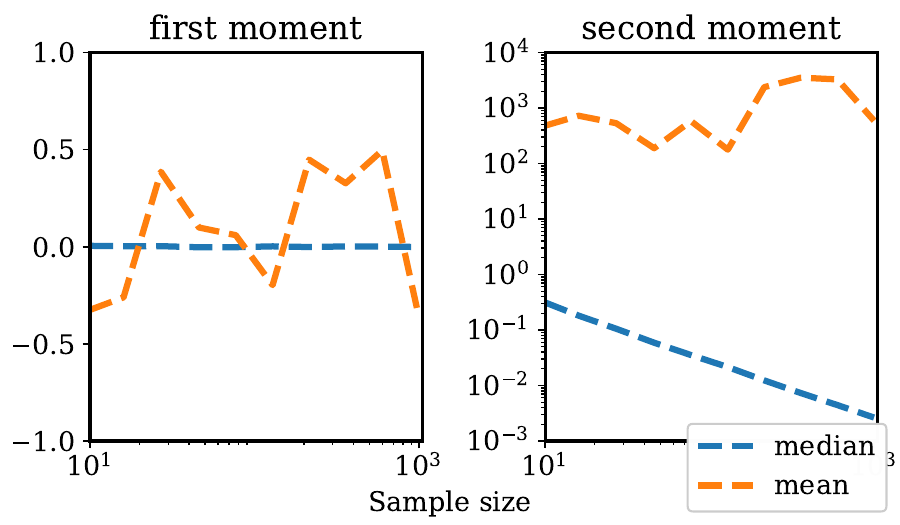}
        \caption{$\alpha=1.1$}
    \end{subfigure}
    \begin{subfigure}[b]{0.49\columnwidth}  
        \includegraphics[width=0.99\textwidth]{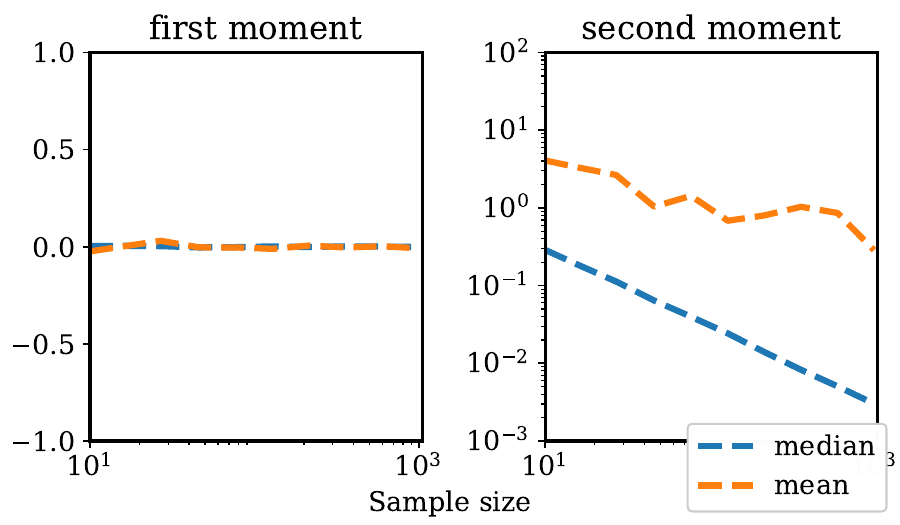}
        \caption{$\alpha=1.5$}
    \end{subfigure}
    \captionof{figure}{First and second moment of sample median/mean for standard $\alpha$-stable distribution. We approximate $\mathbb{E}$ by running $10\,000$ trials. The sample median has much smaller variance (cf.\ \eqref{eqn:sample-median-noise-cond}) compared to the sample mean.}
    \label{fig:moment-sample-median-mean}
\end{figure}

%%%%%%%%%%%%%%%%%%%%%%%%%%%%%%%%%%%%%%%%%%%%%%

\section{Experiments}\label{sec:experiments}
We compare the iterative gradient estimation techniques for two experimental setups, namely (i) estimating a vector/gradient from a fixed distribution, and (ii) methods of the form \eqref{eq:online-update}, that is, iterative gradient estimation while simultaneously learning the weights.

We study both settings for synthetic data where we can control the level of heavy-tailedness, as well as for language modelling tasks with transformer models.  We consider the same three language modeling tasks as studied in \citep{kunstner2023noise}: an encoder-only transformer for the \texttt{PTB} dataset, a Transformer-XL model for the \texttt{WikiText-2} dataset, and fine-tuning a \texttt{DistillBERT} model for question-answering on the \texttt{SQuAD} dataset.
We refer to \cref{sec:app-experiments-supp} for details.

In the following, we use \VC{} to refer to vectorwise clipping~\eqref{eq:vclip-update} and \CC{} for componentwise clipping~\eqref{eq:cclip-update}.
\subsection{Gradient Estimation with Fixed Weights}\label{sec:exp-fixed-weight}
\paragraph{Synthetic data.}
In the first experiment, we verify the hypothesis that the choice of $\dist$ matters for problem \eqref{prob:gradient-learning-3}; in particular, we show that estimating the median is more stable when the noise distribution is heavy-tailed. 

\emph{Setup.} We consider the problem of estimating a fixed vector $\vec{\hat{g}}$
from the oracle $\vec{g}\sim \vec{\hat{g}} +\vec{\xi}$ with $\vec{\xi}\sim \mathcal{P}$, under varying degree of heavy-tailedness of $\mathcal{P}$. For this purpose, we choose the standard, unskewed $\alpha$-stable distribution $\mathcal{P}_{\alpha}$.
We generate $\vec{\hat{g}}\in \R^d$ with $d=10$ where each component is generated i.i.d standard Gaussian. 
In each iteration, a sample $\bg_t$ is generated as follows: each coordinate of $(\bg_t)_i$ is (independently) sampled from $\mathcal{P}_{\alpha}$ with location $(\hat \bg)_i$ and varying values for $0<\alpha\leq 2$.
Importantly, the median of $\mathcal{P}_{\alpha}$ is equal to the location (i.e.\ $\vec{\hat{g}}$) for all values of $\alpha$. Recall that for $\alpha \leq 1$ the mean of $\mathcal{P}_{\alpha}$ is not defined, and otherwise equal to the median.

We run the \SPP{} method \eqref{eqn:online-spp-update} for $\dist\in \{\tfrac12 \|\cdot\|^2,\|\cdot\|_2,\|\cdot\|_1 , H_{\mu} \}$, with a fixed step size $\tau =0.01$ and track the relative $\ell_2$-error $\frac{\|\vec{m}_t - \vec{\hat{g}}\|_2}{\|\vec{\hat{g}}\|_2}$. 
The corresponding methods are called momentum, \VC{}, \CC{}, and \HC{}, where we set $\mu=1.345$ according to \citet{Huber1981}.
For each method and distribution, we run $1000$ iterations and $50$ different seeds.

\emph{Discussion.} From \cref{fig:toy-example} we find that as the noise becomes more heavy-tailed (as $\alpha$ decreases), momentum fails to produce accurate estimates of $\vec{\hat{g}}$. On the other hand, both \VC{} and \CC{} are robust to heavy tails, as expected, as we showed that \VC{} and \CC{} are estimating the median. Notably \CC{} becomes more accurate (relatively) as $\alpha$ decreases. The Huber function produces results very similar to \VC{}, but slightly inferior. We refer to \cref{fig:toy-example-ablation} for additional ablations on the choice of $\tau$ and skewed noise.

\begin{figure}
    \centering
    \begin{subfigure}[b]{0.38\columnwidth}  
        \includegraphics[width=0.99\textwidth]{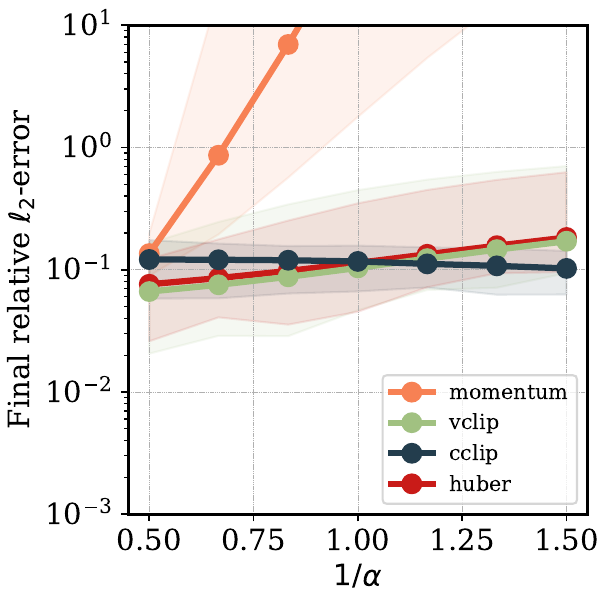}
        %\caption{}
    \end{subfigure}
    \begin{subfigure}[b]{0.55\columnwidth}  
        \includegraphics[width=0.99\textwidth]{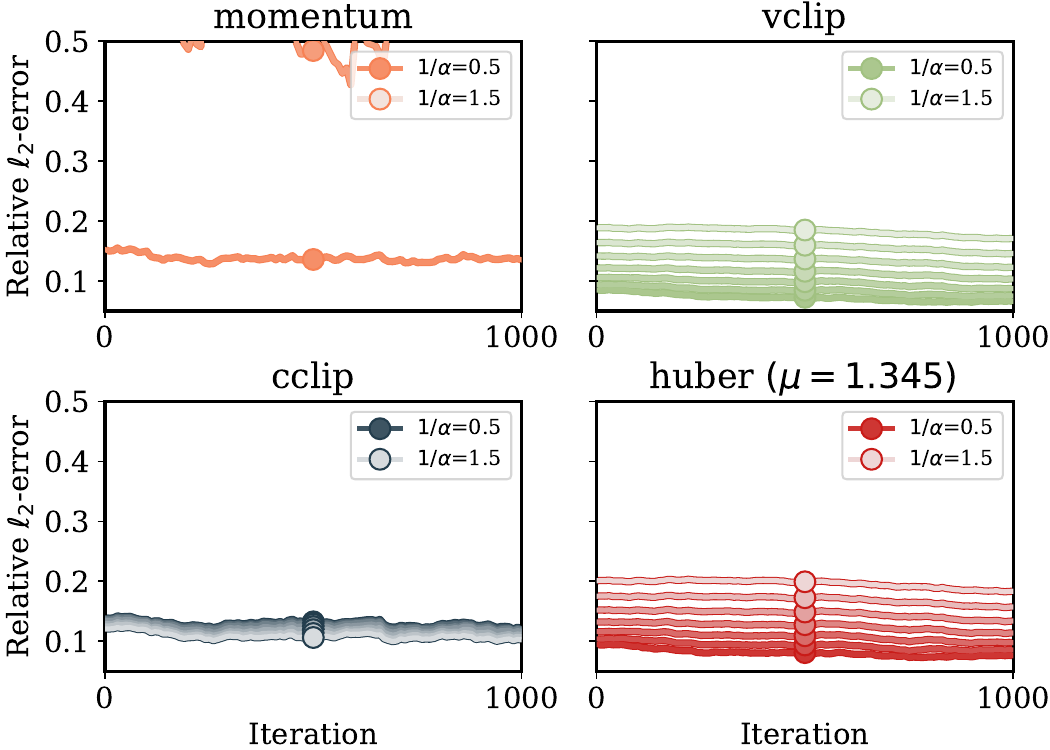}
        %\caption{}
    \end{subfigure}
    \caption{$\tau=0.01$ \textbf{Left:} Final error for varying values of $\alpha$ (from left to right, distributions are more heavy-tailed). Shaded area marks minimal and maximal value over the $50$ independent runs. 
    \textbf{Right:} Convergence plot for all methods for $\tfrac{1}{\alpha}\in[0.5,1.5]$ (higher value of $\tfrac{1}{\alpha}$ corresponds to heavier tails).}
    \label{fig:toy-example}
\end{figure}

\paragraph{Language modelling.}
We also run momentum, \VC{} and \CC{} in order to estimate the full gradient for the language modeling tasks, where the weights are \emph{fixed at initialization}. 
The setup is identical to the one described by \citet{kunstner2023noise}, in particular Figure 1 therein. 
We present the results in \cref{sec:app-fixed-weight-transformer}.

%%%%%%%%%%%%%%%%%%%%%%%%%%%%%%%%%%%%%%%%%%%%%%%%%%%%%%
%%%%%%%%%%%%%%%%%%%%%%%%%%%%%%%%%%%%%%%%%%%%%%%%%%%%%%

\subsection{Training with Online Gradient Estimation}\label{sec:experiments-training}

We now present comparisons of various (online) gradient estimators when simultaneously training the weights, that is the distribution of gradients changes in each iteration.

\paragraph{Least squares with heavy-tailed noise.} We consider the noise-perturbed least-squares problem $\min_{\vec{w}\in \R^d} \tfrac12 \| \vec{w} \|^2$
% \begin{equation}
%     ,
% \end{equation}
and use a gradient oracle given by $\vec{g} \sim \vec{w} + \vec{\xi} \in \R^d$, where $\vec{\xi} \sim \mathcal{P}$ is the noise term.
We consider three setups:
\begin{enumerate}[label=(S\arabic*)]
    \item \label{item:independent} The $d$ components of $\vec{\xi}$ are i.i.d.\ distributed according to a standard $\alpha$-stable distribution with location zero and $\alpha=1.1$. Note that $\EE{\vec{\xi} \sim \mathcal{P}}{\vec{\xi}} = \vec{0}$.
    \item \label{item:independent-state} Same as \ref{item:independent}, but the noise is state-dependent, given by $\vec{\xi}_t = \sigma \cdot \vec{\xi}$ where $\sigma = \sqrt{1+\|\vec{w}_t\|^2}$.
    \item \label{item:dependent} $\vec{\xi}$ follows an elliptically contoured $\alpha$-stable distribution \citep{Nolan2013} with location zero and $\alpha=1.1$. In particular, the components of $\vec{\xi}$ are not independent.
\end{enumerate}

We run seven different methods: on the one hand, the method depicted in \cref{sec:sample-median-methods}, \eqref{eqn:approx-median-descent}, where in each iteration we draw $n=5$ samples. We run three variations of \eqref{eqn:approx-median-descent}, namely using the $\ell_1$- and $\ell_2$-sample median as well as the sample mean for $\vec{m}_t$. The $\ell_2$-median is approximately computed using the method proposed in \citet{Vardi2000}. 

\begin{figure}
    \centering
    \includegraphics[width=0.7\columnwidth]{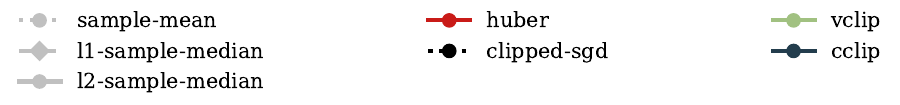}
    \begin{subfigure}[b]{0.32\columnwidth}  
        \includegraphics[width=0.99\textwidth]{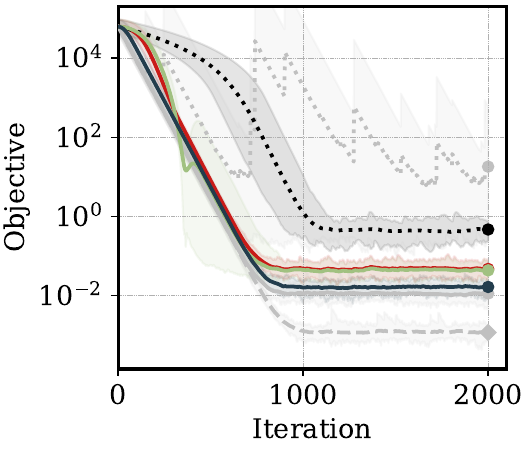}
        \caption{Setting \ref{item:independent}}
    \end{subfigure}
    \begin{subfigure}[b]{0.32\columnwidth}  
        \includegraphics[width=0.99\textwidth]{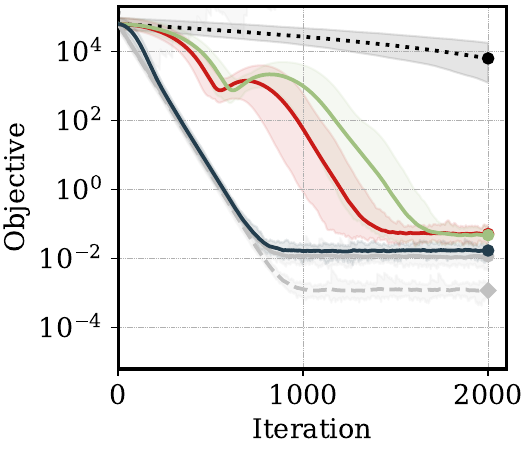}
        \caption{Setting \ref{item:independent-state}}
    \end{subfigure}
    \begin{subfigure}[b]{0.32\columnwidth}  
        \includegraphics[width=0.99\textwidth]{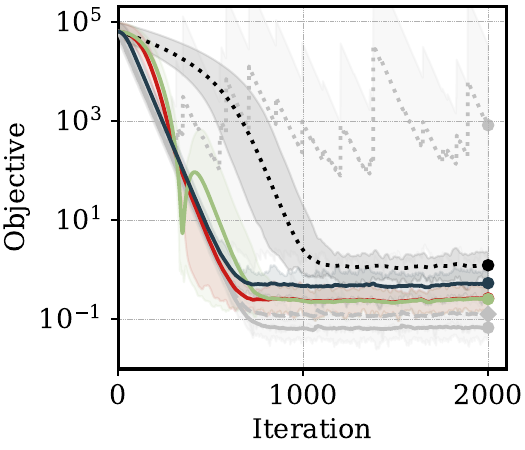}
        \caption{Setting \ref{item:dependent}}
    \end{subfigure}
    \caption{Least squares with heavy-tailed noise that is independent over components (\textbf{left}), state-dependent (\textbf{middle}), and where components of noise are dependent (\textbf{right}). 
    All methods in gray use $n=5$ samples per iteration, all others only one.
    Note that for \ref{item:independent-state}, the sample mean \textbf{immediately diverges}, and hence does not appear in the plot.
    Shaded area depicts minimum and maximum value over 50 repetitions.}
    \label{fig:least-squares-example}
\end{figure}

On the other hand, we run our online median estimator methods \eqref{eq:online-update}, with $\dist \in \{\|\cdot\|_2,~ \|\cdot\|_1, ~H_\mu\}$, that is, \VC{}, \CC{}, and \HC{}. Importantly, these methods only receive \emph{one} sample per iteration, and they do not involve any median computations (in contrast to $\ell_1$-  and $\ell_2$-sample median).
We also run \texttt{clipped-SGD} (cf.\ \eqref{eqn:clipped-sgdm}) with $\beta=0.9$ and $c=50$.\footnote{This choice is a tradeoff between slow convergence (large $c$) and high instability (small $c$).} We run all methods with a learning rate $\eta_t=0.01$. For \VC{}, \CC{}, and \HC{} we set $\tau=1$, and we set again $\mu=1.345$ for \HC{}. We always run 50 repetitions and report averaged metrics.

\emph{Discussion.} \cref{fig:least-squares-example} shows that both $\ell_1$- and $\ell_2$-sample median attain the smallest objective, while the sample mean does not converge as predicted by our theory (see discussion after \cref{prop:approx-median-descent}). From the online methods, \CC{} performs best in the scenario where the noise is componentwise independent, while \VC{} and \HC{} are better in the other case. Overall, \HC{} and \VC{} behave again very similarly (a more detailed analysis of \HC{} with different values of $\mu$ can be found in \cref{fig:huber-mu-ablation}). We find that both \VC{} and \CC{} reach much smaller loss values than clipping the stochastic gradient directly, and thus clipping only the increment is better suited here. We also find that the $\ell_1$-sample median can deal with state-dependent noise \ref{item:independent-state}  as predicted by our theory. 

\paragraph{Language modeling.} We now consider the online methods \eqref{eq:online-update}, that is training while simultaneously estimating the gradient, for the three language modeling tasks.
We compare \SGDM{}, \VC{} and \CC{}, and \texttt{clipped-SGD}.\footnote{We also tried \HC{} (with $\mu=1.3$) on these problems, but did not observe an advantage over \VC{} (plots not shown).} We also add \texttt{Adam} as a baseline, but focus on the comparison among the \SGD{}-type methods. For all methods, we tune the learning rate on a $\log_{10}$-scaled grid (tuned values reported in \cref{tab:transformer-lr-values}), displaying the one that attained smallest final training loss (averaged over $5$ seeds). 

We choose a standard momentum/clipping parameters for all tasks (without tuning): we set~$\beta=0.9$ for \SGDM{},~$\tau=0.1$ for \texttt{V/CClip}, and~$\beta=0.9,~c=1$ for \texttt{clipped-SGD}.

\emph{Discussion.} \cref{fig:transformer-training} shows that \VC{} improves over \SGDM{} for \texttt{PTB} and \texttt{WikiText-2}, and is on par for \texttt{SQuAD}. \CC{} performs always worse, which might be due to the noise being not componentwise independent.
However, \VC{} does not close the gap to \texttt{Adam}, and also performs worse than \texttt{clipped-SGD} for the \texttt{WikiText-2} dataset.

\begin{figure}
    \centering
    \includegraphics[width=0.7\columnwidth]{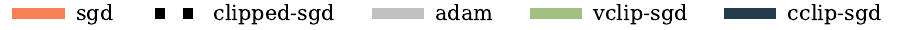}
    \begin{subfigure}[b]{0.33\columnwidth}  
        \includegraphics[width=0.99\textwidth]{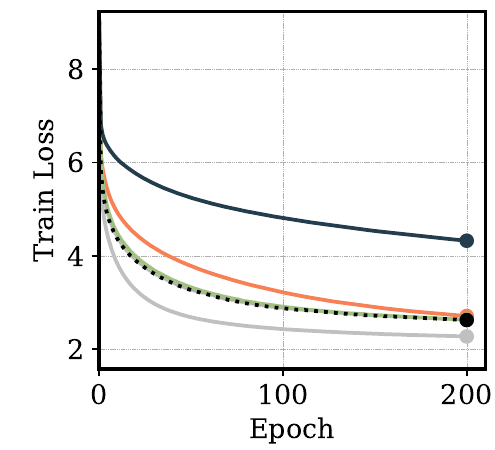}
        \caption{\texttt{PTB}}
    \end{subfigure}
    \begin{subfigure}[b]{0.3\columnwidth}  
        \includegraphics[width=0.99\textwidth]{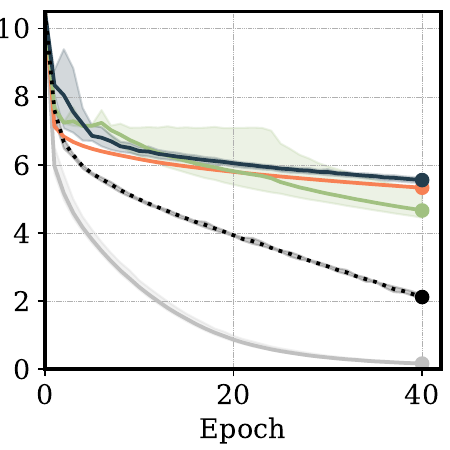}
        \caption{\texttt{WikiText-2}}
    \end{subfigure} 
    \begin{subfigure}[b]{0.3\columnwidth}  
        \includegraphics[width=0.99\textwidth]{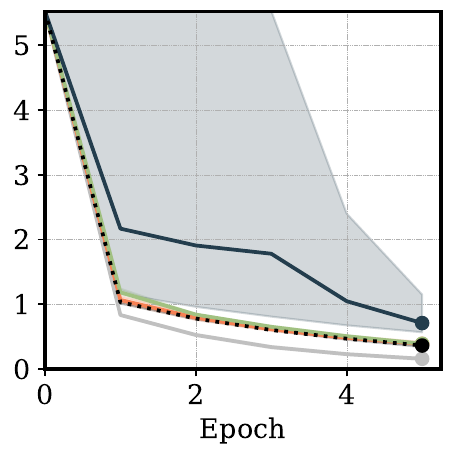}
        \caption{\texttt{SQuAD}}
    \end{subfigure}
    \caption{Training loss for each method, with tuned learning rate. Shaded area depicts minimum and maximum value over 5 seeds.}
    \label{fig:transformer-training}
\end{figure}
\begin{figure}
    \centering
    \includegraphics[width=0.7\columnwidth]{figures/transformer_example/trainer/legend.pdf}
    \begin{subfigure}[b]{0.3\columnwidth}  
        \includegraphics[width=0.99\textwidth]{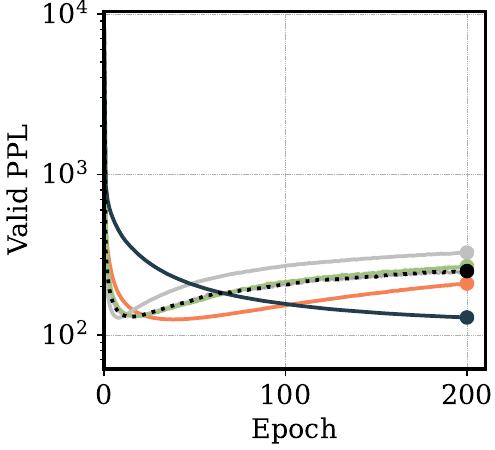}
        \caption{\texttt{PTB}}
    \end{subfigure}
    \begin{subfigure}[b]{0.3\columnwidth}  
        \includegraphics[width=0.99\textwidth]{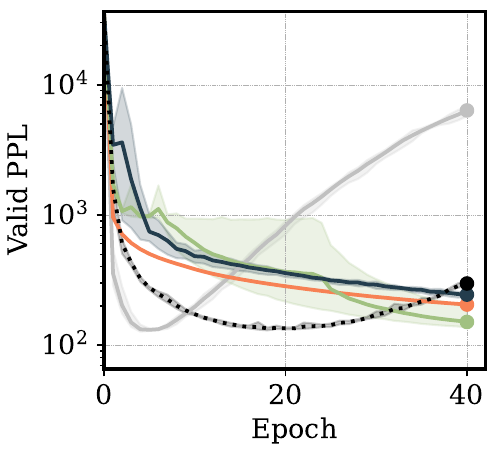}
        \caption{\texttt{WikiText-2}}
    \end{subfigure} 
    \begin{subfigure}[b]{0.3\columnwidth}  
        \includegraphics[width=0.99\textwidth]{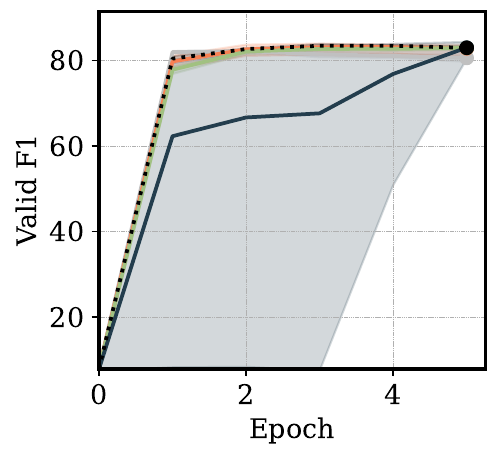}
        \caption{\texttt{SQuAD}}
    \end{subfigure}
    \caption{Validation score (measured as perplexity, or F1 score) for each method, with tuned learning rate. Shaded area depicts minimum and maximum value over 5 seeds.}
    \label{fig:transformer-training-valid}
\end{figure}
%%%%%%%%%%%%%%%%%%%%%%%%%%%%%%%%%%%%%%%%%%

\section{Conclusion}

We provide a new approach for robust gradient estimation using the stochastic proximal point method, with a focus on (online) estimators of the median gradient.
We observe that this general framework intersects with many existing works on robust stochastic optimization, where the connection between clipping and median estimation was not known previously. 

We provide an analysis of \SGD{} using the median of sampled gradients (called \SMGD{}), and show convergence guarantees under potentially heavy-tailed and state-dependent noise. In contrast, under the same assumption, \SGD{} with the sample mean does not converge.

Finally, numerical experiments show that online versions of \SMGD{}, even though not covered by this theory, perform equally well on problems with heavy-tailed noise, while methods based on mean estimation fail. However, when training transformer models on language tasks, this gap is far less pronounced; this could indicate that the level of heavy-tailedness in these applications is moderate.

\subsubsection*{Acknowledgments}
We thank the Scientific Computing Core at the Flatiron Institute, a division of the Simons Foundation, for the compute facilities and support.
U.\c{S}. and F.S. are partially supported by the French government under the
management of Agence Nationale de la Recherche as part of the “Investissements d’avenir” program,
reference ANR-19-P3IA-0001 (PRAIRIE 3IA Institute), and the
European Research Council Starting Grant DYNASTY – 101039676.
G.G. gratefully acknowledges the Flatiron Institute for their hospitality and financial support during the visit when this work was written.

\clearpage

\bibliography{references}
\bibliographystyle{tmlr}

\clearpage

\tableofcontents

\appendix

\section*{Appendix}

\section{Missing Proofs for \cref{sec:sample-median-methods}}\label{sec:missing-proofs}

\subsection{On the Assumptions on the Sample Median}

We first state a simple fact about Lipschitz functions:
\begin{lemma}\label{L:oracle purely bouded if lipschitz objective}
    If \cref{ass:uniform-bias-variance-bound-median} holds, and if $\ell$ is Lipschitz continuous, then without loss of generality we can assume that $\delta_2 = \sigma_2=0$.
\end{lemma}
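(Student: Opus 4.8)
The statement is essentially a one-line observation, so the proof proposal is short. The key fact I would invoke is that a $G$-Lipschitz (differentiable) function has uniformly bounded gradient: $\|\nabla \ell(\bw)\| \le G$ for every $\bw \in \R^d$. (In the convex-Lipschitz setting of \cref{P:CV SMGD convex lipschitz} where $\ell$ need not be differentiable, the same bound holds for every subgradient, and one reads $\nabla \ell(\bw)$ as such a subgradient.) With this in hand, the gradient-dependent terms on the right-hand sides of \eqref{ass:noise-approx-median} are themselves bounded by constants.

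Concretely, I would argue as follows. Suppose \cref{ass:uniform-bias-variance-bound-median} holds with constants $\sigma_1,\sigma_2,\delta_1,\delta_2 \ge 0$, and let $G$ be a Lipschitz constant of $\ell$. Then for every $\bw$,
\begin{align*}
\Vert \E{\vec{m}} - \nabla \ell(\vec{w}) \Vert^2 &\leq \delta_1^2 + \delta_2 \|\nabla \ell(\bw)\|^2 \le (\delta_1^2 + \delta_2 G^2) + 0\cdot\|\nabla \ell(\bw)\|^2, \\
\E{ \Vert \vec{m} - \E{\vec{m}} \Vert^2  } &\leq \sigma_1^2 + \sigma_2 \|\nabla \ell(\bw)\|^2 \le (\sigma_1^2 + \sigma_2 G^2) + 0\cdot\|\nabla \ell(\bw)\|^2.
\end{align*}
Hence \cref{ass:uniform-bias-variance-bound-median} also holds with the new constants $\tilde\delta_1^2 := \delta_1^2 + \delta_2 G^2$, $\tilde\sigma_1^2 := \sigma_1^2 + \sigma_2 G^2$, and $\tilde\delta_2 = \tilde\sigma_2 = 0$. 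Replacing $(\delta_1,\delta_2,\sigma_1,\sigma_2)$ by $(\tilde\delta_1,0,\tilde\sigma_1,0)$ throughout gives the claimed reduction, and in particular the requirement $\delta_2 \le \tfrac18$ in \cref{prop:approx-median-descent} is then trivially met.

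\textbf{Expected obstacle.} There is no real obstacle here; the only thing to be careful about is the differentiability caveat (Lipschitz versus smooth) so that the phrase "$\nabla \ell(\bw)$" is well-defined in both regimes where the lemma is later used, and to note explicitly that the step-size and parameter constraints appearing in the downstream propositions are preserved (indeed relaxed) under this substitution, so that invoking the lemma costs nothing.
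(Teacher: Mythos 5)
Your proposal is correct and matches the paper's own argument: both bound $\|\nabla \ell(\bw)\|$ (or any subgradient) by the Lipschitz constant $G$ and absorb the gradient-dependent terms into the constants, replacing $\delta_1^2$ with $\delta_1^2 + \delta_2 G^2$ and $\sigma_1^2$ with $\sigma_1^2 + \sigma_2 G^2$. Your added remarks on the subgradient reading in the nonsmooth case and on the downstream constraint $\delta_2 \le \tfrac18$ are consistent with how the paper uses the lemma.
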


\begin{proof}
    If $\ell$ is $G$-Lipschitz, then for every subgradient $\vec{u} \in \partial \ell(\bw)$ we have $\Vert \vec{u} \Vert \leq G$, so one can replace $\delta_1^2$ with $\delta_1^2 + \delta_2 G^2$ and $\sigma_1^2$ with $\sigma_1^2 + \sigma_2 G^2$.
\end{proof}

In the following proof of \cref{prop:approx-median-descent} we will need an equivalent but more compact formulation of \cref{ass:uniform-bias-variance-bound-median}.
\begin{lemma}\label{lem:noise-assum-equiv}
    Let $\m$ be a random variable in $\mathbb{R}^d$, and consider the following property:
    \begin{equation}\label{ass:approx-median-bounded-variance}
        \E{\Vert \vec{m} - \nabla \ell(\vec{w}) \Vert^2} \leq \nu_1^2 + \nu_2 \|\nabla \ell(\bw)\|^2.
    \end{equation}
     If \eqref{ass:noise-approx-median} holds then \eqref{ass:approx-median-bounded-variance} holds with $\nu_1^2 = \delta_1^2+\sigma_1^2 $ and $\nu_2=\delta_2+\sigma_2$.
     If \eqref{ass:approx-median-bounded-variance} holds, then \eqref{ass:noise-approx-median} holds with $\delta_1=\sigma_1 = \nu_1$ and $\delta_2=\sigma_2=\nu_2$.
\end{lemma}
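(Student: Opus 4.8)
The plan is to reduce everything to the standard bias--variance decomposition. Writing $\vec{u} := \nabla \ell(\vec{w})$ for brevity, the single identity doing all the work is
\[
    \E{\norm{\vec{m} - \vec{u}}^2} \;=\; \norm{\E{\vec{m}} - \vec{u}}^2 \;+\; \E{\norm{\vec{m} - \E{\vec{m}}}^2}.
\]
To see it, I would split $\vec{m} - \vec{u} = (\vec{m} - \E{\vec{m}}) + (\E{\vec{m}} - \vec{u})$, expand the squared norm, and observe that the cross term $\E{\iprod{\vec{m} - \E{\vec{m}}}{\E{\vec{m}} - \vec{u}}}$ vanishes: the vector $\E{\vec{m}} - \vec{u}$ is deterministic, and $\E{\vec{m} - \E{\vec{m}}} = \vec{0}$. (This uses implicitly that $\E{\vec{m}}$ is well-defined, which is part of the hypotheses since they already refer to $\E{\vec{m}}$.)

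For the first implication, I assume \eqref{ass:noise-approx-median} and simply substitute its two bounds --- the one on $\norm{\E{\vec{m}} - \vec{u}}^2$ and the one on $\E{\norm{\vec{m} - \E{\vec{m}}}^2}$ --- into the right-hand side of the identity, obtaining $\E{\norm{\vec{m} - \vec{u}}^2} \leq (\delta_1^2 + \sigma_1^2) + (\delta_2 + \sigma_2)\norm{\vec{u}}^2$, i.e.\ \eqref{ass:approx-median-bounded-variance} with $\nu_1^2 = \delta_1^2 + \sigma_1^2$ and $\nu_2 = \delta_2 + \sigma_2$.

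For the converse, I assume \eqref{ass:approx-median-bounded-variance}. Since both terms on the right-hand side of the identity are nonnegative, each one is bounded above by their sum, hence by $\nu_1^2 + \nu_2 \norm{\vec{u}}^2$. Applying this to the bias term gives the first inequality of \eqref{ass:noise-approx-median} with $\delta_1 = \nu_1$, $\delta_2 = \nu_2$, and applying it to the variance term gives the second with $\sigma_1 = \nu_1$, $\sigma_2 = \nu_2$. I do not anticipate any obstacle here; the only point requiring a word of care is the justification that the cross term vanishes, and everything else is a one-line substitution.
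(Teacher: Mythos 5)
Your proposal is correct and matches the paper's own argument: both rest on the identity $\E{\Vert \vec{m} - \nabla \ell(\vec{w}) \Vert^2} = \Vert \E{\vec{m}} - \nabla \ell(\vec{w}) \Vert^2 + \E{\Vert \vec{m} - \E{\vec{m}} \Vert^2}$, obtained by expanding the square and noting the cross term vanishes, after which the forward direction is substitution and the converse follows from nonnegativity of the two terms. Nothing is missing.
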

\begin{proof}
    Expanding the squares, we get
    \begin{equation*}
        \E{\Vert \vec{m} - \nabla \ell(\vec{w}) \Vert^2}
        =
        \E{\Vert \vec{m} - \E{\vec{m}} \Vert^2}
        +
        \Vert  \E{\vec{m}} - \nabla \ell(\vec{w}) \Vert^2,
    \end{equation*}
    where we used that $\E{\langle \vec{m} - \E{\vec{m}}, \E{\vec{m}} - \nabla \ell(\vec{w}) \rangle} = 0$.
    Now, the first implication is trivial, and the second follows from the positivity of the involved terms.
\end{proof}

We now verify that \cref{ass:uniform-bias-variance-bound-median} holds true for the $\ell_1$-sample median, provided that the noise on the subgradient is linearly depending on an $\alpha$-stable distribution, and that the linear coefficients grow with the norm of the subgradient.

\begin{restatable}{lemma}{propnoisecond}\label{stable noise implies finite bias variance v2}
Let $n \in \mathbb{N}$ be odd and $n \geq 3$. Suppose that the subgradient oracles $\bg^{(i)} = \nabla \ell(\vec{w}) + \vec{\Sigma}(\bw)\vec{\zeta}^{(i)}$ verify Assumption \ref{Ass:oracle with linear stable noise}, for all $i=1,\dots,n$ and set $$\vec{m} = \samplemedian{\ell_1}\left(\bg^{(i)}(\vec{w})\right).$$
Then, \cref{ass:uniform-bias-variance-bound-median} holds with $\delta_1 = \delta_2 =0$, and  $\sigma_1^2 = C_n c_1$, $\sigma_2 = C_n c_2$, for some constant $C_n = \mathcal{O}(1/n)$.
\end{restatable}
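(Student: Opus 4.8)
The plan is to reduce the multivariate statement to a coordinate-wise analysis, since for $p=1$ the $\ell_1$-sample median decouples: the $j$-th component of $\vec{m}$ is simply the one-dimensional sample median of the scalars $\big(\vec{g}^{(i)}\big)_j = \big(\nabla\ell(\vec{w})\big)_j + \big(\vec{\Sigma}(\vec{w})\vec{\zeta}^{(i)}\big)_j$, $i=1,\dots,n$. Because $\vec{\zeta}^{(i)}$ has i.i.d.\ standard symmetric $\alpha$-stable components and $\alpha$-stable distributions are closed under linear combinations, the noise in coordinate $j$, namely $\big(\vec{\Sigma}(\vec{w})\vec{\zeta}^{(i)}\big)_j = \sum_k \vec{\Sigma}(\vec{w})_{jk}\vec{\zeta}^{(i)}_k$, is itself symmetric $\alpha$-stable with scale $s_j := \big(\sum_k \vec{\Sigma}(\vec{w})_{jk}^2\big)^{1/2} = \|\vec{\Sigma}(\vec{w})_{j,:}\|_2$ (here I use the standard scaling property of symmetric $\alpha$-stable variables; I would cite \citet[Prop.\ 1.3]{Nolan2020} or similar). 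Hence coordinate $j$ of $\vec{m}$ equals $\big(\nabla\ell(\vec{w})\big)_j$ plus $s_j$ times the sample median of $n$ i.i.d.\ \emph{standard} symmetric $\alpha$-stable variables.

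Next I would invoke the quantitative fact that, for $n\geq 3$ odd, the sample median $M_n$ of $n$ i.i.d.\ standard symmetric $\alpha$-stable variables with $\alpha>1$ has finite (and, by symmetry, zero-mean) second moment, with $\E{M_n^2} =: \kappa_{n,\alpha} = \mathcal{O}(1/n)$. This is exactly the content cited in the main text after \eqref{eqn:sample-median-noise-cond} (\citet[Theorem 2.2]{Bickel1967}, \citet[Theorem 1.2]{Nolan2020}); the $\mathcal{O}(1/n)$ decay is the asymptotic-normality scaling of the sample median, whose limiting variance is $\tfrac{1}{4 n f(0)^2}$ with $f$ the density at the median. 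I would state this as a cited ingredient rather than reprove it. Combining with the previous paragraph gives, coordinatewise, $\E{\vec{m}_j} = \big(\nabla\ell(\vec{w})\big)_j$ (so $\delta_1=\delta_2=0$) and $\E{(\vec{m}_j - \E{\vec{m}_j})^2} = \kappa_{n,\alpha}\, s_j^2$.

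Finally I sum over $j$: $\E{\|\vec{m}-\E{\vec{m}}\|^2} = \kappa_{n,\alpha}\sum_{j=1}^d s_j^2 = \kappa_{n,\alpha}\sum_{j,k}\vec{\Sigma}(\vec{w})_{jk}^2 = \kappa_{n,\alpha}\,\|\vec{\Sigma}(\vec{w})\|_F^2$, and then apply the growth bound $\|\vec{\Sigma}(\vec{w})\|_F^2 \leq c_1 + c_2\|\nabla\ell(\vec{w})\|^2$ from \cref{Ass:oracle with linear stable noise}. This yields the variance bound in \eqref{ass:noise-approx-median} with $\sigma_1^2 = \kappa_{n,\alpha}\,c_1$ and $\sigma_2 = \kappa_{n,\alpha}\,c_2$, and setting $C_n := \kappa_{n,\alpha} = \mathcal{O}(1/n)$ finishes the proof. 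The main obstacle — really the only non-routine point — is pinning down the $\mathcal{O}(1/n)$ behavior of the second moment of the sample median of a heavy-tailed $\alpha$-stable parent: one must be careful that for finite $n$ (not just asymptotically) the moment is finite, which requires $n\geq 3$ and odd so that the extreme (infinite-variance) order statistics do not contribute, and that the $1/n$ rate genuinely holds; I would handle this by citing the order-statistic moment bounds in \citet{Bickel1967} and, if a self-contained argument is wanted, by bounding $\E{M_n^2}$ via the density of the median order statistic, which decays like $n^{-1/2}$ away from $0$ so that the tail contribution is controlled once $\alpha>1$.
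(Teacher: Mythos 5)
Your overall route is the same as the paper's: reduce to coordinates via the decoupling of the $\ell_1$-sample median, use translation equivariance to remove $\nabla\ell(\vec{w})$, use symmetry of the coordinate noise to get zero bias ($\delta_1=\delta_2=0$), write the coordinate variance as (scale)$^2$ times the second moment of the sample median of $n$ i.i.d.\ standard symmetric $\alpha$-stable variables, and invoke the finiteness of that moment for odd $n\geq 3$, $\alpha>1$, together with the asymptotic-normality variance $\tfrac{1}{4nf(0)^2}$ to get $C_n=\mathcal{O}(1/n)$ (the paper makes these steps explicit through the Maritz--Jarrett density formula for the sample median rather than citing them, but the substance is identical).

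There is, however, one genuine error in your argument: the scale of the coordinate noise. If $\vec{\zeta}^{(i)}$ has i.i.d.\ standard symmetric $\alpha$-stable entries, then $\sum_k \vec{\Sigma}(\vec{w})_{jk}\vec{\zeta}^{(i)}_k$ is symmetric $\alpha$-stable with scale $\bigl(\sum_k |\vec{\Sigma}(\vec{w})_{jk}|^{\alpha}\bigr)^{1/\alpha}$, i.e.\ the $\ell_\alpha$-norm of the $j$-th row, \emph{not} the $\ell_2$-norm $s_j$ you wrote; the $\ell_2$ scaling holds only in the Gaussian case $\alpha=2$. Consequently your identity $\sum_j s_j^2=\norm{\vec{\Sigma}(\vec{w})}_F^2$ does not follow directly: with the correct scales one gets $\sum_j \norm{\vec{\Sigma}(\vec{w})_j}_\alpha^2$, which must be compared to the Frobenius norm via $\norm{\cdot}_\alpha \leq d^{\frac{2-\alpha}{2\alpha}}\norm{\cdot}_2$ for $1<\alpha<2$. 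This introduces a dimension-dependent factor $\rho_\alpha^2=d^{\frac{2-\alpha}{\alpha}}$ that the paper absorbs into $C_n$ (which is harmless for the stated conclusion, since $C_n$ is only required to be $\mathcal{O}(1/n)$, but it does mean your constants $\sigma_1^2,\sigma_2$ are not dimension-free as your derivation suggests). With that correction the rest of your plan goes through and coincides with the paper's proof.
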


\begin{proof}
    
Let us denote the noise vector by $\vec{\xi}^{(i)} = \vec{\Sigma}(\bw)\vec{\zeta}^{(i)}$. By the stability property of $\alpha$-stable distributions \citep[Proposition 1.3]{Nolan2020}, we have that \begin{align}
    \label{eqn:marg_dist}
   {\xi}_k^{(i)}: =\vec{\xi}_k^{(i)}(\vec{w}) = \sum_{j=1}^d [\vec{\Sigma}(\vec{w})]_{kj} \vec{\zeta}^{(i)}_j =^{\mathrm{d}}  \|\vec{\Sigma}(\vec{w})_k\|_{\alpha} z ,
\end{align}
where $=^{\mathrm{d}}$ denotes equality in distribution, $z$ is a standard symmetric $\alpha$-stable random variable in $\mathbb{R}$, and $\vec{\Sigma}(\vec{w})_k$ denotes the $k$-th row of $\vec{\Sigma}(\vec{w})$.
Observe that from \cref{Ass:oracle with linear stable noise} we can write that
\begin{equation}\label{eqn:frobenius to alpha norm}
    \sum_{i=1}^d \|\vec{\Sigma}(\vec{w})_i\|_{\alpha}^2 
    \leq
    \rho_\alpha^2\sum_{i=1}^d \|\vec{\Sigma}(\vec{w})_i\|_{2}^2 
    =
    \rho_\alpha^2\Vert \vec{\Sigma}(\vec{w}) \Vert_F^2
    \leq 
    c_1' + c_2' \|\nabla \ell(\vec{w})\|^2, \quad \mbox{for all  } \vec{w} \in \R^d.
\end{equation}
by setting $c_1' =\rho_\alpha^2 c_1$ and $c_2 = \rho_\alpha^2 c_2$, and where $\rho_\alpha>0$ is the best constant such that $\Vert \cdot \Vert_\alpha \leq \rho_\alpha \Vert \cdot \Vert_2$, which is $\rho_\alpha = d^{\tfrac{2-\alpha}{2\alpha}}$ is for $\alpha \leq 2$, and $\rho_\alpha=1$ otherwise.

We start by computing the bias term that is required by \cref{ass:uniform-bias-variance-bound-median}.
\begin{align*}
   \Vert \E{\vec{m}} - \nabla \ell(\vec{w}) \Vert^2 =& \Vert \E{ \samplemedian{\ell_1}\left(\nabla \ell(\vec{w}) + \vec{\xi}^{(i)}(\vec{w})\right) } - \nabla \ell(\vec{w}) \Vert^2 \\
   =& \Vert \E{ \samplemedian{\ell_1}\left(\vec{\xi}^{(i)}(\vec{w}) \right) }  \Vert^2 % \\
   % =& 0,
\end{align*}
where we used the fact that 
$$\samplemedian{\ell_1}\left( \nabla \ell(\vec{w}) + \vec{\xi}^{(i)}(\vec{w}) \right) = \nabla \ell(\vec{w}) + 
\samplemedian{\ell_1}\left( \vec{\xi}^{(i)}(\vec{w})\right).$$
On the other hand, the $\ell_1$-median boils down to computing component-wise medians, hence we have that
\begin{align*}
    \E{\samplemedian{\ell_1}\left( \vec{\xi}^{(i)}(\vec{w})\right) }= \begin{bmatrix}
\E{\median_{i\in[n]}\left( \xi_1^{(i)}\right)  } \\
\vdots \\
\E{\median_{i\in[n]}\left( \xi_d^{(i)}\right) }
\end{bmatrix},
\end{align*}
where $\median_{i\in[n]}(\xi_k^{(i)}) = \argmin{m \in \mathbb{R}} \frac1{n} \sum_{i=1}^n |m-\xi_k^{(i)}| $ is the one dimensional sample median estimator. 

Now, let us compute $\E{\median_{i\in[n]}\left( \xi^{(i)}_k\right) }$ for some $k \in \{1,\dots, d\}$. Denoting $r = (n-1)/2 \in\mathbb{N}$ and by using the probability density function of the sample median (cf.\ \citet{maritz1978note}), we have that:
\begin{align*}
    \E{\median_{i\in[n]}\left( \xi^{(i)}_k\right) } =& \frac{n!}{r!r!} \underbrace{\int_{\mathbb{R}} x \left( F_{\vec{w},k}(x) (1-  F_{\vec{w},k}(x)) \right)^r p_{\vec{w},k}(x) \mathrm{d} x}_{=:\mathrm{A}},
\end{align*}
where $F_{\vec{w},k}$ and $p_{\vec{w},k}(x)$ are respectively the cumulative distribution function (cdf) and the probability density function (pdf) of $\xi_k^{(i)} \sim \mathcal{P}_{\alpha, \|\vec{\Sigma}_{k}(\vec{w})\|_{\alpha} }$ that is a symmetric $\alpha$-stable random variable with scale $\|\vec{\Sigma}_{k}(\vec{w})\|_{\alpha} $. As $p_{\vec{w},k}$ is continuous and symmetric around zero, we have $p_{\vec{w},k}(x) = p_{\vec{w},k}(-x)$ and $ (1-  F_{\vec{w},k}(x)) = F_{\vec{w},k}(-x)$ for all $x\in \mathbb{R}$. This gives us:
\begin{align*}
    % \E{\median_{i\in[n]}\left( \xi^{(i)}_k\right) }  
    \mathrm{A} =& \int_{0}^\infty x \left( F_{\vec{w},k}(x)  F_{\vec{w},k}(-x) \right)^r p_{\vec{w},k}(x) \mathrm{d} x + \int_{-\infty}^0 x \left( F_{\vec{w},k}(x)  F_{\vec{w},k}(-x) \right)^r p_{\vec{w},k}(x) \mathrm{d} x \\
    =& \int_{0}^\infty x \left( F_{\vec{w},k}(x)  F_{\vec{w},k}(-x) \right)^r p_{\vec{w},k}(x) \mathrm{d} x - \int_{0}^\infty x \left( F_{\vec{w},k}(x)  F_{\vec{w},k}(-x) \right)^r p_{\vec{w},k}(x) \mathrm{d} x \\
    =& 0.
\end{align*}
Hence, we obtain that
$$\E{ \samplemedian{\ell_1}\left( \vec{\xi}^{(i)}(\vec{w}) \right) } = 0.$$
This shows that the first line of \cref{ass:uniform-bias-variance-bound-median} holds with $\delta_1 = \delta_2 =0$. 

We now proceed to the estimation of the variance. By using similar arguments, we have that:
\begin{align*}
   \E{ \Vert \vec{m} - \E{\vec{m} } \Vert^2  } =\; & \E{ \Vert  \samplemedian{\ell_1}\left(\vec{\xi}^{(i)}(\vec{w}) \right)  \Vert^2  } \\
   = \;& \sum_{k=1}^d \E{ \left(  \median_{i \in [n]}\left( \xi_k^{(i)} \right)  \right)^2  } \\
   \numberthis \label{eqn:var_median}
   =\; & \sum_{k=1}^d \frac{n!}{r!r!}  \mathbb{E}\left[\Bigl(\xi_k^{(i)}\Bigr)^2 \Biggl(F_{\vec{w},k}\Bigl(\xi_k^{(i)}\Bigr)\Bigl(1-F_{\vec{w},k}\Bigl(\xi_k^{(i)}\Bigr)\Bigr) \Biggr)^r \right],
\end{align*}
where \eqref{eqn:var_median} follows from \citet[Equation 2.1]{maritz1978note}. By using \eqref{eqn:marg_dist} with \eqref{eqn:frobenius to alpha norm} and denoting the cdf of the standard symmetric $\alpha$-stable distribution by $F_{z}$, we further compute that:
\begin{align*}
   \E{ \Vert \vec{m} - \E{\vec{m}} \Vert^2 } 
   = \; & \frac{n!}{r!r!} \mathbb{E}\left[z^2  \Biggl(F_{z}(z)(1-F_{z}(z) \Biggr)^r  \right]   \sum_{k=1}^d \|\vec{\Sigma}_{k}(\vec{w})\|_{\alpha}^2  \\
   \leq \; & \frac{n!}{r!r!} \mathbb{E}\left[z^2  \Biggl(F_{z}(z)(1-F_{z}(z) \Biggr)^r  \right] \left(c_1' + c_2' \|\nabla \ell(\vec{w})\|^2 \right) \\
   =: \; & C_n \left(c_1' + c_2' \|\nabla \ell(\vec{w})\|^2 \right).
\end{align*}
This shows that the second line of \cref{ass:uniform-bias-variance-bound-median} holds with $\sigma_1^2 = C_n \rho_\alpha^2 c_1$ and $\sigma_2 = C_n \rho_\alpha^2 c_2$.

Finally we observe that, again by \citet[Equation 2.1]{maritz1978note}, $C_n$ is the variance of the sample median of a set of i.i.d. standard symmetric $\alpha$-stable variables $\{z_1, \dots, z_{n}\}$, where $z_i =^{\mathrm{d}} z$. As $n\to \infty$, it is well-known that the sample median is asymptotically normal with variance $1/(4n f^2(0))$, where $f$ denotes the pdf of $z$ (see \citet{maritz1978note}). Hence we conclude that $C_n = \mathcal{O}(1/n)$ as $n\to \infty$. This concludes the proof.
\end{proof}

\subsection{Rates for \SMGD{} for Smooth Nonconvex Functions: Proof of \cref{prop:approx-median-descent} }

The following proposition is an application of biased \SGD{} results to the \SMGD{} algorithm with errors.
It is possible to show that \cref{ass:uniform-bias-variance-bound-median,Ass:errors bounded expectation} both imply Assumption 9 from \citet{DemMalSokRic23}.
So from a qualitative point of view the result below is the same that what we would obtain by using Theorem 3 from \citet{DemMalSokRic23}.
But doing so would produce a bound with worse constants, so we prefer to provide a proof exploiting directly our assumptions.

%%%%%%%%%%%%%%%%%%%%%
\propapproxmed*
%%%%%%%%%%%%%%%%%%%%%
\begin{proof}
    In this proof we note $\mathbb{E}_t$ for the expectation taken conditionally to the filtration generated by $\bw_0, \dots \bw_t$.
    In particular, we will have from \cref{Ass:errors bounded expectation} on the errors together with the tower rule  that $\mathbb{E}_{t} \Vert \be_t \Vert^2 = \mathbb{E}_{t} \mathbb{E}\left[ \Vert \be_t \Vert^2 \ | \ \bg_t^{(1)}, \dots, \bg_t^{(n)} \right]  \leq \varepsilon^2$.

    Smoothness of $\ell$ implies
    \begin{equation*}
        \ell(\bw_{t+1}) - \ell(\bw_t) - \langle \nabla \ell(\bw_t), \bw_{t+1} - \bw_t \rangle \leq \frac{L}{2}\Vert \bw_{t+1} - \bw_t \Vert^2.
    \end{equation*}
    If we note $\vec{d}_t := \vec{m}_t + \vec{e}_t$, thus $\bw_{t+1} - \bw_t = -\eta \vec{d}_t$. This implies
    \begin{equation*}
        \ell(\bw_{t+1}) - \ell(\bw_t) + \eta \langle \nabla \ell(\bw_t), \vec{d}_t \rangle \leq \frac{L\eta^2}{2}\Vert \vec{d}_t \Vert^2.
    \end{equation*}
    Let us bound the two terms depending on $\vec{d}_t$.
    First, take expectation conditioned to $\mathcal{F}_t$ (which we will denote $\mathbb{E}_t$) to write
    \begin{eqnarray*}
        \EE{t}{\langle \nabla \ell(\bw_t), \vec{d}_t \rangle}
        & = &
        \langle \nabla \ell(\bw_t), \EE{t}{\vec{d}_t} \rangle
        =
        \frac{1}{2}\Vert \nabla \ell(\bw_t) \Vert^2 + \frac{1}{2}\Vert \EE{t}{\vec{d}_t} \Vert^2 - \frac{1}{2}\Vert \EE{t}{\vec{d}_t} - \nabla \ell(\bw_t) \Vert^2 \\
        & \geq &
        \frac{1}{2}\Vert \nabla \ell(\bw_t) \Vert^2 - \frac{1}{2}\Vert \EE{t}{\vec{d}_t} - \nabla \ell(\bw_t) \Vert^2.
    \end{eqnarray*}
    Using \eqref{ass:noise-approx-median} and $\Vert \EE{t}{\vec{e}_t} \Vert^2\leq \EE{t}{\|\vec{e}_t\|^2}\leq \varepsilon^2$ due to Jensen's inequality, we can bound
    \begin{equation*}
        \Vert \EE{t}{\vec{d}_t} - \nabla \ell(\bw_t) \Vert^2
        \leq 
        2 \Vert \EE{t}{\vec{m}_t} - \nabla \ell(\bw_t) \Vert^2
        +
        2 \Vert \EE{t}{\vec{e}_t} \Vert^2
        \leq 2\delta_1^2 + 2\delta_2 \|\nabla \ell(\bw_t)\|^2+ 2 \varepsilon^2.
    \end{equation*}
    Second, we use Young's inequality and our assumptions to bound $\Vert \vec{d}_t \Vert^2$:
    \begin{eqnarray*}
        \EE{t}{\Vert \vec{d}_t \Vert^2}
        & \leq &
        2\EE{t}{\Vert \vec{e}_t + \vec{m}_t - \nabla \ell(\bw_t) \Vert^2}
        +2 \Vert \nabla \ell(\bw_t) \Vert^2 \\
        &\leq & 
        4 \EE{t}{ \Vert \vec{e}_t \Vert^2} + 4 \EE{t}{\Vert \vec{m}_t - \nabla \ell(\bw_t) \Vert^2} +2 \Vert \nabla \ell(\bw_t) \Vert^2 \\
        & \leq & 
        4 \varepsilon^2 + 4(\delta_1^2+\sigma_1^2 ) + (2+4\delta_2+4\sigma_2) \Vert \nabla \ell(\bw_t) \Vert^2,
    \end{eqnarray*}
    where in the last inequality we used that \eqref{ass:noise-approx-median} implies \eqref{ass:approx-median-bounded-variance} (cf. \cref{lem:noise-assum-equiv}).
    Combine all the above inequalities to get
    \begin{align*}
        &\EE{t}{\ell(\bw_{t+1})} - \ell(\bw_t)
        + \tfrac{\eta}{2}\Vert \nabla \ell(\bw_t) \Vert^2
        - \tfrac{\eta}{2}(2\delta_1^2 + 2\delta_2 \|\nabla \ell(\bw_t)\|^2 +\varepsilon^2) \\  
        &\hspace{2ex}\leq \frac{L\eta^2}{2}\big( 
        4(\varepsilon^2 + \delta_1^2 + \sigma_1^2 )+(2+4\delta_2+4\sigma_2)\|\nabla \ell(\bw_t)\|^2\big) .
    \end{align*}
    Rearranging yields
    \begin{align*}
        &\EE{t}{\ell(\bw_{t+1})} - \ell(\bw_t)\\
        &\hspace{3ex}\leq -\eta \big[\tfrac12 - \delta_2 - L\eta(1+2\sigma_2+2\delta_2)\big]
        \|\nabla \ell(\bw_t)\|^2
        + \eta (\delta_1^2+\varepsilon^2) + 2L\eta^2(\varepsilon^2+ \delta_1^2 + \sigma_1^2).
    \end{align*}
    If $\delta_2\leq \tfrac18$ and using $\eta \leq \frac{1}{8L(1+2\sigma_2+2\delta_2)}$, after taking full expectation, we obtain
    \begin{eqnarray*}
        \E{ \ell(\bw_{t+1})} - \E{\ell(\bw_{t})}
        & \leq &
        -\tfrac{\eta}{4} \E{\Vert \nabla \ell(\bw_t) \Vert^2} 
        + 
        \eta(\delta_1^2 + \varepsilon^2) 
        +
        2L\eta^2 (\varepsilon^2+\delta_1^2 + \sigma_1^2 ).
    \end{eqnarray*}
    Sum over $t=0, \dots , T-1$, multiply by $\frac{4}{\eta T}$ and use $\min\limits_{t=0, \dots, T-1}\E{\Vert \nabla \ell(\bw_t) \Vert^2} \leq \tfrac{1}{T}\sum_{t=0}^{T-1}\E{\Vert \nabla \ell(\bw_t) \Vert^2}$ to finally obtain
    \begin{equation*}
        \min\limits_{t=0, \dots, T-1} \E{\Vert \nabla \ell(\bw_t) \Vert^2}
        \leq 
        \frac{4 (\ell(\bw_0) - \inf \ell)}{\eta T}
        +8L\eta(\varepsilon^2+\delta_1^2 + \sigma_1^2 ) + 4(\delta_1^2 + \varepsilon^2).
    \end{equation*}
\end{proof}

\subsection{Rates for Biased \SGD{} for Convex Lipschitz Functions}\label{SS:SMGD:rates for convex lipschitz general}

In this section we prove another result about the complexity of biased \SGD{}, that is \SGD{} with biased estimators of the subgradient.
As far as we know, biased \SGD{} was only studied in the case of strongly convex functions or smooth nonconvex functions \citep{DemMalSokRic23}, or in the particular case of random projections.
In that regard, the next result is a new contribution to the analysis of biased \SGD{}.

Before stating our result, we recall a few standard definitions given that we are working with a nonsmooth convex function. 
Given a convex function $\ell : \mathbb{R}^d \to \mathbb{R}$, we say that $\vec{u}$ is a subgradient of $\ell$ at $\bw$ if
\begin{equation*}
    (\forall \bw' \in \mathbb{R}^d) \quad  \ell(\bw') - \ell(\bw) - \langle \vec{u}, \bw' - \bw \rangle \geq 0.
\end{equation*}
We note $\partial \ell (\bw)$ the set of all subgradients of $\ell$ at $\bw$, that is called the subdifferential of $\ell$ at $\bw$.

\begin{theorem}\label{T:CV SMGD for convex lipschitz}
    Let $\ell : \mathbb{R}^d \to \mathbb{R}$ be a convex $G$-Lipschitz function such that ${\rm{argmin}}~\ell \neq \emptyset$.
    Let $\eta >0$, and note $D:= \Vert \bw_0 - \bw_* \Vert$ for some $\bw_* \in {\rm{argmin}}~\ell$.
    Consider the iterates generated by a biased Stochastic Subgradient Descent method
    \begin{equation*}
        \bw_{t+1} = \bw_t - \eta \bg_t,
    \end{equation*}
    and assume that the estimator $\bg_t$ has a uniformly finite bias and variance (we denote by $\mathcal{F}_t$ the filtration induced by $\bw_0, \dots, \bw_t$):
    \begin{equation*}
        \Vert \EE{}{\bg_t | \mathcal{F}_t} - \vec{u}_t \Vert  \leq \delta,
        \quad
        \EE{}{ \Vert \bg_t - \EE{}{\bg_t | \mathcal{F}_t}\Vert^2 | \mathcal{F}_t}  \leq \sigma^2,
    \end{equation*}
    where $\vec{u}_t$ is any subgradient of $\ell$ at $\bw_t$.
    Then, for every $T \geq 1$
    \begin{equation*}
        \EE{}{\ell(\bar \bw_T) - \inf \ell}
        \leq 
        \frac{2 D^2}{\eta T}
        + \eta \frac{\sigma^2 + (\delta + G)^2}{2} 
        + \eta  \frac{ \delta^2 T}{4}
    \end{equation*}
    where $\bar \bw_T$ is an average of the first $T$ iterates defined by $\bar \bw_T = \frac{1}{\sum_{t=0}^{T-1} \theta^t} \sum_{t=0}^{T-1} \theta^t \bw_t$, $\theta = \frac{T}{T+2}$.
    In particular, if we take a constant step size equal to 
    \begin{equation*}
    \eta = 
        \frac{2\sqrt{2}D}{\sqrt{2(\sigma^2 + (\delta + G)^2)T+\delta^2 T^2 }}
        \quad
        \text{(resp. $\eta = \frac{1}{\delta T + \sqrt{T}}$ )}
    \end{equation*}
    then we can guarantee that 
    \begin{equation*}
        \EE{}{\ell(\bar \bw_T) - \inf \ell}
        \leq 
        %\sqrt{2\eta_1} D \sqrt{\frac{\sigma^2 + (\delta + G)^2}{T} } + \sqrt{2}D \delta.
        2D \sqrt{\frac{\sigma^2 + (\delta + G)^2}{T}+\frac{\delta^2}{2}}
        \quad
        \text{(resp. $\frac{2D^2 + \sigma^2 + \delta^2 + G^2}{\sqrt{T}} + (2D^2 +1)\delta$ ).}
    \end{equation*}
\end{theorem}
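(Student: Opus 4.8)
The plan is to run the classical distance-to-minimizer energy argument for stochastic subgradient descent, adapted in two ways to absorb the bias: a Young split of the bias–distance cross term at scale $2/T$, and a geometric reweighting of the telescoping sum that matches exactly the weights $\theta^t$ appearing in $\bar\bw_T$.

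First I would fix a minimizer $\bw_*$ of $\ell$, set $D_t := \|\bw_t-\bw_*\|$ (so $D_0 = D$), and expand $D_{t+1}^2 = D_t^2 - 2\eta\langle\bg_t,\bw_t-\bw_*\rangle + \eta^2\|\bg_t\|^2$. Taking $\EE{}{\cdot\mid\mathcal{F}_t}$ and decomposing $\EE{}{\bg_t\mid\mathcal{F}_t} = \vec{u}_t+\vec{b}_t$ with $\vec{u}_t\in\partial\ell(\bw_t)$ and $\|\vec{b}_t\|\le\delta$, the subgradient inequality gives $\langle\vec{u}_t,\bw_t-\bw_*\rangle\ge\ell(\bw_t)-\inf\ell$, Cauchy–Schwarz gives $\langle\vec{b}_t,\bw_t-\bw_*\rangle\ge-\delta D_t$, and the bias–variance split of the second moment together with $\|\vec{u}_t\|\le G$ gives $\EE{}{\|\bg_t\|^2\mid\mathcal{F}_t}\le\|\vec{u}_t+\vec{b}_t\|^2+\sigma^2\le(\delta+G)^2+\sigma^2$. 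This produces the one-step bound $\EE{}{D_{t+1}^2\mid\mathcal{F}_t}\le D_t^2 - 2\eta(\ell(\bw_t)-\inf\ell) + 2\eta\delta D_t + \eta^2((\delta+G)^2+\sigma^2)$.

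Then I would neutralize the cross term with $2\eta\delta D_t\le\tfrac2T D_t^2 + \tfrac{\eta^2\delta^2T}{2}$, giving $\EE{}{D_{t+1}^2\mid\mathcal{F}_t}\le(1+\tfrac2T)D_t^2 - 2\eta(\ell(\bw_t)-\inf\ell) + \eta^2 C'$ with $C' := \tfrac{\delta^2T}{2}+(\delta+G)^2+\sigma^2$. Setting $\theta := \tfrac{T}{T+2} = (1+\tfrac2T)^{-1}$, multiplying through by $\theta^{t+1}$ makes the quadratic terms telescope (since $\theta^{t+1}(1+\tfrac2T) = \theta^t$); taking full expectations, summing over $t=0,\dots,T-1$, and using $\EE{}{D_T^2}\ge0$ gives $2\eta\sum_{t=0}^{T-1}\theta^{t+1}\EE{}{\ell(\bw_t)-\inf\ell}\le D^2 + \eta^2 C'\sum_{t=0}^{T-1}\theta^{t+1}$. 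Jensen's inequality applied to the $\theta^t$-weighted average $\bar\bw_T$ bounds the left side below by $2\eta\theta\bigl(\sum_{t=0}^{T-1}\theta^t\bigr)\EE{}{\ell(\bar\bw_T)-\inf\ell}$; dividing and invoking the elementary identities $\theta\sum_{t=0}^{T-1}\theta^t = \tfrac{T(1-\theta^T)}{2}$ and $\theta^T\le\tfrac12$ (valid for every $T\ge1$, so $\theta\sum_{t<T}\theta^t\ge\tfrac T4$) delivers the stated bound $\EE{}{\ell(\bar\bw_T)-\inf\ell}\le\tfrac{2D^2}{\eta T}+\tfrac{\eta((\delta+G)^2+\sigma^2)}{2}+\tfrac{\eta\delta^2T}{4}$. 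The two specific step sizes then follow: the first is the minimizer of $\eta\mapsto\tfrac{2D^2}{\eta T}+\tfrac\eta2\bigl((\delta+G)^2+\sigma^2+\tfrac{\delta^2T}{2}\bigr)$, and for $\eta = \tfrac{1}{\delta T+\sqrt T}$ one bounds each term crudely (e.g.\ $\tfrac{2D^2}{\eta T} = 2D^2\delta+\tfrac{2D^2}{\sqrt T}$, $\tfrac{\eta\delta^2T}{4}\le\tfrac\delta4$, $\tfrac{\eta((\delta+G)^2+\sigma^2)}{2}\le\tfrac{\delta^2+G^2+\sigma^2}{\sqrt T}$) and regroups.

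The main obstacle is that the oracle's bias breaks the usual contraction: $\EE{}{D_{t+1}^2\mid\mathcal{F}_t}$ is not controlled by $D_t^2$ alone, so the iterates may drift and a vanilla uniformly-weighted telescoping leaks a spurious term like $\delta D$ or $\eta\delta\sqrt{(\delta+G)^2+\sigma^2}\,T$ that does not match the target. The fix — choosing the Young parameter to be exactly $\tfrac2T$ so that the resulting multiplicative factor $(1+\tfrac2T)$ is cancelled stepwise by the weights $\theta^t$ of $\bar\bw_T$ — is what forces the bias to enter only through the benign $\tfrac{\eta\delta^2T}{4}$ term; the sole non-mechanical point is the elementary estimate $\theta^T\le\tfrac12$, which fixes the leading constant at $2D^2/(\eta T)$.
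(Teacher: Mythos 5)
Your proposal is correct and follows essentially the same route as the paper's proof: the same expansion of $\Vert \bw_t - \bw_* \Vert^2$, the same bias decomposition via a nearby subgradient, a Young split of the bias cross term (you fix the parameter at $2/T$ upfront, whereas the paper keeps a generic $\varepsilon$ and sets $\varepsilon = 2/(\eta T)$ at the end — the resulting inequality is identical), the same $\theta^t$-weighted telescoping with $\theta = T/(T+2)$, Jensen, and the same step-size choices. The only cosmetic deviation is your justification of $\theta^T \leq 1/2$ (e.g.\ via Bernoulli's inequality), which is a slightly more direct argument than the paper's logarithmic lemma but proves the same bound.
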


\begin{proof}
    Let $w_* \in {\rm{argmin}}~\ell$ and $T \geq 1$ be fixed.
    Develop the squares and use the definition of the algorithm to write
    \begin{eqnarray*}
        \Vert \bw_{t+1} - \bw_* \Vert^2 
        & = & 
        \Vert \bw_t - \bw_* \Vert^2 + 2  \langle \bw_{t+1} - \bw_t, \bw_t -\bw_* \rangle + \Vert \bw_{t+1} - \bw_t \Vert^2 \\
        &=&
        \Vert \bw_t - \bw_* \Vert^2 - 2 \eta \langle \bg_t, \bw_t -\bw_* \rangle + \eta^2 \Vert \bg_t \Vert^2.
    \end{eqnarray*}
    Let $\bar \bg_t$ be the projection of $\EE{}{\bg_t | \mathcal{F}_t}$ onto $\partial \ell(\bw_t)$, the subdifferential of $\ell$ at $\bw_t$, which is a nonempty closed convex set.
    This means that from our assumption on the bias of $\bg_t$ we will have
    \begin{equation*}
        \Vert \mathbb{E}\left[ \bg_t | \mathcal{F}_t \right] - \bar \bg_t \Vert \leq \Vert \mathbb{E}\left[ \bg_t | \mathcal{F}_t \right] - \vec{u}_t \Vert \leq \delta.
    \end{equation*}
    Then, after taking conditional expectation (denoted by $\EE{t}{\cdot}$ instead of $\EE{}{\cdot | \mathcal{F}_t}$) we can write
    \begin{eqnarray*}
        \EE{t}{\Vert \bw_{t+1} - \bw_* \Vert^2}
        & = & 
        \Vert \bw_t - \bw_* \Vert^2
        - 2 \eta \langle \EE{t}{\bg_t}, \bw_t -\bw_* \rangle + \eta^2 \EE{t}{\Vert \bg_t \Vert^2} \\
        & =& 
        \Vert \bw_t - \bw_* \Vert^2
        - 2 \eta \langle \bar \bg_t, \bw_t -\bw_* \rangle 
        + 2 \eta \langle \bar \bg_t -  \EE{t}{\bg_t}, \bw_t -\bw_* \rangle 
        + \eta^2 \EE{t}{\Vert \bg_t \Vert^2} \\
        & \leq &
        \Vert \bw_t - \bw_* \Vert^2
        - 2 \eta (\ell(\bw_t) - \inf \ell)
        + 2 \eta \langle \bar \bg_t -  \EE{t}{\bg_t}, \bw_t -\bw_* \rangle 
        + \eta^2 \EE{t}{\Vert \bg_t \Vert^2},
    \end{eqnarray*}
    where in the last inequality we used the convexity of $\ell$ together with the fact that $\bar \bg_t \in \partial \ell(\bw_t)$.
    The last term can be bounded by using our assumptions on the estimator $\bg_t$, and the fact that $\ell$ has bounded subgradients:
    \begin{equation*}
        \EE{t}{\Vert \bg_t \Vert^2} 
        = \EE{t}{\Vert \bg_t - \EE{t}{\bg_t} \Vert^2} + \Vert \EE{t}{\bg_t} \Vert^2
        \leq 
        \sigma^2 + (\Vert \EE{t}{\bg_t} - \bar \bg_t \Vert + \Vert \bar \bg_t \Vert)^2
        \leq 
        \sigma^2 + (\delta + G)^2.
    \end{equation*}
    Injecting this in the previous inequality we obtain
    \begin{equation*}
        \EE{t}{\Vert \bw_{t+1} - \bw_* \Vert^2} 
        \leq 
        \Vert \bw_t - \bw_* \Vert^2
        - 2 \eta (\ell(\bw_t) - \inf \ell)
        + 2 \eta \langle \bar \bg_t -  \EE{t}{\bg_t}, \bw_t -\bw_* \rangle 
        + \eta^2 \sigma^2 + \eta^2(\delta + G)^2.
    \end{equation*}
Now we introduce a parameter $\varepsilon >0$, and we use Young's inequality together with our assumption on the bias of $\bg_t$ to write
    \begin{equation*}
        2\langle \bar \bg_t -  \EE{t}{\bg_t}, \bw_t - \bw_* \rangle
        \leq 
        \frac{1}{\varepsilon}\Vert \bar \bg_t -  \EE{t}{\bg_t} \Vert^2 + \varepsilon \Vert \bw_t - \bw_* \Vert^2
        \leq 
        \varepsilon^{-1} \delta^2 + \varepsilon \Vert \bw_t - \bw_* \Vert^2.
    \end{equation*}
    Inject this bound in our main inequality
    \begin{equation*}
        \EE{t}{\Vert \bw_{t+1} - \bw_* \Vert^2} 
        \leq 
        (1 + \eta \varepsilon)\Vert \bw_t - \bw_* \Vert^2
        - 2 \eta (\ell(\bw_t) - \inf \ell)
        + \eta^2 \sigma^2 + \eta^2(\delta + G)^2
        + \eta \varepsilon^{-1} \delta^2,
    \end{equation*}
    and after reorganizing the terms, dividing by $2 \eta$ and taking expectation we finally obtain
    \begin{equation*}
        \EE{}{\ell(\bw_t) - \inf \ell}
        \leq 
        \frac{1 + \eta \varepsilon}{2 \eta}\EE{}{\Vert \bw_t - \bw_* \Vert^2}
        - \frac{1}{2\eta}\EE{}{\Vert \bw_{t+1} - \bw_* \Vert^2} 
        + \eta \frac{\sigma^2 + (\delta + G)^2}{2} 
        +  \varepsilon^{-1} \frac{\delta^2}{2}.
    \end{equation*}
    We are now ready to use a weighted telescopic sum argument.
    First, define $\theta := \frac{1}{1 + \eta \varepsilon}$ and $\rho_t := \theta^{t}$.
    Second, multiply our inequality by $\rho_{t+1}$ and then sum for $t=0, \dots, T-1$.
    Observe that the term $(1+ \eta \varepsilon) \rho_{t+1}$ is equal to $\rho_t$ due to our definition, which means that we have a telescoping sum where the terms $\tfrac{\rho_t}{2\eta} \EE{}{\Vert \bw_t - \bw_* \Vert^2}$ will cancel each other.
    Third, divide by $\sum_{t=0}^{T-1} \rho_{t+1}$ so to obtain
    \begin{equation*}
    \frac{1}{\sum_{t=0}^{T-1} \rho_{t+1}}\sum_{t=0}^{T-1} \rho_{t+1}
        \EE{}{\ell(\bw_t) - \inf \ell}
        \leq 
        \frac{\rho_0}{2 \eta \sum_{t=0}^{T-1} \rho_{t+1}}
        \EE{}{\Vert \bw_0 - \bw_* \Vert^2}
        + \eta \frac{\sigma^2 + (\delta + G)^2}{2} 
        +  \varepsilon^{-1} \frac{\delta^2}{2}.
    \end{equation*}
    Now define $\bar \bw_T : =\frac{1}{\sum_{t=0}^{T-1} \rho_{t+1}}\sum_{t=0}^{T-1} \rho_{t+1} \bw_t = \frac{1}{\sum_{t=0}^{T-1} \theta^t} \sum_{t=0}^{T-1} \theta^t \bw_t$, and use Jensen's inequality to get
    \begin{equation*}
        \EE{}{\ell(\bar \bw_T) - \inf \ell}
        \leq 
        \frac{\Vert \bw_0 - \bw_* \Vert^2}{2 \eta \sum_{t=0}^{T-1} \rho_{t+1}}
        + \eta \frac{\sigma^2 + (\delta + G)^2}{2} 
        +  \varepsilon^{-1} \frac{\delta^2}{2}.
    \end{equation*}
    We will now simplify this upper bound, so we can later make an appropriate choice of  $\eta$ leading to the desired complexity bound.
    We will now focus on the geometric sum appearing in the denominator, and lower bound it.
    Start by using the definition of $\rho_t$ to write
    \begin{equation*}
        \eta \sum_{t=0}^{T-1} \rho_{t+1}
        =
        \eta \theta \sum_{t=0}^{T-1} \theta^{t}
        =
        \eta \frac{\theta}{1- \theta} (1 - \theta^T).
    \end{equation*}
    On the one hand, we see that from the definition of $\theta$ we have
    \begin{equation*}
        \eta \frac{\theta}{1- \theta}
        =
        \frac{\eta }{1 + \eta \varepsilon} \frac{1}{1 - \frac{1}{1 + \eta \varepsilon}}
        =
        \frac{\eta }{\eta \varepsilon}
        =
        \varepsilon^{-1}.
    \end{equation*}
    On the other hand, we can guarantee that $(1 - \theta^T) \geq 1/2$ provided that $\varepsilon =  \frac{2}{\eta T}$.
    To see this, first observe that $(1 - \theta^T) \geq 1/2$ is equivalent to $T \geq \frac{\ln(2)}{\ln(\theta^{-1})}$.
    Since we impose that $\eta \varepsilon = \frac{2}{T}$ then necessarily  $\theta = \frac{T}{T+2}$ which means that $\theta^{-1} = 1 + \frac{2}{T}$.
    So what we need to verify is $T \geq \frac{\ln(2)}{\ln(1 + \frac{2}{T})}$.
    Now, observe that $\ln(2) \leq 1$.
    Moreover, it is an exercise (see \cref{L:log inequality 1/2} which is deferred to the end of this proof) to verify that for all $x \geq 0$, $\frac{1}{\ln(x+1   )} \leq \frac{1}{2} + \frac{1}{x}$.
    So for our condition to hold it is enough to ask that $T \geq \frac{1}{2} + \frac{T}{2}$, which is equivalent to $T\geq 1$, which is true.
    
    Combining all those bounds together with our new definition for $\varepsilon$, we now have
    \begin{equation*}
        \EE{}{\ell(\bar \bw_T) - \inf \ell}
        \leq 
        \frac{2 \Vert \bw_0 - \bw_* \Vert^2}{\eta T}
        + \eta \frac{\sigma^2 + (\delta + G)^2}{2} 
        +  \frac{\eta \delta^2 T}{4}.
    \end{equation*}
    
    To simplify the last stage of our analysis, let us note  $a= 2\Vert \bw_0 - \bw_* \Vert^2$, $b=\frac{\sigma^2 + (\delta + G)^2}{2}$ and $c=\frac{\delta^2}{4}$, so that our  bound writes as
    \begin{equation}\label{cvsmgdc2}
    \EE{}{\ell(\bar \bw_T) - \inf \ell}
        \leq 
        \frac{a}{\eta T} + \eta b  + c \eta T.
    \end{equation}
    Minimizing the right-hand side with respect to $\eta$ is equivalent to solve $\frac{a}{\eta T} = \eta b  + c \eta T$, where
    \begin{equation*}
        \frac{a}{\eta T} = \eta b  + c \eta T
        \Leftrightarrow
        \eta^2 = \frac{a}{T(b+cT)}
        \Leftrightarrow
        \eta = \sqrt{\frac{a}{bT+cT^2}}
        \Leftrightarrow
        \eta = 2\sqrt{\frac{2\Vert \bw_0 - \bw_* \Vert^2}{2(\sigma^2 + (\delta + G)^2)T+\delta^2 T^2 }}
    \end{equation*}
    With such choice of stepsize, our bound becomes
    \begin{equation*}
    \EE{}{\ell(\bar \bw_T) - \inf \ell}
        \leq 
        \frac{2a}{\eta T}
        =
        \frac{2a}{T} \sqrt{\frac{bT+cT^2}{a}}
        =
        2\sqrt{a}\sqrt{\frac{b}{T}+c}
        =
        2\Vert \bw_0 - \bw_* \Vert \sqrt{\frac{\sigma^2 + (\delta + G)^2}{T}+\frac{\delta^2}{2}}
    \end{equation*}
    and this gives us our first bound.
    For the second bound, let us take simply $\eta = \frac{1}{\delta T + \sqrt{T}}$ and inject it into \eqref{cvsmgdc2} to obtain
    \begin{eqnarray*}
    \EE{}{\ell(\bar \bw_T) - \inf \ell}
        & \leq  &
        \frac{a \delta T + a \sqrt{T}}{ T} + \frac{b}{\delta T + \sqrt{T}}  +  \frac{cT}{\delta T + \sqrt{T}} \\
        & \leq &
        a \delta +
        \frac{a}{ \sqrt{T}} 
        + \frac{b}{\sqrt{T}}  +  \frac{c}{\delta}
        =
        \frac{a + b}{\sqrt{T}} + a \delta + \frac{c}{\delta} \\
        & \leq  &
        \frac{2D^2 + \sigma^2 + \delta^2 + G^2}{\sqrt{T}} + (2D^2 +1)\delta, 
    \end{eqnarray*}
    where in the last inequality we simplified some numerical constants.
\end{proof}

\begin{lemma}\label{L:log inequality 1/2}
    For every $x \geq 0$, $\frac{1}{\ln(1+x)} \leq \frac{1}{2} + \frac{1}{x}$.
\end{lemma}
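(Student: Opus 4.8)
The plan is to reduce the inequality to a classical lower bound on the logarithm and then close it with a one-variable monotonicity argument. At $x=0$ both sides equal $+\infty$, so the statement is vacuous there; hence I would fix $x>0$ throughout. For $x>0$ we have $\frac12+\frac1x=\frac{x+2}{2x}>0$ and $\ln(1+x)>0$, so the claimed bound $\frac{1}{\ln(1+x)}\le\frac{x+2}{2x}$ is equivalent, after cross-multiplying, to the estimate $\ln(1+x)\ge\frac{2x}{x+2}$. Thus the whole lemma follows once this single inequality is established on $(0,\infty)$ (and it in fact holds on all of $[0,\infty)$, with equality at $0$).

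To prove $\ln(1+x)\ge\frac{2x}{x+2}$ on $[0,\infty)$, I would introduce $f(x):=\ln(1+x)-\frac{2x}{x+2}$, observe $f(0)=0$, and differentiate. A direct computation gives $f'(x)=\frac{1}{1+x}-\frac{4}{(x+2)^2}$, and putting the two terms over a common denominator yields numerator $(x+2)^2-4(1+x)=x^2$, so $f'(x)=\frac{x^2}{(1+x)(x+2)^2}\ge 0$ for every $x\ge 0$. Hence $f$ is nondecreasing on $[0,\infty)$, so $f(x)\ge f(0)=0$, which is exactly the desired bound. Dividing by the positive numbers $\ln(1+x)$ and $\frac{2x}{x+2}$ then gives $\frac{1}{\ln(1+x)}\le\frac{x+2}{2x}=\frac12+\frac1x$ for all $x>0$, and the $x=0$ case is trivial.

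The argument is entirely routine; the only points requiring a little care are the bookkeeping at $x=0$ (where both sides are infinite, so the inequality is interpreted as holding trivially) and the algebraic simplification of $f'$, where one must check that the numerator $(x+2)^2-4(1+x)$ collapses to $x^2$ rather than to something with a surviving linear term. Neither of these is a genuine obstacle, so I expect the final proof to be very short — essentially the display for $f'$ together with the monotonicity conclusion.
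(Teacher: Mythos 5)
Your proof is correct and takes essentially the same route as the paper: both reduce the claim to the equivalent inequality $\ln(1+x) \geq \tfrac{2x}{x+2}$ and verify it by an elementary one-variable calculus argument, you by showing the difference $f(x)=\ln(1+x)-\tfrac{2x}{x+2}$ has nonnegative derivative $\tfrac{x^2}{(1+x)(x+2)^2}$, the paper by showing $(x+2)\ln(1+x)$ is convex and lies above its tangent $2x$ at the origin. The only (immaterial) difference is first-derivative monotonicity versus a convexity/tangent-line argument, plus your explicit handling of the degenerate point $x=0$.
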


\begin{proof}
    This inequality is equivalent to $\ln(1+x) \geq \frac{2x}{x+2}$, or again $(x+2)\ln(1+x) \geq 2x$.
    Define $\varphi : (-1,+\infty) \to \mathbb{R}$ as $\varphi(x) = (x+2)\ln(1+x)$ and compute its derivatives:
    \begin{equation*}
        \varphi'(x) = \ln(1+x) +1+\frac{1}{1+x} 
        \quad \text{ and } \quad 
        \varphi''(x) = \frac{x}{(1+x)^2}.
    \end{equation*}
    We see that $\varphi''(x) \geq 0$ for all $x \geq 0$, so $\varphi$ is convex on $[0,+\infty)$.
    So we can use the tangent inequality:
    \begin{equation}
        \varphi(x) \geq \varphi(0) + \varphi'(0)(x-0) = 0 + 2x = 2x.
    \end{equation}
    Therefore $\varphi(x) \geq 2x$ for all $x \geq 0$, which is what we wanted to prove.
\end{proof}

\subsection{Rates for \SMGD{}  for Convex Lipschitz Functions: Proof of \cref{P:CV SMGD convex lipschitz} }

\CVSMGDconvexlipschitz*

\begin{proof}
    In this proof, we note $\mathcal{F}_t$ the filtration generated by $\bw_0, \dots, \bw_t$, and we will note $\mathbb{E}_t$ to refer to the expectation conditioned on $\mathcal{F}_t$.
    
    We want to apply \cref{T:CV SMGD for convex lipschitz}.
    Let us denote $\bg_t = \m_t + \be_t$, so that the iterates of \SMGD{} verify $\bw_{t+1} = \bw_t - \eta \bg_t$.
    We are now going to show that $\bg_t$ has uniformly bounded bias and variance.
    For this we will make use of \cref{ass:uniform-bias-variance-bound-median,Ass:errors bounded expectation}.
    Remember that we assumed $\delta_2=\sigma_2=0$ without loss of generality (we can do this according to \cref{L:oracle purely bouded if lipschitz objective}), so we note $\delta$ and $\sigma$ instead of $\delta_1$ and $\sigma_1$.
    Also, \cref{Ass:errors bounded expectation} on the errors together with the tower rule imply that $\EE{t}{\Vert \be_t \Vert^2} = \EE{t}{\E{\Vert \be_t \Vert^2 \ | \ \bg_t^{(1)}, \dots, \bg_t^{(n)} }}  \leq \varepsilon^2$. 
    Regarding the bias, we can write 
    \begin{equation*}
        \Vert \EE{t}{\bg_t - \nabla \ell(\bw_t)} \Vert 
        \leq
        \Vert \EE{t}{\m_t - \nabla \ell(\bw_t) }\Vert 
        +
        \Vert \EE{t}{\be_t} \Vert 
        \leq
        \delta
        + \varepsilon,
    \end{equation*}
    where in the last inequality we used Jensen's inequality to write $\Vert \EE{t}{ \be_t }\Vert \leq \sqrt{\EE{t}{ \Vert \be_t \Vert^2}}$.
    As for the variance, we write
    \begin{equation*}
        \EE{t}{\Vert \bg_t - \mathbb{E}_{t} \bg_t \Vert^2}
        \leq 2\EE{t}{\Vert \m_t - \EE{t}{\m_t} \Vert^2}
        + 2 \EE{t}{\Vert \be_t - \EE{t}{\be_t} \Vert^2}
        \leq 
        2 \sigma^2 + 8 \EE{t}{\Vert \be_t \Vert^2} 
        \leq 
        2 \sigma^2 + 8 \varepsilon^2.
    \end{equation*}
    Using \cref{T:CV SMGD for convex lipschitz}, we obtain
    \begin{equation*}
        \EE{}{\ell(\bar \bw_T) - \inf \ell}
        \leq 
        \frac{2 \Vert \bw_0 - \bw_* \Vert^2}{\eta T}
        + \eta \frac{2\sigma^2 + 8\varepsilon^2 + (\delta + \varepsilon + G)^2}{2} 
        +  \frac{\eta (\delta + \varepsilon)^2 T}{4}.
    \end{equation*}
    Setting the step size $\eta = \frac{1}{(\delta + \varepsilon) T + \sqrt{T}}$, we conclude that 
    \begin{equation*}
        \EE{}{\ell(\bar \bw_T) - \inf \ell}
        \leq 
        \frac{2D^2 + 2\sigma^2 + 8 \varepsilon^2+ (\delta + \varepsilon)^2 + G^2}{\sqrt{T}} + (2D^2 +1)(\delta + \varepsilon).
    \end{equation*}
\end{proof}

\subsection{Rates for \SMGD{} with Stable Noise: Proof of \cref{T:CV SMGD stable noise}}

\corstableshort*

\begin{proof}
First of all, remember that the $\ell_1$-median can be computed exactly, by computing component-wise a one-dimensional median.
This allows us to have $\varepsilon =0$ in \cref{Ass:errors bounded expectation}. Further, due to \cref{stable noise implies finite bias variance v2} we have $\delta_1=0$ in \cref{ass:uniform-bias-variance-bound-median}.

\textbf{Proof of part (i).} Choose $\eta = \frac{1}{8L(1+2\sigma_2+2\delta_2)\sqrt{T}}$, where $\delta_2,\sigma_2$ are from \cref{stable noise implies finite bias variance v2}. Hence, we can apply \cref{prop:approx-median-descent}. As $\eta = \frac{C}{\sqrt{T}}$ with $C=\frac{1}{8L(1+2\sigma_2+2\delta_2)}$, we get the bound $\frac{1}{C\sqrt{T}} + \frac{C}{\sqrt{T}} = \mathcal{O}(1/\sqrt{T})$.

\textbf{Proof of part (ii).}  The result is a direct consequence of \cref{stable noise implies finite bias variance v2} and \cref{P:CV SMGD convex lipschitz}.

This completes the proof.
\end{proof}

\section{Approximate Computation of Gradient Estimators}\label{sec:app-more-update-formulas}
\subsection{Gradient Estimation Problem: General Considerations}
Given a function $\dist: \R^d \to \R$ we consider the problem \eqref{prob:gradient-learning}, which writes as 
\begin{equation*}
   \argmin{\vec{m} \in \R^d} \E{ \dist(\vec{m}-\vec{z})}.
\end{equation*}
In what follows, we will mostly consider functions $\dist$ satisfying the following assumption.

\begin{assumption}\label{Ass:D convex has solutions}
    The function $\dist : \mathbb{R}^d \to \mathbb{R}$ is convex, $\argmin{\vec{z}} \mathcal{D}(\vec{z})= \{\vec{0}\}$ and problem \eqref{prob:gradient-learning} admits a solution.
\end{assumption}

Note that assuming $\mathcal{D}$ to be convex and finite implies that it is continuous, therefore measurable, which means that $\E{ \dist(\vec{m}-\vec{z})}$ is well defined.
The assumption that $\argmin{\vec{z}}\mathcal{D}(\vec{z})= \{\vec{0}\}$ guarantees that in the trivial case when the distribution is a Dirac $\delta_{\m_0}$, then the solution to  \eqref{prob:gradient-learning} is exactly $\m_0$.
The assumption that a solution to \eqref{prob:gradient-learning} exists is easily verified in practice, under the assumption that $\dist$ is coercive:

\begin{lemma}\label{L:D coercive implies median exists}
    Let $\dist : \mathbb{R}^d \to \mathbb{R}$ be convex.
    Assume that $\mathbb{E}\left[ \mathcal{D}(\bz - \mathbb{E}\left[ \bz \right]) \right] < + \infty$. 
    If $\dist$ is coercive, i.e. $\lim\limits_{\|\vec{m}\| \to \infty} \dist(\vec{m})= + \infty$, then \eqref{prob:gradient-learning} admits a solution.
\end{lemma}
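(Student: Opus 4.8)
The plan is to recognize \eqref{prob:gradient-learning} as the minimization of the function
$F(\vec{m}) := \E{\dist(\vec{m}-\vec{z})}$
and to apply the direct method of the calculus of variations: I would show that $F$ is a proper, lower semicontinuous, convex, and coercive function on $\R^d$, and then conclude by the standard fact that such a function attains its infimum (its sublevel sets $\{F\le\alpha\}$ being closed by lower semicontinuity and bounded by coercivity, hence compact, and nonempty for $\alpha$ slightly above $\inf F$, so Weierstrass applies).

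First I would check that $F$ is well defined with values in $(-\infty,+\infty]$ and convex. Since $\dist$ is convex and everywhere finite on $\R^d$ it is continuous, hence measurable, and it admits an affine minorant: fixing any $\vec{s}\in\partial\dist(\vec{0})$ we have $\dist(\vec{x})\ge \dist(\vec{0})+\iprod{\vec{s}}{\vec{x}}$ for all $\vec{x}$, so $\dist(\vec{m}-\vec{z})\ge \dist(\vec{0})+\iprod{\vec{s}}{\vec{m}-\vec{z}}$, whose expectation is finite because $\vec{z}$ is integrable (implicit in the appearance of $\E{\vec{z}}$, and in any case forced by coercivity of $\dist$ together with the hypothesis). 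Hence $F(\vec{m})$ is well defined in $(-\infty,+\infty]$, and it is convex since $\vec{m}\mapsto\dist(\vec{m}-\vec{z})$ is convex for each fixed $\vec{z}$ and expectation preserves convexity. Properness then follows from the hypothesis, which gives $F(\E{\vec{z}})=\E{\dist(\E{\vec{z}}-\vec{z})}<+\infty$.

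Next I would obtain lower semicontinuity through Fatou's lemma: writing $h(\vec{m},\vec{z}) := \dist(\vec{m}-\vec{z})-\dist(\vec{0})-\iprod{\vec{s}}{\vec{m}-\vec{z}}\ge 0$, which is continuous in $\vec{m}$, for any sequence $\vec{m}_k\to\vec{m}$ Fatou gives $\liminf_k \E{h(\vec{m}_k,\vec{z})}\ge \E{\liminf_k h(\vec{m}_k,\vec{z})}=\E{h(\vec{m},\vec{z})}$; adding back the affine term $\dist(\vec{0})+\iprod{\vec{s}}{\vec{m}_k-\E{\vec{z}}}\to \dist(\vec{0})+\iprod{\vec{s}}{\vec{m}-\E{\vec{z}}}$ yields $\liminf_k F(\vec{m}_k)\ge F(\vec{m})$. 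For coercivity I would invoke Jensen's inequality: since $\vec{m}-\vec{z}$ is integrable and $\dist$ convex, $F(\vec{m})=\E{\dist(\vec{m}-\vec{z})}\ge \dist(\E{\vec{m}-\vec{z}})=\dist(\vec{m}-\E{\vec{z}})$, and the right-hand side tends to $+\infty$ as $\norm{\vec{m}}\to\infty$ precisely because $\dist$ is coercive.

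With $F$ proper, lsc, convex, and coercive, the Weierstrass argument sketched above yields a minimizer, i.e.\ \eqref{prob:gradient-learning} admits a solution. The only genuinely delicate points are measure-theoretic bookkeeping — guaranteeing $F$ never takes the value $-\infty$ (the affine minorant), that $F$ is finite somewhere (the hypothesis), and that $F$ is lsc rather than merely convex (the Fatou step, which matters because a convex function that equals $+\infty$ on part of $\R^d$ need not be lsc at the boundary of its domain). The coercivity itself, which is the conceptual heart of the statement, drops out immediately from Jensen's inequality.
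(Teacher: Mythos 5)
Your proof is correct, and it follows the same overall direct-method skeleton as the paper's (show that $\varphi(\m)=\E{\dist(\m-\bz)}$ is proper, convex, lower semicontinuous and coercive, then conclude existence of a minimizer), but the two substantive steps are handled by genuinely different arguments. For coercivity, the paper invokes the fact that a coercive finite convex function admits a minorant $\dist(\cdot)\geq a\norm{\cdot}+b$ with $a>0$ (Proposition 14.16 in \citet{BauCom}), and then bounds $\varphi(\m)\geq a\norm{\m-\E{\bz}}-a\,\E{\norm{\bz-\E{\bz}}}+b$, using the hypothesis once more to check that $\E{\norm{\bz-\E{\bz}}}<+\infty$; you instead get $\varphi(\m)\geq \dist(\m-\E{\bz})$ in one line from Jensen's inequality, which is shorter, avoids the external citation, and transfers coercivity of $\dist$ to $\varphi$ directly (it only needs integrability of $\bz$, which is implicit in the statement since $\E{\bz}$ appears there). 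For lower semicontinuity, the paper simply asserts that an expectation of lower semicontinuous functions is lower semicontinuous, whereas you give the justification this assertion actually requires: subtract the affine minorant $\dist(\vec{0})+\iprod{\vec{s}}{\m-\bz}$ with $\vec{s}\in\partial\dist(\vec{0})$ to obtain a nonnegative integrand and apply Fatou's lemma. So your route is, if anything, slightly more careful on the lsc step and more elementary on the coercivity step; the paper's linear-minorant bound has the minor side benefit of exhibiting an explicit at-least-linear growth rate for $\varphi$, which your Jensen bound does not provide but which is not needed for the conclusion.
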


\begin{proof}
    Let $\varphi(\m):= \mathbb{E}\left[ \mathcal{D}(\m - \bz) \right]$.
    Since we assume that $\varphi(\mathbb{E}\left[ \bz \right]) = \mathbb{E}\left[ \mathcal{D}(\bz - \mathbb{E}\left[ \bz \right]) \right] < + \infty$ then $\varphi$ is proper.
    Since $\mathcal{D}$ is convex then $\varphi$ is convex, as an expectation of convex functions.
    % For the lower semicontinuity this is a consequence of Fatou's lemma.
    Moreover $\mathcal{D}$ is convex and finite, which means that $\mathcal{D}$ is continuous on $\mathbb{R}^d$, see e.g. Corollary 8.39 in \citet{BauCom}.
    Therefore $\varphi$ is an expectation of  lower semicontinuous functions, which implies that $\varphi$ is lower semicontinuous.
    
    Since $\mathcal{D}$ is coercive, convex and continuous, we know that there exists $a \in (0,+\infty)$ and $b \in \mathbb{R}$ (see Proposition 14.16 in \citet{BauCom}) such that 
    $\mathcal{D}(\cdot) \geq a \Vert \cdot \Vert + b$.
    Therefore
    \begin{eqnarray*}
        \varphi(\m) & = & \mathbb{E}\left[ \mathcal{D}(\m - \bz) \right] 
        \geq
        a \mathbb{E}\left[ \Vert \m - \bz \Vert  \right] + b \\
        & \geq &
        a \Vert \m - \mathbb{E}\left[ \bz \right] \Vert - a\mathbb{E}\left[ \Vert \bz - \mathbb{E}\left[ \bz \right] \Vert \right] + b \\
        & \geq &
        a \Vert \m - \mathbb{E}\left[ \bz \right] \Vert - a\mathbb{E}\left[ \mathcal{D}( \bz - \mathbb{E}\left[ \bz \right] ) \right]
        \underset{\Vert \m \Vert \to \infty}{\longrightarrow} + \infty,
    \end{eqnarray*}
    where in the inequalities we used the fact that $\Vert \cdot \Vert$ is $1$-Lipschitz, and that $a \mathbb{E}\left[ \Vert \bz - \mathbb{E}\left[ \bz \right] \Vert \right] \leq a \mathbb{E}\left[ \mathcal{D}(\bz - \mathbb{E}\left[ \bz \right]) \right] -b < + \infty$.
    So $\varphi$ is coercive on top of being proper convex and lower semicontinuous. We can then conclude that $\varphi$ has a minimizer, with for instance Proposition 11.15 from \citet{BauCom}.
\end{proof}

%% OLD MODE PARAGRRAPH
% \paragraph{Mean, Median, Mode.} As indicated in \cref{sec:spp-updates}, if $\mathcal{D} = \tfrac12 \Vert \cdot \Vert^2$, then the solution to \eqref{prob:gradient-learning} is the \emph{mean}, if instead $\mathcal{D}$ is a $\ell_p$-norm, then the solution to \eqref{prob:gradient-learning} is the $\ell_p$-\emph{median} of the distribution.
% We also show here that the \emph{mode} of a distribution can be seen as a solution of \eqref{prob:gradient-learning}, when $\mathcal{D}$ is a particular nonconvex function.

% \begin{lemma}
%     Let $\mathcal{D}$ be the $\ell_0$-pseudonorm $\Vert \cdot \Vert_0$, where $\Vert \m \Vert_0$ returns the number of nonzero coefficients of $\m$.
%     Assume that the distribution of $\bz$ is independent component-wise.
%     Then the component-wise \emph{modes} of $\bz$ is a solution to \eqref{prob:gradient-learning}, which are the values appearing with the largest probability.
% \end{lemma}

% \begin{proof}
%     First, note that $\Vert \cdot \Vert_0$ is separable, in the sense that $\Vert \m \Vert_0 = \sum_{j=1}^d \vert \m_i \vert_0$, where $\vert t \vert_0 = 0$ if $t=0$ and $1$ otherwise.
%     So \eqref{prob:gradient-learning} can be decomposed component-wise, which means that we only have to prove the result for $d=1$.
%     And $\mathbb{E}\left[ \vert m - z \vert_0 \right]$ is exactly $\mathbb{P}(z \neq m)$, so minimizing it is equivalent to  find $m$ which maximizes $\mathbb{P}(z=m)$.
% \end{proof}

\subsection{Approximations Based on the Stochastic Proximal Point: General $\mathcal{D}$}\label{SS:SPP:general D formulas}

Given a proper, closed convex function $\varphi : \mathbb{R}^d \to \mathbb{R}\cup\{+\infty\}$ and $\tau > 0$, we recall the definition of the proximal operator
\begin{equation*}
    \prox{\tau \varphi}(\m_0) = \argmin{\m \in \mathbb{R}^d} \varphi(\m) + \frac{1}{2 \tau} \Vert \m - \m_0 \Vert^2.
\end{equation*}
Applying the \SPP{} algorithm to \eqref{prob:gradient-learning-3} gives the following iterations 
\begin{equation}\label{D:SPP median D abstract}
    \begin{cases}
        \text{Sample $\bg_t$ i.i.d. from $\mathcal{G}$} \\
    \m_{t+1} = \argmin{\m \in \mathbb{R}^d} \dist(\vec{m} - \vec{g}_t) + \frac{1}{2\tau}\|\vec{m}-\vec{m}_t\|^2.
    %\prox{\tau \mathcal{D}(\cdot - \bg_t)}(\m_t).
    \end{cases}
\end{equation}
As we see next, those iterations can be reformulated to simply involve the proximal operator of $\mathcal{D}$:

\begin{restatable}{proposition}{viewpointsSPP}\label{prop:viewpoints}
    Let $\mathcal{D}$ verify \cref{Ass:D convex has solutions}.
    The \SPP{} update~\eqref{D:SPP median D abstract} for solving~\eqref{eqn:online-spp-update} can be equivalently written as
    \begin{align}
        \vec{m}_{t+1} &=  \vec{g}_t + \prox{\tau\dist}(\vec{m}_t-\vec{g}_t) \label{eq:gtprox} \\
        &=  \prox{\tau\dist^*(\frac{\vec{m}_t - \cdot }{\tau})}(\vec{g}_t) , \label{eq:mtprox}
    \end{align}
    where $\dist^*(\vec{m}) := \sup_{\vec{y}} \big( \dotprod{\vec{y},\vec{m}} -  \dist(\vec{y}) \big)$ is the Fenchel conjugate of $\dist$, and $\bg_t$ is sampled i.i.d. from $\mathcal{G}$ at each iteration.
 \end{restatable}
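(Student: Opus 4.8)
The plan is to prove the two displayed equalities separately, each by a short convex-analysis computation, after first recording the regularity of $\dist$. Under \cref{Ass:D convex has solutions} the function $\dist$ is finite and convex on all of $\R^d$, hence continuous, hence proper, closed and convex; since $\argmin{\vec z}\dist(\vec z)=\{\vec 0\}$ it is also bounded below by $\dist(\vec 0)$. Consequently $\dist^*$ is proper, closed and convex as well, and every proximal operator below is well defined and single-valued (the quadratic terms make each objective strictly convex and coercive, so the corresponding $\m_{t+1}$ and minimizers exist and are unique).

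For \eqref{eq:gtprox} I would simply change variables in \eqref{D:SPP median D abstract}: writing $\vec v = \vec m - \bg_t$ and using $\norm{\vec m - \m_t}^2 = \norm{\vec v - (\m_t - \bg_t)}^2$, the minimization defining $\m_{t+1}$ becomes $\bg_t + \argmin{\vec v\in\R^d}\dist(\vec v) + \tfrac{1}{2\tau}\norm{\vec v - (\m_t - \bg_t)}^2 = \bg_t + \prox{\tau\dist}(\m_t - \bg_t)$, which is exactly \eqref{eq:gtprox}.

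For \eqref{eq:mtprox} I would start from its right-hand side. Unfolding the proximal operator (reading the subscript so that $\tau$ scales the whole composed map $\vec m\mapsto\dist^*(\tfrac{\m_t-\vec m}{\tau})$) and substituting $\vec w = \tfrac{\m_t-\vec m}{\tau}$, i.e. $\vec m = \m_t - \tau\vec w$, gives $\norm{\vec m-\bg_t}^2 = \tau^2\norm{\vec w - \tfrac{\m_t-\bg_t}{\tau}}^2$; dividing the objective by $\tau$ (which does not change the argmin) turns the problem into $\argmin{\vec w}\dist^*(\vec w) + \tfrac{1}{2\tau^{-1}}\norm{\vec w - \tfrac{\m_t-\bg_t}{\tau}}^2 = \prox{\tau^{-1}\dist^*}(\tfrac{\m_t-\bg_t}{\tau})$, so that $\prox{\tau\dist^*(\frac{\m_t-\cdot}{\tau})}(\bg_t) = \m_t - \tau\,\prox{\tau^{-1}\dist^*}(\tfrac{\m_t-\bg_t}{\tau})$. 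Applying the extended Moreau decomposition $x = \prox{\tau\dist}(x) + \tau\,\prox{\tau^{-1}\dist^*}(x/\tau)$ at $x = \m_t - \bg_t$ rewrites the right-hand side as $\bg_t + \prox{\tau\dist}(\m_t-\bg_t)$, which is \eqref{eq:gtprox}, and hence equals $\m_{t+1}$.

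The only delicate point — really the main obstacle — is bookkeeping: correctly reading $\tau\dist^*(\tfrac{\m_t-\cdot}{\tau})$ as a single function with $\tau$ out front, tracking the $\tau$-rescalings produced by pushing the affine substitution $\vec w=(\m_t-\vec m)/\tau$ through the quadratic, and invoking the Moreau identity in its correctly scaled form. If one prefers to avoid the Moreau identity, an equivalent route is to compare first-order optimality conditions: \eqref{D:SPP median D abstract} gives $\tfrac{\m_t-\m_{t+1}}{\tau}\in\partial\dist(\m_{t+1}-\bg_t)$, inverting the subdifferential yields $\m_{t+1}-\bg_t\in\partial\dist^*(\tfrac{\m_t-\m_{t+1}}{\tau})$, and this is precisely the optimality condition of the strictly convex problem defining $\prox{\tau\dist^*(\frac{\m_t-\cdot}{\tau})}(\bg_t)$, so uniqueness of its minimizer concludes.
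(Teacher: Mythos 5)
Your proof is correct. The derivation of \eqref{eq:gtprox} by the change of variables $\vec{v}=\vec{m}-\bg_t$ is exactly the paper's argument. For \eqref{eq:mtprox} you use the same key ingredient as the paper --- the Moreau decomposition together with the scaling identity $(\tau\dist)^*=\tau\dist^*(\cdot/\tau)$ --- but you organize it differently: the paper works forward from \eqref{eq:gtprox}, rewriting $\prox{\tau\dist}$ via Moreau and then pushing the translation and reflection through the prox with the identities $\argmin{\vec{y}} f(\vec{y})=\vec{m}+\argmin{\bar{\vec{y}}} f(\bar{\vec{y}}+\vec{m})$ and $\prox{g}(\vec{m})=-\prox{f}(-\vec{m})$ for $f=g(-\cdot)$, whereas you work backward from the right-hand side, performing the affine substitution $\vec{w}=(\vec{m}_t-\vec{m})/\tau$ directly inside the argmin to obtain $\prox{\tau\dist^*(\frac{\vec{m}_t-\cdot}{\tau})}(\bg_t)=\vec{m}_t-\tau\,\prox{\tau^{-1}\dist^*}\big(\tfrac{\vec{m}_t-\bg_t}{\tau}\big)$ and then invoking the extended Moreau identity at $\vec{m}_t-\bg_t$; this is equivalent but arguably cleaner bookkeeping, since all rescalings are handled in one substitution. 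Your alternative route via first-order optimality conditions --- $\tfrac{\vec{m}_t-\vec{m}_{t+1}}{\tau}\in\partial\dist(\vec{m}_{t+1}-\bg_t)$, inverted to $\vec{m}_{t+1}-\bg_t\in\partial\dist^*\big(\tfrac{\vec{m}_t-\vec{m}_{t+1}}{\tau}\big)$, matched against the optimality condition of the strongly convex problem defining the right-hand side --- is genuinely different from the paper: it bypasses the Moreau decomposition and prox calculus altogether, at the mild cost of checking that $\dist^*$ is proper and closed (which follows since $\dist$ is finite convex) and applying the affine chain rule for the bijective map $\vec{m}\mapsto(\vec{m}_t-\vec{m})/\tau$, where no constraint qualification is needed.
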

 \begin{proof}
 Here we make use of proximal calculus rules, which can found in most textbooks on convex analysis, for example Chapter 6 in~\citet{Beck2017}.
 Starting from \eqref{D:SPP median D abstract}, and applying the variable transformation $\by=\bar \by+\vec{g}_t$, we have 
 \begin{align*} 
     \begin{split}
         &\vec{m}_{t+1} = \argmin{\by\in \R^d} \dist(\by-\vec{g}_t) + \tfrac{1}{2\tau} \|\by-\vec{m}_t\|^2 \\
         &\hspace{1ex}=\vec{g}_t + \argmin{\bar \by\in \R^d} \dist(\bar \by) + \tfrac{1}{2\tau} \|\bar \by-(\vec{m}_t-\vec{g}_t)\|^2  \\
        &\hspace{1ex}=\vec{g}_t + \prox{\tau\dist}(\vec{m}_t-\vec{g}_t).
     \end{split}
 \end{align*}
 This proves~\eqref{eq:gtprox}. As for~\eqref{eq:mtprox}, we combine $(\tau \dist)^*=\tau\dist^*(\cdot/\tau)$ \citep[Thm.\ 4.14]{Beck2017} and the Moreau decomposition theorem \citep[Thm.\ 6.44]{Beck2017}, to obtain
 \[ \prox{\tau\dist}(\vec{m}) = \vec{m} - \prox{\tau\dist^*(\frac{\cdot}{\tau})}(\vec{m}). \]
 Plugging into~\eqref{eq:gtprox} we have
     \begin{align}
        \vec{m}_{t+1} &=  \vec{g}_t + \prox{\tau\dist}(\vec{m}_t-\vec{g}_t) \nonumber \\
       &=  \vec{g}_t + \vec{m}_t-\vec{g}_t - \prox{\tau\dist^*(\frac{\cdot}{\tau})}(\vec{m}_t-\vec{g}_t)  \nonumber \\
       &= \vec{m}_t- \prox{\tau\dist^*(\frac{\cdot}{\tau})}(\vec{m}_t-\vec{g}_t)   \label{eq:templz8oehdzd}
        \end{align}
       
        Finally, using that for any function $f$ it holds $\argmin{\vec{y}} f(\vec{y}) = \vec{m} + \argmin{\bar{\vec{y}}} f(\bar{\vec{y}}+\vec{m})$
        in~\eqref{eq:templz8oehdzd} gives
       \begin{align*}
        \vec{m}_{t+1} &=   \vec{m}_t- \left(\vec{m}_t+\prox{\tau\dist^*(\frac{\cdot+\vec{m}_t}{\tau})}(-\vec{g}_t)  \right)\\
        &= -\prox{\tau\dist^*(\frac{\cdot+\vec{m}_t}{\tau})}(-\vec{g}_t) \\
        &= \prox{\tau\dist^*(\frac{\vec{m}_t\cdot -}{\tau})}(\vec{g}_t) 
        \end{align*}
         where in the final equality we used that if $f(\vec{x}) = g(-\vec{x})$ we have that
        \[\prox{g}(\vec{m}) = - \prox{ f }(-\vec{m}). \]
\end{proof}

\subsection{Approximations Based on the Stochastic Proximal Point: Particular Cases of $\mathcal{D}$}
\label{app:sec:SPP-updates-particular-cases}

We first characterize the \SPP{} update~\eqref{eqn:online-spp-update} for when $\dist$ is the $\ell_p$-norm, and show it simply require to project the sampled gradient onto a certain ball of radius $\tau$ centered at $\vec{m}_t$. 

\begin{restatable}{proposition}{SPPLp}\label{prop:pnormview}
    Let $\dist = \|\cdot\|_p$ be the $\ell_p$-norm for $p \in [1, \infty].$ Let $q \in [1, \infty]$ such that $\frac{1}{p} + \frac{1}{q}=1$, and let 
    $\mathbb{B}_{q}(\vec{m},\tau)  \; := \; \left\{ \vec{y} \; : \; \norm{\vec{m}-\vec{y}}_q \leq \tau \right\}.$
    The \SPP{} update~\eqref{eqn:online-spp-update} is given by
    \begin{align}
        \vec{m}_{t+1} &=  \vec{m}_t \; +\; \Proj{\mathbb{B}_{q}(\vec{0},\tau)  } \big( \vec{g}_t-\vec{m}_t \big) \label{eq:projzero} \\
        &= \Proj{\mathbb{B}_q( \vec{m}_t,\tau) }\big(  \vec{g}_t \big).\label{eq:projmt}
    \end{align}
\end{restatable}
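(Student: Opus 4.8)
The plan is to specialize the two equivalent reformulations of the \SPP{} update from \cref{prop:viewpoints} to the case $\dist = \|\cdot\|_p$, using that the Fenchel conjugate of a norm is the indicator of the dual unit ball and that the proximal operator of an indicator function is a metric projection. To apply \cref{prop:viewpoints} one first notes that $\dist = \|\cdot\|_p$ satisfies the requirements of \cref{Ass:D convex has solutions}: convexity and $\vec{0}$ being its unique minimizer are immediate, and solvability of the estimation problem is handled separately.

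The convex-analytic ingredients I would assemble are: (a) it is classical (see e.g.\ \citet{Beck2017}) that $(\|\cdot\|_p)^* = \iota_{\mathbb{B}_q(\vec{0},1)}$, the indicator of the closed unit ball of the dual norm $\|\cdot\|_q$ with $\tfrac1p+\tfrac1q=1$, and this holds verbatim at the endpoints $p=1$ ($q=\infty$) and $p=\infty$ ($q=1$); (b) the rescaling rule $(\tau\dist)^*=\tau\dist^*(\cdot/\tau)$ together with the elementary identities $\iota_C(\cdot/\tau)=\iota_{\tau C}$ and $\tau\,\iota_C=\iota_C$ for $\tau>0$, which give $\tau\dist^*(\cdot/\tau)=\iota_{\mathbb{B}_q(\vec{0},\tau)}$ and, using $\|\vec{m}_t-\vec{y}\|_q\leq\tau \iff \vec{y}\in\mathbb{B}_q(\vec{m}_t,\tau)$, also $\tau\dist^*\!\big(\tfrac{\vec{m}_t-\cdot}{\tau}\big)=\iota_{\mathbb{B}_q(\vec{m}_t,\tau)}$; and (c) $\prox{\iota_C}=\Proj{C}$ for any nonempty closed convex $C$, in particular for any $\ell_q$-ball.

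With these in hand, I would plug into \cref{prop:viewpoints}. Invoking the Moreau decomposition exactly as in its proof, \eqref{eq:gtprox} rewrites as $\vec{m}_{t+1}=\vec{m}_t-\prox{\tau\dist^*(\cdot/\tau)}(\vec{m}_t-\vec{g}_t)=\vec{m}_t-\Proj{\mathbb{B}_q(\vec{0},\tau)}(\vec{m}_t-\vec{g}_t)$; since $\mathbb{B}_q(\vec{0},\tau)$ is symmetric about the origin, $\Proj{\mathbb{B}_q(\vec{0},\tau)}(-\vec{v})=-\Proj{\mathbb{B}_q(\vec{0},\tau)}(\vec{v})$, which yields \eqref{eq:projzero}. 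For \eqref{eq:projmt} I would apply \eqref{eq:mtprox} directly, getting $\vec{m}_{t+1}=\prox{\tau\dist^*((\vec{m}_t-\cdot)/\tau)}(\vec{g}_t)=\Proj{\mathbb{B}_q(\vec{m}_t,\tau)}(\vec{g}_t)$; equivalently, this follows from \eqref{eq:projzero} by translation equivariance of the projection, $\Proj{\mathbb{B}_q(\vec{m}_t,\tau)}(\vec{g}_t)=\vec{m}_t+\Proj{\mathbb{B}_q(\vec{0},\tau)}(\vec{g}_t-\vec{m}_t)$. There is no real obstacle here: the argument is routine proximal calculus once \cref{prop:viewpoints} is available, and the only points requiring a little attention are checking the hypotheses of \cref{prop:viewpoints} for $\dist=\|\cdot\|_p$ and making sure the conjugate-of-a-norm formula and the Moreau decomposition are used in the form valid at the endpoints $p\in\{1,\infty\}$, where the dual norm degenerates to $\|\cdot\|_\infty$ or $\|\cdot\|_1$.
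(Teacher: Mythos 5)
Your proposal is correct and follows essentially the same route as the paper: the paper plugs the known formula $\prox{\tau\|\cdot\|_p}(\vec{m})=\vec{m}-\tau\Proj{\mathbb{B}_q(\vec{0},1)}(\vec{m}/\tau)$ (cited from \citet[Example 6.47]{Beck2017}) into \eqref{eq:gtprox} and then performs the same symmetry, rescaling, and translation manipulations of the projection that you describe. The only difference is cosmetic: you re-derive that prox formula from the conjugate-of-a-norm identity plus the Moreau decomposition rather than quoting it, which is precisely the argument behind the cited example.
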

\begin{proof}
The proximal operator of the $\ell_p$-norm is given by 
$$\prox{\tau \norm{\cdot}_p}(\vec{m})  = \vec{m} - \tau \Proj{\mathbb{B}_{q}(\vec{0}) }(\vec{m}/\tau),$$
see Example 6.47 in~\citet{Beck2017}. Using this in~\eqref{eq:gtprox} gives
\begin{align*}
 \vec{m}_{t+1} &=    \vec{g}_t  + \prox{\tau\norm{\cdot}_p}(\vec{m}_t-\vec{g}_t) \\
 &=  \vec{m}_t  -\tau \Proj{\mathbb{B}_{q}(\vec{0},1) }\big(\frac{\vec{m}_t-\vec{g}_t}{\tau}\big)\\
  &=  \vec{m}_t  +\tau \Proj{\mathbb{B}_{q}(\vec{0},1) }\big(\frac{\vec{g}_t-\vec{m}_t}{\tau}\big)\\
    &=  \vec{m}_t  + \Proj{\mathbb{B}_{q}(\vec{0},\tau) }\big( \vec{g}_t-\vec{m}_t \big)\\
  &=  \vec{m}_t  +\left( \Proj{\mathbb{B}_{q}(\vec{m}_t,\tau) }\big( \vec{g}_t \big)- \vec{m}_t\right)\\
  % &= \tau \Proj{\mathbb{B}_{q}(\frac{\vec{m}_t}{\tau},1) }\big(\frac{ \vec{g}_t}{\tau}\big) \\
  &= \Proj{\mathbb{B}_{q}( \vec{m}_t,\tau) }\big(  \vec{g}_t \big) 
\end{align*}
where the third equality we used that 
$$\Proj{\mathbb{B}_{q}(\vec{0},1)(\vec{m})} = -\Proj{\mathbb{B}_{q}(\vec{0},1)(-\vec{m})},  $$ followed by
\[\Proj{\mathbb{B}_{q}(\vec{m}/\tau,1)}(\vec{y}/\tau) = \frac{1}{\tau} \Proj{\mathbb{B}_{q}(\vec{m},\tau)}(\vec{y})   \]
in the  fourth equality and 
\[  \Proj{\mathbb{B}_{q}(\vec{0},\tau)}(\vec{y}) = \Proj{\mathbb{B}_{q}(\vec{m},\tau)}(\vec{y}-\vec{m}) - \vec{m}  \]
in the fifth equality.
\end{proof}
\begin{figure}
    \centering
    \includegraphics[width=7cm,, clip]{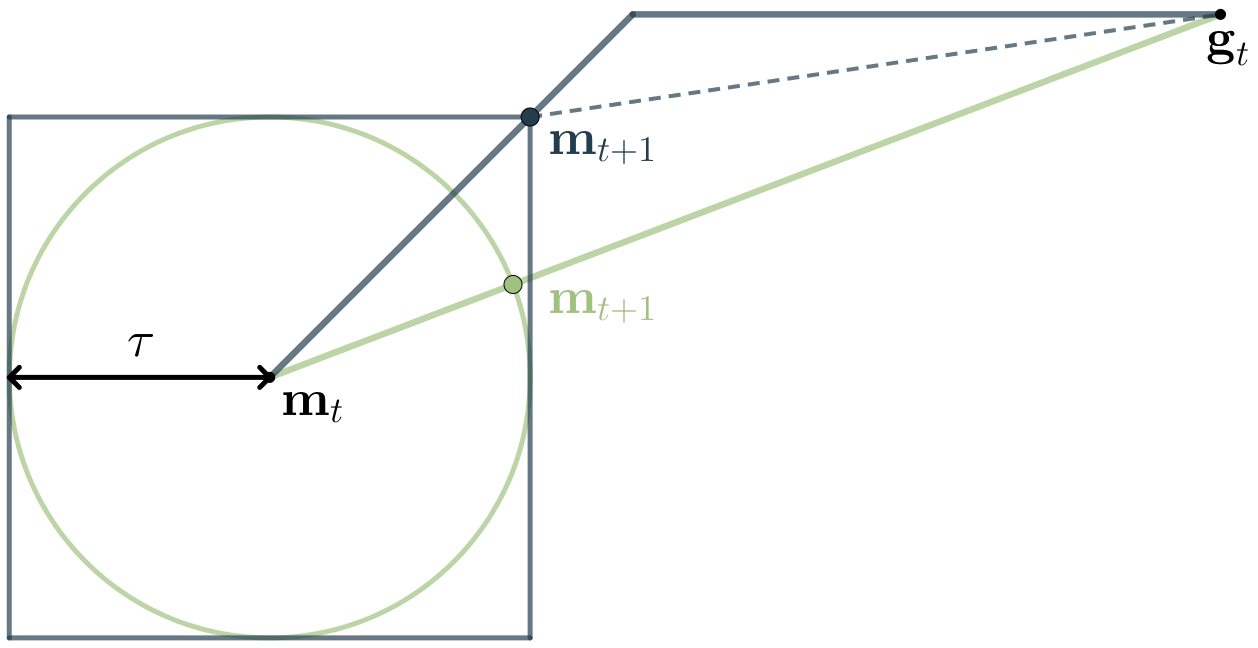}
    \caption{One \SPP{} update~\eqref{eq:projmt} for $\dist=\|\cdot\|_1$ (blue) or $\dist=\|\cdot\|_2$ (green) amounts to project onto a ball of the dual norm, centered at the current iterate $\vec{m}_t$ and of radius $\tau > 0$. The thick lines represent the projections of $\bg_t$ for all possible values of $\tau$. }
    \label{fig:proj}
\end{figure}

Using \cref{prop:pnormview} we can now develop closed form updates of the \SPP{} method for when $\dist$ is the $\ell_1$- or the $\ell_2$-norm, with both methods being related to clipping.

%%%%%%%%%%%%%%%%%%
\corLonenorm*
%%%%%%%%%%%%%%%%%
\begin{proof}
        Follows from~\eqref{eq:projzero} and that
        \[ \;\min\{\max\{\vec{v}, - \tau\} , \tau\}=  \Proj{\mathbb{B}_{\infty}( \vec{0},\tau) }\big(  \vec{v} \big). \]
        % that is $\vec{m}_{t+1} =  \vec{g}_t + \prox{\tau\|\cdot\|_2}(\vec{m}_t - \vec{g}_t)$, and \cref{lem:prox-and-clip-l2}. \qed \\ 
\end{proof}
%
%

%%%%%%%%%%%%%%%%%%%%%%%%%
\corLtwonorm*
%%%%%%%%%%%%%%%%%%%%%%%%%
   \begin{proof}
        Follows from~\eqref{eq:projzero},  by using that 
        \[\Proj{\mathbb{B}_{2}( \vec{0},\tau) }\big(  \vec{v} \big) = \frac{\tau \; \vec{v}}{\max\{\tau, \|\vec{v}\|_2 \}} . \]
        %A direct proof is also given in \cref{lem:prox-and-clip-l1}. 
    \end{proof}

We end this section with the closed form updates of the \SPP{} method for when $\dist$ is the Huber function.

%%%%%%%%%%%%%%%%%%%%%%%%%
\corLhuber*
%%%%%%%%%%%%%%%%%%%%%%%%%

\begin{proof}
    Follows from \eqref{eqn:online-spp-update} and the fact that the proximal operator of the Huber function is given by 
    $\prox{\tau H_{\mu}}(\vec{z}) = \big(1-\frac{\mu\tau}{\max\{\|\vec{z}\|, \mu(1+\tau)\}}\big) \vec{z}.$ For a proof, see \citet[Example 6.66]{Beck2017}, and note that here we have the additional factor $\mu$ in the definition of $H_{\mu}$.
\end{proof}

\subsection{Approximations Based on Subgradient Updates}
\label{sec:SGDmedian}
Instead of \SPP{}, we could solve~\eqref{prob:gradient-learning-3} by iteratively taking steps of stochastic subgradient descent. If $\dist$ is convex, then in each iteration we sample a gradient $\vec{g}_t$, and update our current estimate $\vec{m}_t$ by
\begin{align}\label{eqn:subgradient-step}
    \vec{m}_{t+1} = \vec{m}_t - \tau\vec{u}_t,\quad \vec{u}_t \in \partial \dist(\vec{m}_t - \vec{g}_t).
\end{align}
Here, $\tau >0$ is the learning rate, and $\vec{u}_t$ is a subgradient (of the convex subdifferential). Since $\dist$ could be a non-differentiable function such as the $\ell_2$- or $\ell_1$-norm, we use subgradients instead of gradients. 

\paragraph{Squared $\ell_2$-norm.} As a first simple example of~\eqref{eqn:subgradient-step}, let $\dist = \tfrac{1}{2}\norm{\cdot}_2^2.$ In this case~\eqref{eqn:subgradient-step} becomes
\begin{align} \label{eq:mom1}
    \vec{m}_{t+1} = \vec{m}_t - \tau(\vec{m}_t - \vec{g}_t) = (1-\tau) \vec{m}_t + \tau\vec{g}_t.
\end{align}
This is again (heavy-ball) momentum (cf.\ \ref{eqn:momentum}), now with coefficient $\beta=1-\tau$. Note that in contrast to \SPP{}, the coefficient $\beta$ could be negative, and is only in $[0,1)$ (as typically is the case for momentum) if $\tau \in [0,1]$.  

\paragraph{$\ell_1$-norm.} If we choose $\dist= \|\cdot\|_1$, update \eqref{eqn:subgradient-step} gives
\begin{align*}
    \vec{m}_{t+1} = \vec{m}_t + \tau\sgn(\vec{g}_t - \vec{m}_t),
\end{align*}
where the $\sgn$-operator can take any value in $[-1,1]$ for coordinates $i\in [d]$ such that $(\vec{g}_{t}-\vec{m}_{t})_i=0$.
This recovers a version of sign-\SGD{} \citep{Riedmiller1993,pmlr-v80-bernstein18a}, where the $\mathrm{sign}$-operation is applied to the increment $\vec{m}_t - \vec{g}_t$. For example, if we reset $\vec{m}_t=0$ in every iteration, we obtain exactly sign-\SGD{}.

\paragraph{$\ell_2$-norm.} If we choose $\dist= \|\cdot\|_2$, and if $\vec{g}_t \neq \vec{m}_t$, update \eqref{eqn:subgradient-step} gives
    \begin{align*}
        \vec{m}_{t+1} = \vec{m}_t - 
            \tau\frac{\vec{m}_t-\vec{g}_t}{\|\vec{m}_t-\vec{g}_t\|} = \big(1-\frac{\tau}{\|\vec{m}_t-\vec{g}_t\|}\big)\vec{m}_t + \frac{\tau}{\|\vec{m}_t-\vec{g}_t\|} \vec{g}_t.
    \end{align*}
    Otherwise, if $\vec{g}_t = \vec{m}_t$, we can choose any $\vec{u}_t$ with $\|\vec{u}_t\|\leq 1$ and set $\vec{m}_{t+1} = \vec{m}_t - \tau \vec{u}_t$; in particular we can set $\vec{m}_{t+1} = \vec{m}_t$.

\subsection{Rates for \SPP{}}

Now that we have presented how to compute the iterates \eqref{eqn:online-spp-update} in practice, let us state its convergence properties. Since all these methods are instantiates of the \SPP{} method, we can use standard results such as \citet[Thm.\ 4.4]{Davis2019}.

\begin{restatable}{proposition}{proxformedian}
\label{prop:prox for computing median}
    Let $\dist$ be a norm, let $\varphi(\vec{m}) := \EE{\vec{g}\sim \mathcal{G}}{ \dist(\vec{m}- \vec{g})}$, and let $\vec{m}_* \in \argmin{\vec{m}\in \R^d} \varphi(\vec{m})$. 
    If $\vec{m}_t$ is generated by \eqref{eqn:online-spp-update}
    then for $\bar{\vec{m}}_T = \frac{1}{T}\sum_{t=1}^T \vec{m}_t$ we have
    \begin{equation*}
        \mathbb{E}\left[ \varphi(\bar{\vec{m}}_T) - \varphi(\vec{m}_*) \right] = O \left( \frac{\Vert \vec{m}_0 - \vec{m}_* \Vert^2}{\tau T} + \tau \right).
    \end{equation*}
\end{restatable}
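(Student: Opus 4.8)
The plan is to run the textbook analysis of the stochastic proximal point (\SPP{}) method on the stochastic objective $f(\vec{m};\vec{g}) := \dist(\vec{m}-\vec{g})$, whose expectation over $\vec{g}\sim\mathcal{G}$ is $\varphi$. Since $\dist$ is a norm, each $f(\cdot;\vec{g})$ is convex and finite, hence continuous, and $\varphi$ is convex and (being finite at $\vec{m}_*$ and globally Lipschitz) finite everywhere. Because all norms on $\R^d$ are equivalent, there is a constant $G<\infty$ such that $\dist$ is $G$-Lipschitz with respect to $\|\cdot\|_2$, so every subgradient $\vec{s}\in\partial_{\vec{m}} f(\vec{m};\vec{g})$ obeys $\|\vec{s}\|_2\le G$. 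The proximal subproblem in \eqref{eqn:online-spp-update} is $\tfrac1\tau$-strongly convex, so $\vec{m}_{t+1}$ is well defined, and its first-order optimality condition yields a subgradient $\vec{s}_{t+1}\in\partial_{\vec{m}} f(\vec{m}_{t+1};\vec{g}_t)$ with $\vec{m}_{t+1}=\vec{m}_t-\tau\vec{s}_{t+1}$.

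First I would expand $\|\vec{m}_{t+1}-\vec{m}_*\|^2 = \|\vec{m}_t-\vec{m}_*\|^2 - 2\tau\langle\vec{s}_{t+1},\vec{m}_t-\vec{m}_*\rangle + \tau^2\|\vec{s}_{t+1}\|^2$, split $\vec{m}_t-\vec{m}_* = (\vec{m}_t-\vec{m}_{t+1}) + (\vec{m}_{t+1}-\vec{m}_*)$, and use $\langle\vec{s}_{t+1},\vec{m}_t-\vec{m}_{t+1}\rangle = \tau\|\vec{s}_{t+1}\|^2$ together with the subgradient inequality $\langle\vec{s}_{t+1},\vec{m}_{t+1}-\vec{m}_*\rangle \ge f(\vec{m}_{t+1};\vec{g}_t)-f(\vec{m}_*;\vec{g}_t)$. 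This produces a one-step inequality in which $f$ is evaluated at the data-dependent point $\vec{m}_{t+1}$; the key move is to trade this for $\vec{m}_t$ using $\|\vec{m}_{t+1}-\vec{m}_t\|=\tau\|\vec{s}_{t+1}\|\le\tau G$ and $G$-Lipschitzness of $f(\cdot;\vec{g}_t)$, which (after dropping a nonpositive $-\tau^2\|\vec{s}_{t+1}\|^2$ term) gives
\[
\|\vec{m}_{t+1}-\vec{m}_*\|^2 \;\le\; \|\vec{m}_t-\vec{m}_*\|^2 \;-\; 2\tau\big(f(\vec{m}_t;\vec{g}_t)-f(\vec{m}_*;\vec{g}_t)\big) \;+\; 2\tau^2 G^2 .
\]

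Next I would take expectations: since $\vec{m}_t$ is a function of $\vec{g}_0,\dots,\vec{g}_{t-1}$ while $\vec{g}_t$ is drawn independently from $\mathcal{G}$, conditioning on the natural filtration $\mathcal{F}_t$ gives $\E{f(\vec{m}_t;\vec{g}_t)\mid\mathcal{F}_t}=\varphi(\vec{m}_t)$ and $\E{f(\vec{m}_*;\vec{g}_t)}=\varphi(\vec{m}_*)$, hence $2\tau\,\E{\varphi(\vec{m}_t)-\varphi(\vec{m}_*)}\le \E{\|\vec{m}_t-\vec{m}_*\|^2}-\E{\|\vec{m}_{t+1}-\vec{m}_*\|^2}+2\tau^2 G^2$. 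Summing over the $T$ iterates telescopes the distance terms, and dividing by $2\tau T$ and invoking Jensen's inequality for the convex $\varphi$ at the average iterate $\bar{\vec{m}}_T$ yields $\E{\varphi(\bar{\vec{m}}_T)-\varphi(\vec{m}_*)}\le \tfrac{\|\vec{m}_0-\vec{m}_*\|^2}{2\tau T}+\tau G^2$, which is the advertised $O\big(\|\vec{m}_0-\vec{m}_*\|^2/(\tau T)+\tau\big)$ rate since $G$ depends only on $\dist$ and $d$ (a possible off-by-one in the indexing of the block over which $\bar{\vec{m}}_T$ averages is absorbed by the $O(\cdot)$). I do not anticipate a real obstacle; the only subtle point is precisely the appearance of the data-correlated point $\vec{m}_{t+1}$ in the descent inequality, handled by the Lipschitz step above — equivalently, one may simply invoke the convex instance of the stochastic model-based convergence result \citet[Thm.\ 4.4]{Davis2019} with the model $f(\vec{m};\vec{g})=\dist(\vec{m}-\vec{g})$.
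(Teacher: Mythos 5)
Your proof is correct, and it is essentially a superset of the paper's argument. The paper's own proof is very short: it observes that a norm is convex and, by equivalence of norms on $\R^d$, globally $G$-Lipschitz, so each $\dist(\cdot-\vec{g})$ and hence $\varphi$ is convex and $G$-Lipschitz, and then it invokes the convex case of \citet[Thm.~4.4]{Davis2019} as a black box — exactly the alternative you mention in your final sentence. What you do differently is to unfold that black box: you derive the one-step inequality from the prox optimality condition $\vec{m}_{t+1}=\vec{m}_t-\tau\vec{s}_{t+1}$, and you correctly identify and resolve the only delicate point, namely that the subgradient inequality produces the gap at the data-correlated point $\vec{m}_{t+1}$, which you trade for $\vec{m}_t$ using $\Vert\vec{m}_{t+1}-\vec{m}_t\Vert\leq\tau G$ at the cost of an extra $O(\tau^2G^2)$ per step before conditioning on $\mathcal{F}_t$; this is precisely the mechanism hidden inside the cited theorem. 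Your handling of the off-by-one in the averaging window is also fine: the two averages differ by $\tfrac1T\Vert\vec{m}_T-\vec{m}_0\Vert\leq\tau G$, so $G$-Lipschitzness of $\varphi$ absorbs the discrepancy into the $O(\tau)$ term. The trade-off between the two routes is the usual one: your self-contained derivation yields explicit constants ($\tfrac{\Vert\vec{m}_0-\vec{m}_*\Vert^2}{2\tau T}+\tau G^2$) and makes the argument independent of the external reference, while the paper's citation-based proof is shorter and reduces the verification to checking the convex-Lipschitz hypotheses.
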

\begin{proof}
    Since we assume that $\dist$ is a norm over $\R^d$ then it convex and continuous. 
    Moreover, since all norms are equivalent in $\mathbb{R}^d$, we have that $\dist(\vec{w}) \leq C \norm{\vec{w}}_2$ for some constant $G>0$. Thus $\dist$ is a $G$-Lipschitz function. 
    If now we write $\varphi_{\bg} := \mathcal{D}( \cdot - \bg)$, it is clear that it is convex and Lipschitz.
    Therefore, $\varphi = \EE{\bg}{\varphi_{\bg}}$ is itself a convex $G$-Lipschitz function.
    The claim now follows from \citet[Theorem 4.4]{Davis2019}.
\end{proof}

Here is a corollary stating rates on the expected distance between the approximate sample median and the true sample median, when using the $\ell_1$-norm:

\begin{corollary}
    In the context of \cref{prop:prox for computing median}, assume that $\mathcal{D}= \|\cdot\|_1$ and that $\mathcal{G}$ is a sum of Dirac measures $\tfrac{1}{n}\sum_{i=1}^n \delta_{\bg^{(i)}}$.
    Then we have further
    \begin{equation*}
        \mathbb{E}\left[ \Vert \bar{\vec{m}}_T - \vec{m}_* \Vert \right] = O \left( \frac{\Vert \vec{m}_0 - \vec{m}_* \Vert^2}{\tau T} + \tau \right).
    \end{equation*}
\end{corollary}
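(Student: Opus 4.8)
The plan is to upgrade the function-value guarantee of \cref{prop:prox for computing median} to a bound on $\norm{\bar{\vec{m}}_T-\vec{m}_*}$ by exploiting a \emph{sharpness} (linear error bound) of the objective, which is available precisely because $\mathcal{D}=\norm{\cdot}_1$ and $\mathcal{G}$ is finitely supported. With $\mathcal{G}=\tfrac1n\sum_{i=1}^n\delta_{\bg^{(i)}}$ and $\mathcal{D}=\norm{\cdot}_1$, the objective $\varphi(\vec{m})=\tfrac1n\sum_{i=1}^n\norm{\vec{m}-\bg^{(i)}}_1$ is a finite, separable, piecewise-linear convex function; it is coercive, so (as in the proof of \cref{L:D coercive implies median exists}) its minimizer set $S:=\argmin{\vec{m}\in\R^d}\varphi(\vec{m})$ is nonempty. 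The first step is to establish the error bound: there is a constant $\mu>0$, depending only on the fixed points $\bg^{(1)},\dots,\bg^{(n)}$, such that
\[
\varphi(\vec{m})-\inf\varphi\;\geq\;\mu\,\Dist(\vec{m},S)\qquad\text{for all }\vec{m}\in\R^d .
\]

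This is the classical Hoffman-type error bound for polyhedral convex functions; for the present $\varphi$ it can also be checked directly from separability, since each coordinate summand $t\mapsto\tfrac1n\sum_{i=1}^n\lvert t-(\bg^{(i)})_j\rvert$ is a one-dimensional piecewise-linear convex function whose minimizer set is a (possibly degenerate) interval $I_j$ and whose slopes immediately outside $I_j$ have absolute value at least $\tfrac1n$; consequently this summand exceeds its minimum by at least $\tfrac1n\,\Dist(t,I_j)$, and summing over $j$ while using $S=\prod_{j=1}^d I_j$ and $\norm{\cdot}_1\geq\norm{\cdot}_2$ gives the displayed inequality with $\mu\geq\tfrac1n$.

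The second step is to apply the error bound at the random averaged iterate $\bar{\vec{m}}_T$, take expectation, and invoke \cref{prop:prox for computing median}, recalling that $\varphi(\vec{m}_*)=\inf\varphi$ for any $\vec{m}_*\in S$:
\[
\mathbb{E}\!\left[\Dist(\bar{\vec{m}}_T,S)\right]\;\leq\;\tfrac1\mu\,\mathbb{E}\!\left[\varphi(\bar{\vec{m}}_T)-\varphi(\vec{m}_*)\right]\;=\;O\!\left(\frac{\norm{\vec{m}_0-\vec{m}_*}^2}{\tau T}+\tau\right),
\]
with the factor $\tfrac1\mu$ absorbed into $O(\cdot)$. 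When the $\ell_1$-median is unique, $S=\{\vec{m}_*\}$ and $\Dist(\bar{\vec{m}}_T,S)=\norm{\bar{\vec{m}}_T-\vec{m}_*}$, which is exactly the assertion; otherwise the same computation gives the stated bound with $\norm{\bar{\vec{m}}_T-\vec{m}_*}$ understood as the distance of $\bar{\vec{m}}_T$ to the set of $\ell_1$-medians. The step I expect to need the most care is the error bound itself — in particular, checking that $\mu$ is a purely deterministic quantity depending only on the fixed function $\varphi$ (not on $T$ or on the sampled gradients) — together with the bookkeeping around the possible non-uniqueness of the $\ell_1$-median.
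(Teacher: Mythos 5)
Your proof is correct, and it follows the same overall architecture as the paper's: both reduce the statement to a sharpness (linear error bound) property of $\varphi(\vec{m})=\tfrac1n\sum_{i=1}^n\Vert\vec{m}-\bg^{(i)}\Vert_1$, namely $\varphi(\vec{m})-\inf\varphi\geq\mu\,\Dist(\vec{m},\mathrm{argmin}\,\varphi)$ with $\mu=1/n$, and then combine this with the rate of \cref{prop:prox for computing median} after taking expectations. Where you differ is in how the error bound is established. The paper invokes the equivalence between sharp minima and a uniform lower bound on subgradient norms (citing conditioning/error-bound results of Cornejo--Jourani--Z\u{a}linescu and Lemaire), computes $\partial\varphi(\vec{m})=\tfrac1n\sum_i\sgn(\vec{m}-\bg^{(i)})$, and runs a case analysis on sums of sign sets (singletons versus intervals $[-k,k]$, with an integrality argument) to show $\Vert\partial\varphi(\vec{m})\Vert_-\geq 1/n$ off the solution set. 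You instead exploit the separability of the $\ell_1$ objective directly: each coordinate function is one-dimensional piecewise linear with slopes in $\tfrac1n\mathbb{Z}$, so immediately outside its minimizing interval the slope is at least $1/n$ in absolute value, giving the coordinatewise bound, which you then sum and relax via $\Vert\cdot\Vert_1\geq\Vert\cdot\Vert_2$. Your route is more elementary and self-contained (no external variational-analysis theorems, no subdifferential case analysis) and makes the constant $\mu=1/n$ and the product structure of the solution set transparent; the paper's route is less tied to separability, since the subgradient-norm criterion would in principle apply to non-separable polyhedral objectives as well. Your caveat about non-uniqueness of the $\ell_1$-median is apt, but it is not a gap relative to the paper: the paper's own proof likewise only bounds the distance of $\bar{\vec{m}}_T$ to the set of minimizers, so the two arguments are on equal footing there.
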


\begin{proof}
    In this proof, we use the notation $(\vec{x}_j)_{j=1,\ldots,d}$ to specify the indivdual components of a vector $\vec{x}\in \R^d$.
    For $\vec{x} \in \mathbb{R}^d$, we define the sign operator as follows: let $\sgn(\vec{x}) = (\sgn(\vec{x}_j))_{j=1,\ldots,d}$, where $\vec{x}_j$ is the $j$-th coordinate of $\vec{x} \in \R^d$, and where we define for a scalar $t\in \R$ the set-valued operator $\sgn(t) = \{+1\}$ if $t>0$, $\sgn(t) = \{-1\}$ if $t<0$, and $\sgn(t) = [-1,1]$ if $t=0$.
    Further, given a set $A \subset \mathbb{R}^d$ we define $\Vert A \Vert_- := \inf\limits_{ a \in A} \Vert a \Vert$.

    To obtain the desired bound, and given that we already have the conclusion of \cref{prop:prox for computing median}, it is enough for us to show that there exists $\mu >0$ such that 
    \begin{equation*}
        (\forall \m \in \mathbb{R}^d) \quad \mu \Dist(\m, {\rm{argmin}}~\varphi) \leq \varphi(\m) - \inf \varphi.
    \end{equation*} 
    It is a standard result from variational analysis that this property\footnote{It is sometimes referred to as  (weak) sharp minima \citep{Fer91,BurFer93}, (superlinear)(linear) conditioning \citep{Lem92,CorJouZal97,Lem98}, or error bound \citep{LewPan98}.} is true if and only if the following inequality is satisfied (see Theorem 5.1 in \citet{CorJouZal97} or Proposition 3.1 in \citet{Lem98})
    \begin{equation*}
         (\forall \m \in \dom \partial \varphi) \quad \Vert \partial \varphi(\m) \Vert_- \geq \mu.
    \end{equation*}
    Our goal now is to prove the above inequality, from which the conclusion will follow. 
    Let us consider $\m \notin {\rm{argmin}}~\varphi$, and use standard calculus to write 
    \begin{equation*}
        \partial \varphi(\m) = \frac{1}{n} \sum_{i=1}^n \sgn(\m-{\bg}_i)
        =
        \frac{1}{n}
        \sum_{i=1}^n
        \big(\sgn(\m_j-({\bg}_i)_j)\big)_{j=1,\ldots,d}
        =
        \frac{1}{n}
        \Big(\sum_{i=1}^n \sgn(\m_j-({\bg}_i)_j) \Big)_{j=1,\ldots,d}.
    \end{equation*}
    Our goal is to show that $\Vert \partial \varphi(\m) \Vert_- \geq \frac{1}{n}$, which is equivalent to show that $\Vert\sum_{i=1}^n \sgn(\m-{\bg}_i) \Vert_- \geq 1$. 
    Because $\m \notin {\rm{argmin}}~\varphi$, we know that $0 \notin \partial \varphi(\m)$ so there must exist a coordinate $j$ such that the $j$-th coordinate of $\partial \varphi(\m)$ does not contain zero, in other words such that $0 \notin \sum_{i=1}^n \sgn(\m_j-({\bg}_i)_j)$.
    Using the fact that $\Vert \cdot \Vert_2 \geq \Vert \cdot \Vert_\infty$, we can therefore lower bound $\Vert\sum_{i=1}^n \sgn(\m-{\bg}_i) \Vert_- \geq \vert \sum_{i=1}^n \sgn(\m_j-({\bg}_i)_j) \vert_-$.
    Let us denote $s_i := \sgn(\m_j - ({\bg}_i)_j)$, so that we want to prove that $\vert \sum_{i=1}^n s_i \vert_- \geq 1$.
    Let us now consider a few cases.
    \begin{itemize}
        \item If $s_i = \{+1\}$ or $\{-1\}$ for every $i$, then $\sum_{i=1}^n s_i$ is a singleton. Furthermore it is a sum of relative numbers, so $\sum_{i=1}^n s_i \in \mathbb{Z}$. But we also know that $0 \notin \sum_{i=1}^n s_i$, so we conclude that $\sum_{i=1}^n s_i \in \mathbb{Z}^*$, and so that $\vert \sum_{i=1}^n s_i \vert_- \geq 1$.
        \item If $s_i = [-1,+1]$ for every $i$, then we immediately see that $0 \in \sum_{i=1}^n s_i$ which is a contradiction.
        \item Otherwise, the $s_i$ are combinations of singletons and intervals. Let us note $I = \{i : s_i = [-1,+1]\}$ and $I' = \{i : s_i = \{ \pm 1\}\}$, which are not empty by assumption.
        Let us also note $k \geq 1$ the cardinality of $I$, so that $\sum_{i \in I} s_i = [- k , +k]$.
        We also introduce $s := \sum_{i \in I'} s_i \in \mathbb{Z}$.
        Now we can write
        \begin{equation*}
            \vert \sum_{i=1}^n s_i \vert_-
            =
            \inf\limits_{ t \in \sum_{i=1}^n s_i} \vert t \vert 
            =
            \inf\limits_{t \in [-k,+k]} \vert t + s \vert
            =
            d(s;[-k,+k]),
        \end{equation*}
        where the latter is the distance from the number $s$ to the interval $[-k,+k]$.
        Now remember that $0 \notin  \sum_{i=1}^n s_i$, so this distance cannot be zero, which means that $\vert s \vert > k$.
        Moreover both $s$ and $k$ are rational numbers, so  this distance must be greater or equal to $1$. This concludes the proof.
    \end{itemize}
\end{proof}

%%%%%%%%%%%%%%%%%%%%%%%%%%%%%%%%%%%%%%%%%%%%%%
\section{Supplementary Information on Experiments}\label{sec:app-experiments-supp}
This section provides additional information for the experimental setup.

For the language modeling experiments, all details are identical to \citet{kunstner2023noise}, Section A.1. They also provide implementations for all tasks at \url{https://github.com/fKunstner/noise-sgd-adam-sign}, which we use.
For the training runs for language modeling, we use batch size $256$ for \texttt{PTB}, $320$ for \texttt{WikiText-2}, and $32$ for \texttt{SQuAD}.

\subsection{Additional Plots}\label{sec:app-fixed-weight-transformer}

\paragraph{Ablations for \cref{fig:toy-example}}. We provide two ablation studies for the experiment on gradient estimation on heavy-tailed synthetic data. First, \cref{fig:toy-example-ablation} (left) shows that for Gaussian noise ($\alpha=2$), the performance of momentum can surpass \texttt{V/CClip} depending on selection of the step size $\tau$. However, this does not fix the issue of a quickly degrading performance for momentum, when $\alpha$ decreases.

Second, \cref{fig:toy-example-ablation} (right) shows the performance under \emph{skewed} heavy-tailed noise; for this, we set the skewness parameter of the $\alpha$-stable distribution to one.

\begin{figure}[H]
    \centering
    \begin{subfigure}{0.37\columnwidth}  
        \includegraphics[width=0.99\textwidth]{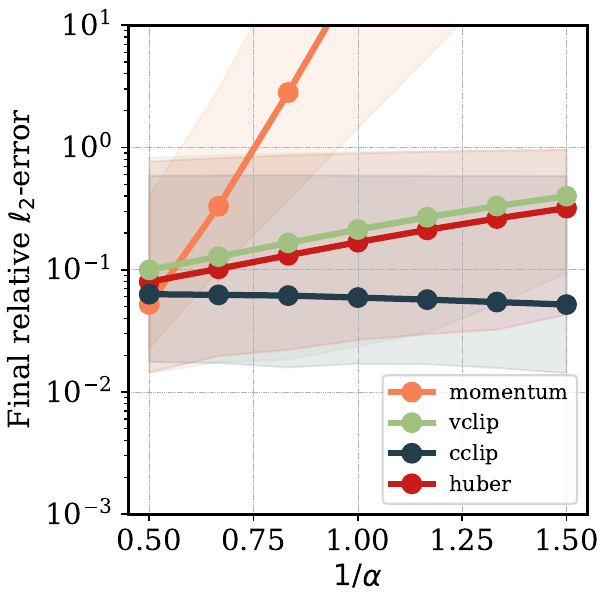}
    \end{subfigure}
    \hspace{2cm}
    \begin{subfigure}{0.37\columnwidth}  
        \includegraphics[width=0.99\textwidth]{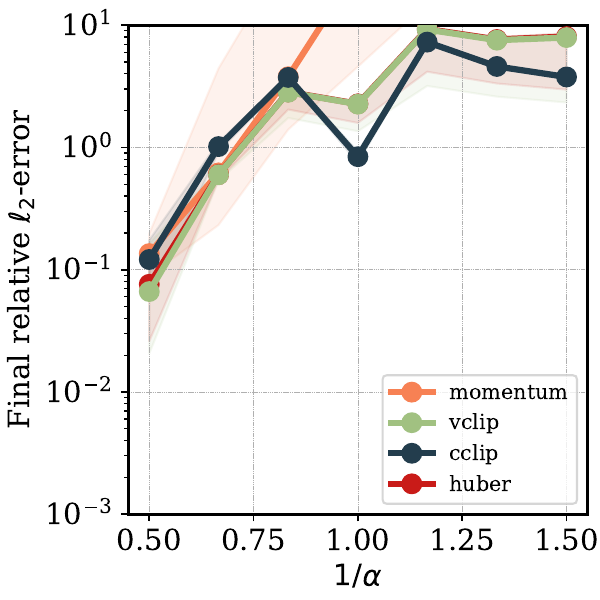}
    \end{subfigure}
    \caption{\textbf{(Left)} Same as \cref{fig:toy-example} but with $\tau=0.001$. Here, momentum performs best for Gaussian data, but still becomes instable when the noise is more heavy-tailed. \textbf{(Right)} Same as \cref{fig:toy-example} but with \emph{skewed} noise. Here, the median of the noise is no longer zero, which explains the failure of \texttt{V/CClip}.}
    \label{fig:toy-example-ablation}
\end{figure}

\begin{figure}[H]
    \centering
    \includegraphics[width=0.99\textwidth]{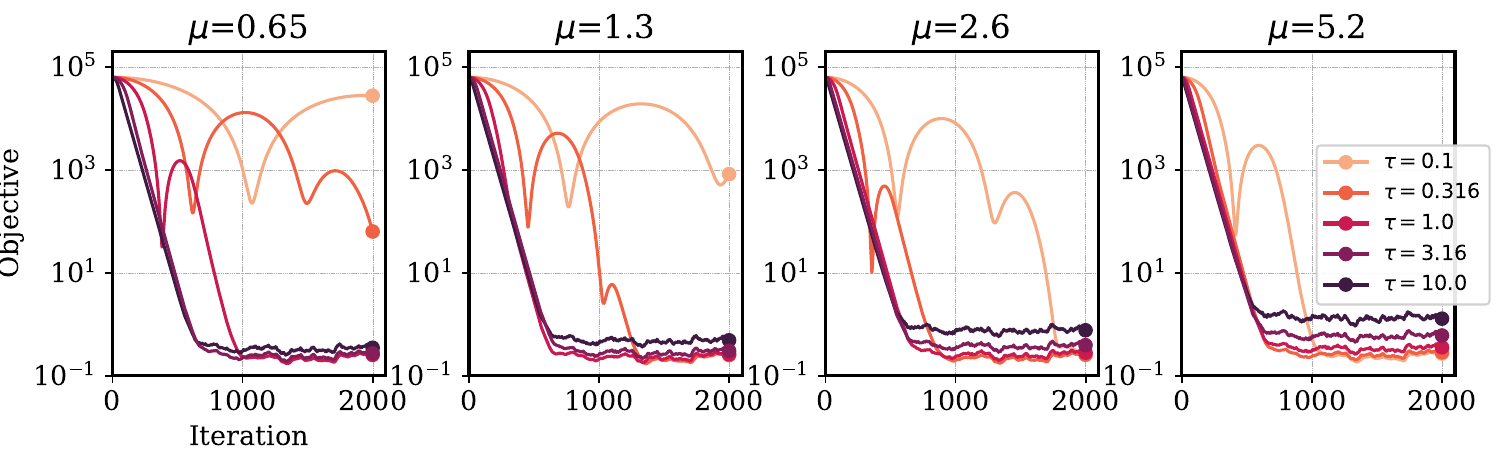}
    \caption{Least-squares problem (same as \ref{item:independent}) for \HC{} with different values of $\mu$ and $\tau$. Setting $\tau=1.0$ performs relatively well across the selected range of values of $\mu$.}
    \label{fig:huber-mu-ablation}
\end{figure}

\paragraph{Full gradient estimation with fixed weights.} Here, we show the results for estimating the full gradient of a transformer architecture, with weights fixed at initialization.
\cref{fig:ptb-fixed-weights} shows the results for \texttt{PTB} where we tried three different batch sizes $\{64,256,1024\}$. \cref{fig:wikitext-fixed-weights} shows the results for \texttt{WikiText-2}.

We remark that in this experiment only, we have used the first order approximation of \eqref{eqn:momentum-as-spp}, namely $\vec{m}_{t+1} =  \left(1-\tau\right) \vec{m}_t + \tau \vec{g}_t$. That is, we replace $\frac{\tau}{1+\tau}$ with its first-order Taylor approximation around zero, which is equal to $\tau$. As the value of $\tau$ is small, this has negligible impact on the result: for example, if $\tau=0.01$, then $\frac{\tau}{1+\tau} \approx 0.0099$.

\emph{Discussion.} From \cref{fig:ptb-fixed-weights}, we observe two phenomena: in the long run, momentum attains the lowest error for estimating the full batch gradient. However, the initial decrease of the error is much faster for \CC{}, followed by \VC{}. This is important when using these estimates 
within a training setup such as \eqref{eq:online-update},
where we only do one iteration of gradient estimation, followed by an update of the weight (and hence a change in the full-batch gradient). Secondly, we observe that the difference in convergence speeds is most pronounced when the batch size increases. Hence, being robust to outliers seems not to be solvable only by increasing the batch size and thus decreasing the noise of the mini-batch gradient. In fact, the contrary seems to be the case. This is similar to the observations made in \citet{kunstner2023noise}.

\begin{figure}[H]
    \centering
    \includegraphics[width=0.99\columnwidth]{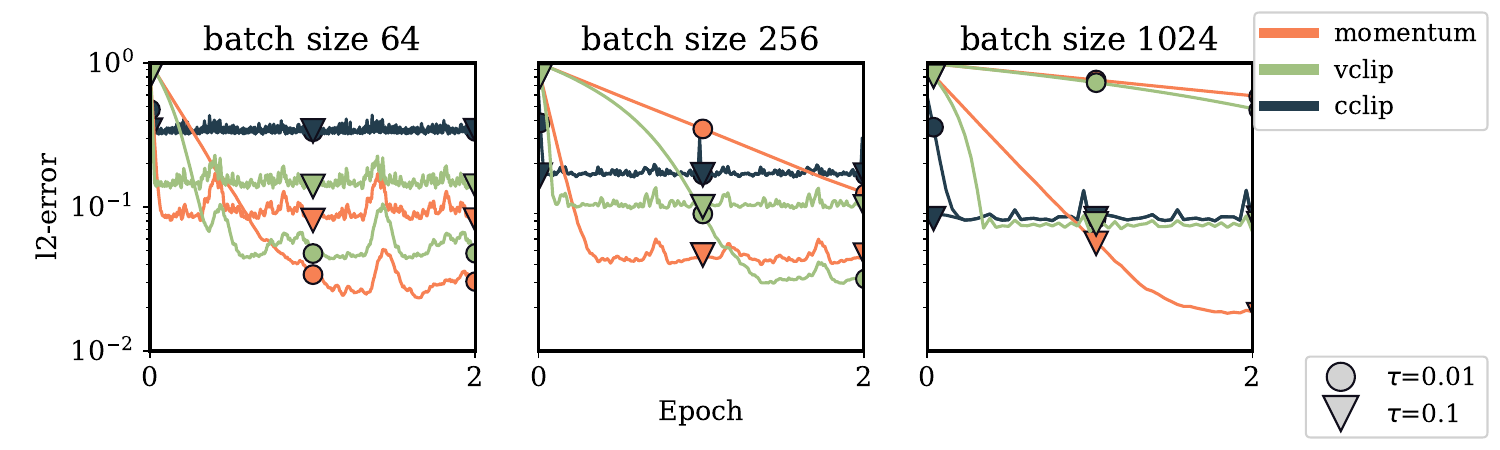}
    \caption{Encoder transformer on \texttt{PTB} dataset, with weights fixed at initialization. We use momentum, $\cclip{\tau}$ and $\vclip{\tau}$ to estimate the full-batch gradient.}
    \label{fig:ptb-fixed-weights}
\end{figure}

\begin{figure}[H]
    \centering
    \includegraphics[width=0.8\textwidth]{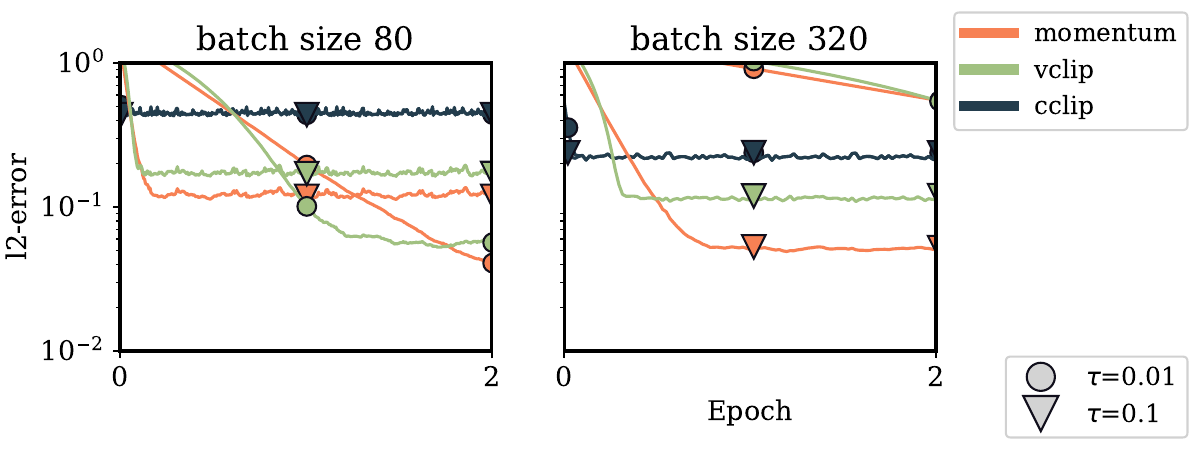}
    \caption{Encoder transformer on \texttt{Wikitext-2} dataset, with weights fixed at initialization. We use momentum, $\cclip{\tau}$ and $\vclip{\tau}$ to estimate the full-batch gradient.}
    \label{fig:wikitext-fixed-weights}
\end{figure}

\subsection{Learning Rate Values for Transformer Training}

Here, we report for each of the language modelling tasks, the learning rate value that is displayed in \cref{fig:transformer-training}.
For each method, we tuned the learning rate over a set of values $10^{\frac{j}{2}}$ for a suitable interval of integer numbers $j \in \mathbb{Z}$.

% changed LR of SGDM and clipped-SGD to adjust for missing dampening in code
\begin{table}[h]
    \caption{Tuned learning rate values used for each method. All methods use constant learning rates. For \SGDM{}, tuning information is also reported in Section C.1 in \citep{kunstner2023noise}).}
    \vspace{0.5cm}
    \centering
    \begin{tabular}{c|ccccc}
         Name&  \SGDM{} & \texttt{clipped-SGD} & \VC{} & \CC{} & \texttt{Adam} \\
         \hline \\
         \texttt{PTB} & 1.0 & 3.16 & 3.16 & 0.316 & 0.001 \\
         \texttt{WikiText-2}& 0.316 & 3.16 & 1.0 & 0.316 & 0.001 \\
         \texttt{SQuAD}& 0.316 & 1.0 & 0.316 & 0.1 & 0.000316\\
    \end{tabular}
    \label{tab:transformer-lr-values}
\end{table}

%%%%%%%%%%%%%%%%%%%%%%%%%%%%%%%%%%%%%%%%%%%%%%%%%%%%%%%%%%%%%%%%%%%%

\end{document}